\documentclass{article}

\usepackage{microtype}
\usepackage{comment}
\usepackage{graphicx}
\usepackage{subcaption}
\usepackage{booktabs} 
\usepackage{enumitem}
\usepackage{float}
\usepackage{hyperref}

\usepackage[accepted]{icml2026}

\usepackage{amsmath}
\usepackage{amssymb}
\usepackage{mathtools}
\usepackage{amsthm}

\newcommand{\Mean}{{\mathbb{E}}}
\newcommand{\Var}{{\mbox{Var}}}

\newcommand{\diag}{{\mbox{diag}}}

\def\1{{\bf 1}}
\def\0{{\bf 0}  }

\def\bLambda{{\boldsymbol \Lambda}}

\def\ATE{\text{ATE}}

\def\bPsi{{\boldsymbol \Psi }}
\def\bpsi{{\boldsymbol \psi }}

\def\W{{\bf W}}
\def\D{{\bf D}}

\def\bV{{\bf V}}

\def\C{{\bf C}}

\def\bU{{\bf U}}

\def\bO{{\bf O}}

\def\ba{{\bf a}}

\def\bbf{{\bf f}}

\def\Y{{\bf Y}}
\def\X{{\bf X}}

\def\bI {{\bf I}}

\def\bZ {{\bf Z}}

\def\be {{\bf e}}

\def\bP {{\bf P}}
\def\bu {{\bf u}}

\def\bbf {{\boldsymbol f}}

\def\bkappa {{\boldsymbol \kappa}}

\DeclareMathOperator*{\argmin}{arg\,min}

\usepackage[capitalize,noabbrev]{cleveref}

\theoremstyle{plain}
\newtheorem{theorem}{Theorem}

\newtheorem{proposition}{Proposition}

\theoremstyle{definition}

\newtheorem{assumption}{Assumption}
\theoremstyle{remark}

\usepackage[disable,textsize=tiny]{todonotes}

\icmltitlerunning{Robust Sequential Experimental Design for A/B Testing}

\begin{document}
\twocolumn[
  \icmltitle{Robust Sequential Experimental Design for A/B Testing}

  \icmlsetsymbol{equal}{*}
  \icmlsetsymbol{corresponding}{$^\dagger$}
  \begin{icmlauthorlist}
    \icmlauthor{Qianglin Wen}{equal,ynu}
    \icmlauthor{Xiangkun Wu}{equal,zju}
    \icmlauthor{Chengchun Shi}{lse}
    \icmlauthor{Ting Li}{sufe}
    \icmlauthor{Niansheng Tang}{ynu}
    \icmlauthor{Yingying Zhang}{corresponding,ecnu}
    \icmlauthor{Hongtu Zhu}{corresponding,unc}
  \end{icmlauthorlist}

  \icmlaffiliation{ynu}{Yunnan Key Laboratory of Statistical Modeling and Data Analysis, Yunnan University, Kunming, China}
  \icmlaffiliation{zju}{School of Mathematical Sciences, Zhejiang University, Hangzhou, China}
  \icmlaffiliation{lse}{Department of Statistics, London School of Economics and Political Science, London, United Kingdom}
  \icmlaffiliation{sufe}{School of Statistics and Data Science, Shanghai University of Finance and Economics, Shanghai, China}
  \icmlaffiliation{ecnu}{School of Statistics, East China Normal University, Shanghai, China}
  \icmlaffiliation{unc}{University of North Carolina at Chapel Hill, Chapel Hill, NC, United States}

  \icmlcorrespondingauthor{Yingying Zhang}{yyzhang@fem.ecnu.edu.cn}
  \icmlcorrespondingauthor{Hongtu Zhu}{htzhu@email.unc.edu}

  \icmlkeywords{A/B Testing, Sequential Experimental Design, Robustness, Dynamic Programming}

  \vskip 0.3in
]

\printAffiliationsAndNotice{\icmlEqualContribution}




\begin{abstract}
Experimental design has emerged as a powerful approach for improving the sample efficiency of A/B testing, yet existing designs rely critically on correctly specified models. We study robust sequential experimental design under model misspecification and develop a unified framework that covers both contextual bandit and dynamic settings. Theoretically, we prove that our design bounds the worst-case mean squared error of the estimated treatment effect. Empirically, we demonstrate the effectiveness of the proposed approach using synthetic and real-world datasets from a leading technology company.
\end{abstract}

\section{Introduction}

A/B testing plays a crucial role in modern technology companies for data-driven decision making and product deployment \citep{johari2017peeking,shi2021online,waudby2024anytime,johari2025estimation}. In its simplest form, it randomly and independently allocates each experimental unit to either the treatment or the control group,  and then compares their mean outcomes to estimate the average treatment effect \citep[ATE,][]{Imbens2015}.

While simple, independent random assignment can be highly inefficient in practice. Although treatment and control groups share the same covariate distributions at the population level, finite-sample randomness often leads to substantial imbalances in realized samples. Consequently, differences in outcomes between treatment and control groups may be driven by such imbalances rather than true treatment effects \citep{masoero2023leveraging}. Such covariate imbalances can commonly occur as A/B testing is often conducted over a limited time horizon, which constrains the effective sample size for learning  \citep{athey2023semi,bojinov2023design,shi2022dynamic}. Moreover, treatment effects are typically small, making them difficult to detect with limited data \citep{Tang2019,farias2022markovian,farias2023correcting,xiong2024data,li2024combining,wang2025two,wu2025pessimistic}. These challenges have motivated a growing literature on sequential experimental designs that maintain covariate balance. Such designs are tailored to sequential settings where experimental units arrive over time, requiring assignments to be made upon arrival and without knowledge of future units. Section \ref{sec:relatedwork} offers a more detailed survey of this literature.  

Despite their widespread use, these sequential designs are inherently \emph{myopic}: they focus on minimizing immediate imbalance measures without accounting for how current assignments influence the state of future covariates \citep{bhat2020near}. While subsequent designs have optimized long-term objectives (see Section \ref{sec:relatedwork}), they require correctly specified linear models, additive treatment effects and certain covariate distributional assumptions. Furthermore, they assume the absence of carryover effects (where past treatments influence future outcomes), despite the prevalence of such effects in sequential settings 
\citep{robins1986new,han2022detecting,viviano2023synthetic,xu2023instrumental,zhang2023conformal,li2024evaluating,shi2023multiagent,shi2024off,chen2025efficient}.

This paper studies \emph{robust sequential designs} for A/B testing in a misspecified environment, where the ATE is estimated using a linear working model while the true data-generating process may contain unknown nonlinear components. Our proposal has three ingredients:

\begin{enumerate}[leftmargin=*]
    \item[(i)] An \textit{orthogonalization} technique that yields a robust upper bound on the mean squared error (MSE) of the ATE estimator without requiring knowledge of the specific nonlinear effects (Section~\ref{subsec:rob_orthongonal});
    \item[(ii)] A \textit{dynamic programming} (DP) algorithm that enables sequential allocation while optimizing a long-term objective, namely the established MSE upper bound (Section~\ref{sec:dp-under-misspecification});
    \item[(iii)] An extension of our methodology to dynamic settings with carryover effects, with a \textit{hierarchical design} that integrates DP and reinforcement learning (RL) to facilitate efficient optimization of the design (Section \ref{sec:mdp-setup}).
\end{enumerate} 
\begin{figure}[h]
	\centering
	\includegraphics[width=1\linewidth]{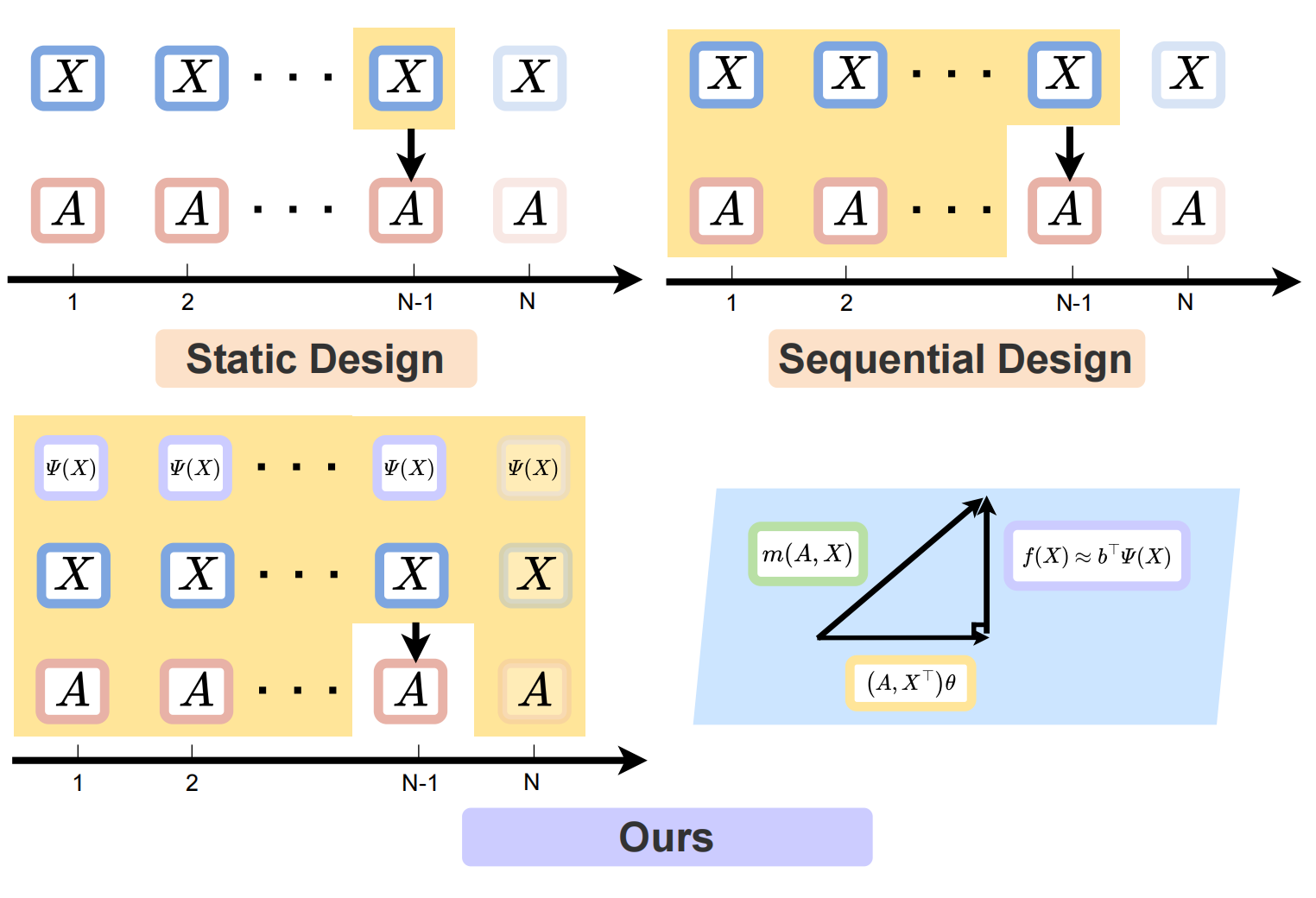}
\caption{
Graphical illustration of treatment allocation strategies under different experimental designs.
\emph{Static designs} are offline and depend only on current observations.
\emph{Sequential designs} condition treatment allocation on the observed history. In contrast, our \emph{robust sequential design} accounts for how current actions affect future covariates, while remaining robust to model misspecification and finite-sample-aware. }\label{fig:work_flow_bandit_eval}
\end{figure} 
As a result, our approach offers three advantages compared to existing designs; see also Figure \ref{fig:work_flow_bandit_eval} for a visualization of differences between our design and existing ones. First, it is \textbf{robust}, as it explicitly accounts for potential model misspecification through orthogonalization. Second, it is \textbf{finite-sample-aware}; rather than relying on large-sample approximations, the assignment rule depends on the entire data history, allowing it to directly mitigate finite-sample randomness in the covariate distributions. Finally, it is \textbf{versatile}, since it provides a unified framework that covers both the contextual bandit setting and dynamic environments with carryover effects. 

\paragraph{Conflict of Interest Disclosure.}
The authors declare no financial conflicts of interest related to this work.

\section{Related Works}\label{sec:relatedwork}

A/B testing has been extensively studied across multiple disciplines, including statistics, machine learning, operations research and management science; we refer readers to \citet{larsen2024statistical} and \citet{quin2023b} for two recent comprehensive reviews. While much of the current research concentrates on estimating the ATE and computing $p$-values for the subsequent hypothesis testing using observed experimental data, this paper shifts the focus to a different and increasingly critical problem: 

\emph{How to design and generate the experimental data itself in order to improve the precision of ATE estimation?}

We review three related strands of literature on experimental design below, including (i) methodologies specifically tailored for A/B testing, (ii) sequential experimental designs and (iii) robust experimental designs. 

\textbf{Experimental design for A/B testing.} Recent studies in management science, machine learning and statistics have increasingly studied A/B testing through the lens of experimental design  \citep{zhou2020cluster,bajari2021multiple,bajari2023experimental,wang2023b,kato2024active,yang2024spatially,zhu2025balancing,gao2026budgeted,masoero2026multiple}. 

A highly relevant branch of this research focuses on time series experiments involving carryover effects. Their setup closely aligns with the sequential setting of our work where policies are repeatedly assigned over time 
\citep{Glynn2020,hu2022switchback,Basse2019MinimaxDF,bojinov2023design,li2023optimal,ni2023design,sun2024arma,xiong2024data,chen2025efficient,jia2025experimentation,ni2025enhancing,wen2025,wu2026designing}. 
Experimental designs proposed in these studies can be broadly categorized into three classes:
(i) \emph{static designs}, which extend classical Neyman allocation to time series experiments by keeping policy assignments constant within days while varying them across days \citep{li2023optimal};
(ii) \emph{switchback designs}, which alternate between the new and old policies according to a predetermined schedule  \citep{bojinov2023design,xiong2024data,wen2025};
(iii) \emph{short-memory designs}, developed under specific time-series models such as ARMA, where assignment policies depend only on a limited recent history \citep{sun2024arma,ni2025enhancing}. A special case is \emph{Markov designs}, in which policy assignments are conditioned solely on the current observation rather than the full history \citep{Glynn2020,hu2022switchback,jia2025experimentation}.

While these designs effectively capture temporal dynamics and carryover effects, they suffer from two limitations. Methodologically, they rely critically on the correct specification of the underlying time series model, such as a Markov decision process  \citep[MDP,][]{Glynn2020, hu2022switchback, li2023optimal, wen2025,jia2025experimentation} or an ARMA process \citep{sun2024arma}. Theoretically, their performance guarantees are mostly asymptotic \citep[see e.g.,][]{li2023optimal,sun2024arma,wu2026designing}, failing to account for the impact of finite-sample randomness. In contrast, our approach is more practically applicable by allowing for model misspecification and strategically allocating treatments using the entire data history to mitigate finite-sample randomness. 

\textbf{Sequential experimental design}. Sequential experimental design studies the \emph{online allocation problem} where subjects arrive sequentially in order and must be assigned to the treatment or control policy in real time.
A classical solution is the \emph{covariate-adaptive biased coin design}, which seeks to balance not only the number of subjects assigned to the treatment and control groups, but also the distribution of observed covariates between them \citep{Pocock1975SequentialTA,Atkinson1982,Smith1984,Ball1993BiasedCD,hu2009efficient,baldi2011covariate,hu2014adaptive,hu2015unified,zhu2020response,zhou2020cluster,ma2024new}.
Such designs typically determine assignment probabilities by minimizing a loss function that quantifies the current degree of covariate imbalance between the two groups. Other works, such as \citet{KapelnerKrieger2013}, propose heuristic approaches based on online matching to further improve the covariate balance. 
However, as discussed in the introduction, these designs are myopic.  

To move beyond myopic designs, recent work has studied sequential experimental design through (approximate) DP \citep{huan2016}. In the context of A/B testing, \citet{bhat2020near} make a pioneering attempt to apply DP to treatment allocation and optimize long-horizon objectives. Specifically, they use the MSE of the ATE estimator as the ultimate objective function, apply Bellman equations to derive intermediate objectives at each time point, obtain tractable solutions to solve these intermediate objectives under certain covariate distributional assumptions, and offer rigorous finite-sample performance guarantees of their design. Their proposal is fundamentally different from traditional covariate-imbalance–based myopic designs.

However, their approach relies on strong modeling assumptions, including a correctly specified linear outcome model with additive treatment effects and elliptical covariate distributions. These assumptions are frequently violated in practice: outcomes often exhibit nonlinear dependence on covariates, treatment effects may vary with covariates, and covariate distributions can be substantially more complex. Moreover, their work assumes the absence of carryover effects. As a result, despite the conceptual appeal of dynamic programming for experimental design, relatively few works have extended this approach beyond such restrictive settings. Our work substantially advances this line of research by relaxing these restrictive modeling assumptions and accommodating carryover effects.

\textbf{Robust experimental design.} Robust experimental design is a classical line of work in statistics, where design points are chosen to hedge against model misspecification \citep{Wiens2000, wiens2000robust,shi2007discrete,wiens2009robust,kong2015model,wiens2024note,wiens2025minimax}. Starting from early minimax formulations \citep{huber1975robustness}, this literature develops designs that minimize the worst-case MSE of estimated parameters under departures from an assumed regression model. For instance, \citet{Wiens2000} propose minimax robust designs for approximately linear models with unknown nonlinear components and heteroscedastic noise, and characterize how optimal designs trade off variance and model misspecification bias. 

These robust design methods are fundamentally \emph{static} and inapplicable to online or sequential assignments.  Moreover, they target covariate design rather than A/B testing or treatment effect estimation.

\section{Robust Sequential Experimental Design in Contextual Bandits}\label{sec:without-inter}
This section presents the proposed design in a contextual bandit setting without carryover effects. To illustrate our main idea, we begin with a setting that assumes additive treatment effects. We first derive a bias-variance decomposition of the ATE estimator's MSE to illustrate the effect of model misspecification (Section \ref{sec:biasvartrade}). We next apply an orthogonalization technique to obtain an upper bound for the MSE of the ATE estimator (Section \ref{subsec:rob_orthongonal}) and introduce a dynamic programming algorithm to optimize this bound (Section \ref{sec:dp-under-misspecification}). Finally, we extend our proposal to accommodate interactive treatment effects (Section \ref{sec:withinteraction}).

\subsection{Bias-Variance Decomposition}\label{sec:biasvartrade}
Consider a sequential experiment in which $N$ experimental units arrive over time. Each unit $i$ is characterized by a covariate vector $X_i\in\mathbb{R}^p$, whose first component corresponds to the intercept. Without loss of generality, we assume its remaining components are mean zero, as they can always be centered otherwise. For each arriving unit, the policy maker must assign a treatment $A_i\in\{\pm1\}$, where $A_i=1$ denotes the new policy and $A_i=-1$ denotes the standard control. Next, an outcome $Y_i$ is observed. In the contextual bandit setting, $\{X_i\}_i$ are independent and identically distributed (i.i.d.). Additionally, $Y_i$ depends solely on the current covariate $X_i$ and assignment $A_i$, being independent of all historical covariates, assignments, and outcomes. Thus, there are no carryover effects in this setting.

We adopt a working linear model, $A\gamma + X^\top\beta$, to approximate the true outcome regression function $m(A, X) = \mathbb{E}(Y | A, X)$. Let $\theta = (\gamma, \beta^\top)^\top \in \mathbb{R}^{p+1}$ denote the model parameter such that the linear model provides the best linear approximation to the regression function, 
\begin{align}\theta = \arg\min_{\alpha \in \mathbb{R}^{p+1}} \sum_{a
\in \{-1,1\}}\mathbb{E} \left[ m(a, X) - (a, X^\top)\alpha \right]^2.\label{eqn:def-theta1}\end{align}
Here, the expectation is taken with respect to the distribution of $X$, while the action is averaged uniformly over $\{-1,1\}$. Importantly, we do not assume that the true regression function is linear. Consequently, the approximation error in \eqref{eqn:def-theta1} may be non-negligible. Under this misspecified setting, $\theta$ remains well-defined as the ``least-false" parameter in the sense of \citet{white1982maximum}. 

To explicitly account for the model misspecification, we denote the approximation error by $f(X)$. The data generating process can then be expressed as: 
\begin{eqnarray}\label{eqn:addivemodel}
Y_i=\gamma A_i+X_i^\top\beta+f(X_i)+e_i,
\end{eqnarray}
where $\{e_i\}_i$ are i.i.d. random errors with mean zero and variance $\sigma^2$. We assume each $e_i$ is independent of both $A_i$ and $X_i$. 

To illustrate our main idea, we focus on this basic setting in this section. We provide additional discussions on more general settings, including treatment-dependent misspecification $f(X,A)$ and heteroskedastic errors in Appendix \ref{sec:appendix_method_extensions}. Our target estimand is the ATE, defined as
\[
\mathrm{ATE}
=
\mathbb{E}
\!\left[
m(1,X)
-
m(-1,X)
\right],
\]
which equals $2\gamma$ under Model \eqref{eqn:addivemodel} where the treatment effect is additive\footnote{We will relax such an additivity assumption in Section \ref{sec:withinteraction}.}. Consequently, estimating the ATE reduces to estimating the treatment effect coefficient $\gamma$.

We use ordinary least squares (OLS) to estimate $\gamma$. Let $\mathbf X \in \mathbb{R}^{N \times p}$ be the design matrix with the $i$th row being $X_i^\top$ and $\mathbf a = (A_1, \dots, A_N)^\top \in \{\pm 1\}^N$ be the treatment allocation vector. A standard calculation in Appendix \ref{subsec:appendix_robust_without} shows that the OLS estimator $\widehat{\gamma}_{\mathbf a}$ admits the following MSE decomposition,
\begin{equation}\label{eqn:condi_mse}
\mathrm{MSE}(\widehat{\gamma}_{\mathbf a})
=
\frac{\sigma^2}{\mathbf a^\top \mathbf{P}_{\mathbf{X}^{\perp}} \mathbf a}
+
\left(
\frac{\mathbf a^\top \mathbf{P}_{\mathbf{X}^{\perp}} f(\mathbf X)}
{\mathbf a^\top \mathbf{P}_{\mathbf{X}^{\perp}} \mathbf a}
\right)^2,
\end{equation}
where $f(\mathbf X)=(f(X_1),\ldots,f(X_N))^\top$ denotes the vector of approximation errors, and 
$\mathbf P_{\mathbf X^\perp}$ denotes the projection onto the orthogonal complement of the column space of $\mathbf X$.

The first term in \eqref{eqn:condi_mse} corresponds to the variance of $\widehat{\gamma}_{\mathbf a}$, whereas the second term is the squared bias arising from model misspecification.
It is natural to consider minimizing \eqref{eqn:condi_mse} to identify the optimal treatment allocation vector $\mathbf a$. However, two challenges arise: (i) The approximation error function $f$ is generally unknown. (ii) Minimizing \eqref{eqn:condi_mse} requires to observe the covariates for all units. In our sequential setting, we must determine $A_i$ before future units arrive. We address these challenges in Sections \ref{subsec:rob_orthongonal} and \ref{sec:dp-under-misspecification} using orthogonalization and dynamic programming, respectively.

To conclude this section, we remark that the MSE in \eqref{eqn:condi_mse} is conditional on the observed covariates.  This conditioning is crucial: it allows the treatment assignment that minimizes \eqref{eqn:condi_mse} to adaptively depend on the covariate history to mitigate finite-sample randomness. In contrast, designs that minimize a population-level MSE (e.g., Neyman allocation) would result in rules that depend only on the current covariate and do not explicitly account for finite-sample randomness.


\subsection{Robust MSE Upper Bounds via Orthogonalization}\label{subsec:rob_orthongonal}
To eliminate the dependence of the MSE on $f$ in \eqref{eqn:condi_mse}, we employ orthogonalization to derive an upper bound that is independent of the specific form of the nonlinear covariate effect. 

We begin by approximating $f$ using a sieve representation
\begin{equation}\label{eqn:sieveapproximation}
    f(X) \approx \Psi^\top(X) \mathbf b,
\end{equation}
for a set of basis functions $\Psi(X)=(\psi_1(X),\ldots,\psi_L(X))^\top$ and a coefficient vector
$\mathbf b\in\mathbb{R}^L$ with the normalization constraint
$\|\mathbf b\|_2 \le  \eta$ for some $\eta>0$. It is worth mentioning that we do not estimate $\mathbf{b}$ directly to evaluate the MSE in \eqref{eqn:condi_mse}, since its estimation would be prone to large errors when $\Psi$ is high-dimensional. Instead, we seek a robust MSE upper bound that holds uniformly over the space of coefficients.

Our key observation from \eqref{eqn:condi_mse} is that the approximation error
$f$ must satisfy the orthogonality condition
$N^{-1}\mathbb{E}[\mathbf{X}^\top f(\mathbf{X})] = 0$.
This condition follows directly from the definition of $\theta$ as the best linear approximation to $m(a,X)$ (see Equation \eqref{eqn:def-theta1}). Invoking the sieve approximation in \eqref{eqn:sieveapproximation}, we obtain that 
\begin{eqnarray}\label{eqn:orthogonalitycond}
    \mathbb{E}\!\left[\frac{1}{N} \mathbf X^\top \bPsi(\mathbf X)  \right]\mathbf b \approx 0,
\end{eqnarray}
where $\bPsi(\mathbf{X}) \in \mathbb{R}^{N \times L}$ denotes the matrix of basis functions whose $i$th row is given by $\Psi^\top(X_i)$. 

{The orthogonality condition in \eqref{eqn:orthogonalitycond} essentially
restricts $\mathbf b$ to lie in the null space of
the matrix $\mathbb{E}[N^{-1}\mathbf X^\top\bPsi(\mathbf X)]$.
Let $\mathbf U$ denote an orthonormal basis for this null space.} This enables us to reparameterize the approximation error function $f$ as:
\begin{eqnarray}\label{eqn:fxrepresentation}
f(\mathbf X)\approx \eta\,\bPsi(\mathbf X)\bU\bkappa,
\end{eqnarray}
for some vector $\bkappa$ such that $\|\bkappa\|\le 1$. 

Finally, plugging \eqref{eqn:fxrepresentation} into the MSE decomposition
in \eqref{eqn:condi_mse} leads to the following worst-case upper bound: 
\begin{eqnarray}\label{eqn:mse-upperbound}
\mathrm{MSE}(\widehat{\gamma}_{\mathbf a})\le \frac{\sigma^2}{\mathbf a^\top \mathbf{P}_{\mathbf{X}^{\perp}} \mathbf a}
+
\eta^2 
\frac{\|\mathbf a^\top \mathbf{P}_{\mathbf{X}^{\perp}} \bPsi(\mathbf{X}) \bU\|_2^2}
{(\mathbf a^\top \mathbf{P}_{\mathbf{X}^{\perp}} \mathbf a)^2}.
\end{eqnarray}
We use the upper bound in \eqref{eqn:mse-upperbound} as the objective function for optimization. Note that the second term depends on $\eta$ -- which quantifies the magnitude of the approximation error -- and the basis functions $\bPsi$, but it remains independent of the unknown coefficient vector $\mathbf{b}$. We relegate the detailed technical derivations to Appendix~\ref{subsec:appendix_robust_without}.

\subsection{Sequential Design via Dynamic Programming}\label{sec:dp-under-misspecification}
In our sequential setting, each treatment assignment must be determined before future covariates are observed. To address this, we employ DP to optimize \eqref{eqn:mse-upperbound}. 

To simplify the calculation, we observe that the MSE upper bound in \eqref{eqn:mse-upperbound} depends on the treatment assignments and covariates only through a few summary statistics. More specifically, we define two imbalance statistics, 
\begin{align}\label{eqn:alg1}
\Delta_i=\sum_{t=1}^{i} a_t X_t\,
~\hbox{and}~\,
\Gamma_i=\sum_{t=1}^{i} a_t \Psi(X_t),
\end{align}
where $\Delta_i$ measures the observed covariate imbalance between the treatment and control groups, and $\Gamma_i$ quantifies the imbalance with respect to the sieve basis functions $\Psi$.  

With some calculations detailed in Appendix~\ref{subsec:appendix_robust_without}, we can approximate the MSE upper bound
in \eqref{eqn:mse-upperbound} using the defined imbalance statistics as follows:
\begin{align}
\underbrace{
\frac{\sigma^2}{
\displaystyle
N-N^{-1}\Delta_N^\top\Sigma^{-1}\Delta_N
}}_{\text{variance}}
+\;
\underbrace{
\frac{\eta^2 \left\|
\bU ^\top (\Gamma_N
-
\Xi
\Sigma^{-1}\Delta_N)
\right\|_2^2}{
\Bigl(
N-N^{-1}\Delta_N^\top\Sigma^{-1}\Delta_N
\Bigr)^2}}_{\text{worst-case bias}},\label{eq:mse-upper}
\end{align}
where $\Sigma$ and $\Xi$ denote certain population-level matrices defined in Proposition \ref{prop_without_rob_upperbound} in Appendix \ref{subsec:appendix_robust_without}.

Crucially, \eqref{eq:mse-upper} depends on the treatment assignments and covariates solely through the summary statistics $\Delta_N$ and $\Gamma_N$. This property enables us to formulate the sequential allocation as a DP, with $(\Delta_i, \Gamma_i)$ being the Markov state for $i = 1, \dots, N$. We summarize the result in the following theorem.

\begin{algorithm}[t]
\caption{Training value function $Q_n$ via Synthetic Rollouts}
\label{alg:sequential_robust1}
\begin{algorithmic}[1]
\STATE \textbf{Input:} A covariate distribution $\mathcal{F}_X$
(or its empirical distribution derived from historical data), a behavior policy $\pi_b$, number of rollouts $B$, value function $Q_{n+1}$.
\STATE Sample $\{X_{n+1}^{(m)}\}_{m=1}^{M}$ from $\mathcal{F}_X$
\FOR{$b=1,\ldots,B$}
    \STATE Sample $\{X_i^{(b)}\}_{i=1}^{n}$ from $\mathcal{F}_X$
    \STATE Sample $\{a_i^{(b)}\}_{i=1}^{n}\in\{\pm 1\}^{n}$ from $\pi_b$
    \STATE Calculate $\Delta_n^{(b)}$, $\Gamma_n^{(b)}$ by \eqref{eqn:alg1}
    \STATE Calculate the target $J_n^{(b)}$ using Monte Carlo
\[\hspace*{-2em}\min_{a\in\{\pm 1\}}\frac{\sum_{m=1}^{M}Q_{n+1}(\Delta_n^{(b)}+aX_{n+1}^{(m)}, \Gamma_n^{(b)}+a\Psi(X_{n+1}^{(m)}))}{M}\]
\ENDFOR
\STATE Train a DNN to compute $Q_n$ using $\{(\Delta_n^{(b)}, \Gamma_n^{(b)})\}_{b=1}^B$ as the predictors and $\{J_n^{(b)}\}_{b=1}^B$ as the targets.
\STATE \textbf{Output:} $Q_{n}$.
\end{algorithmic}
\end{algorithm}

\begin{theorem}[Bellman recursion]\label{thm1:bellman}
Suppose that Assumptions \ref{asmp:nondynamic_error_independent}--\ref{asmp:sieves_approx_nondynamic} hold. Let $Q_N(\Delta_N,\Gamma_N)$ denote the MSE upper bound in \eqref{eq:mse-upper}, which serves as our ultimate objective function. 
Then the optimal sequential design problem can be formulated as a dynamic program with Markov state $(\Delta_i, \Gamma_i)$ and value functions $\{Q_i\}_{i=1}^{N-1}$ defined through the following Bellman equation. At stage $i$, the value function $Q_i(\Delta_i,\Gamma_i)$ is defined as 
\begin{align}\label{eqn:bellmanequation0}
\mathbb E\Big[
\min_{a\in\{\pm1\}}
\, Q_{i+1}\big(
\Delta_i + a X_{i+1}, 
\Gamma_i + a \Psi(X_{i+1})
\big)
\Big],
\end{align}
for $i=N-1,\cdots,1$. Consequently, 
an optimal sequential design can be obtained as follows. At each stage $i$, after
observing $X_i$ and given $(\Delta_{i-1},\Gamma_{i-1})$, choose
\[
a_i^*
\in 
\argmin_{a\in\{\pm1\}}
Q_i\!\left(
\Delta_{i-1}+aX_i,\,
\Gamma_{i-1}+a\Psi(X_i)
\right).
\]
\end{theorem}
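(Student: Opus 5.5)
The plan is to cast the design problem as a finite-horizon Markov decision problem and then apply backward induction. I will first make the problem precise. A sequential design is a sequence of rules $a_i=\pi_i(X_1,\dots,X_i,a_1,\dots,a_{i-1})$, so that each $a_i$ is measurable with respect to the history $\mathcal H_i=\sigma(X_1,\dots,X_i)$ (together with any external randomization). The objective is to minimize $\mathbb E[Q_N(\Delta_N,\Gamma_N)]$ over all such designs. Under Assumptions \ref{asmp:nondynamic_error_independent}--\ref{asmp:sieves_approx_nondynamic}, the covariates $X_i$ are i.i.d., and $Q_N$ depends on the data only through $(\Delta_N,\Gamma_N)$. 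The matrices $\Sigma$, $\Xi$, $\bU$ and the constants $\sigma^2,\eta$ are population quantities, so they do not depend on the realized sample.

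\textbf{Markov structure.} From \eqref{eqn:alg1}, the state evolves by
\[
(\Delta_{i+1},\Gamma_{i+1})=(\Delta_i+a_{i+1}X_{i+1},\,\Gamma_i+a_{i+1}\Psi(X_{i+1})).
\]
Here $X_{i+1}$ is independent of $\mathcal H_i$ and has a fixed distribution. This yields a controlled Markov chain with the following timing:
\begin{itemize}
\item the exogenous shock $X_{i+1}$ is revealed;
\item the action $a_{i+1}$ is then chosen after observing it;
\item the only cost is the terminal cost $Q_N$.
\end{itemize}
The point is that the conditional law of the future, given the whole history, depends only on the current state $(\Delta_i,\Gamma_i)$.

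\textbf{Backward induction.} I will define $Q_i$ recursively by \eqref{eqn:bellmanequation0}. The claim to prove by induction on $i=N,N-1,\dots,1$ is the following: for any design,
\[
\mathbb E[Q_N(\Delta_N,\Gamma_N)\mid\mathcal H_i]\ \ge\ Q_i(\Delta_i,\Gamma_i)\quad\text{a.s.,}
\]
with equality for the greedy rule $a^*_j\in\argmin_a Q_j(\cdot)$, $j>i$.

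\emph{Base case.} For $i=N$ the claim holds trivially.

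\emph{Inductive step.} By the tower property and the induction hypothesis,
\[
\mathbb E[Q_N\mid\mathcal H_i]=\mathbb E\big[\mathbb E[Q_N\mid\mathcal H_{i+1}]\mid\mathcal H_i\big]\ge \mathbb E\big[Q_{i+1}(\Delta_i+a_{i+1}X_{i+1},\Gamma_i+a_{i+1}\Psi(X_{i+1}))\mid\mathcal H_i\big].
\]
Pointwise in $X_{i+1}$, the integrand is at least $\min_{a\in\{\pm1\}}Q_{i+1}(\Delta_i+aX_{i+1},\Gamma_i+a\Psi(X_{i+1}))$. Because $X_{i+1}$ is independent of $\mathcal H_i$ and $(\Delta_i,\Gamma_i)$ is $\mathcal H_i$-measurable, the conditional expectation of this minimum is exactly $Q_i(\Delta_i,\Gamma_i)$; this is the freezing lemma for independent variables. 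Equality holds throughout when $a_{i+1}$ attains the minimum.

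\emph{Conclusion at stage 1.} Taking $i=1$ and then expectations shows that the greedy rule, with $a_1$ chosen by minimizing $Q_1(aX_1,a\Psi(X_1))$ (state $(\Delta_0,\Gamma_0)=(0,0)$), attains $\mathbb E[\min_a Q_1]$. This is a lower bound for every design, so the greedy rule is optimal. The same argument shows that randomized rules offer no improvement.

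\textbf{Main obstacle.} The delicate step is measure-theoretic rather than algebraic. I need $Q_i$ to be well defined and measurable, with finite expectation, at each stage. I also need the argmin selection to be measurable.

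\emph{Measurability.} The minimum over two actions preserves measurability, and a tie-breaking rule (choose $+1$ on ties) gives a measurable selector.

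\emph{Integrability.} I would use Assumption \ref{asmp:sieves_approx_nondynamic} together with moment conditions on $X$ and $\Psi(X)$. The denominator $N-N^{-1}\Delta_N^\top\Sigma^{-1}\Delta_N$ in \eqref{eq:mse-upper} could approach zero, so I would bound $Q_N$ on the relevant event or truncate it. Failing that, I would work with extended-real values, which are bounded below by $0$, so that the monotone and tower arguments still apply.
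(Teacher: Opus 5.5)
Your argument is correct and rests on the same idea as the paper's proof: treat (P1) as a terminal-loss, finite-horizon MDP and do backward induction. You carry it out differently, though. The paper uses the pre-decision state $(\Delta_{i-1},\Gamma_{i-1},X_i)$ and cites Proposition~4.2.1 of Bertsekas as a black box for the existence of an optimal policy and the Bellman equation for $J_i^*$. Only afterwards does it integrate out the next covariate, setting $Q_i(\Delta_i,\Gamma_i)=\mathbb{E}[J_{i+1}^*(\Delta_i,\Gamma_i,X_{i+1})]$, to reach the post-decision recursion \eqref{eqn:bellmanequation0}. You work with the post-decision state $(\Delta_i,\Gamma_i)$ from the outset and prove the verification inequality $\mathbb{E}[Q_N\mid\mathcal H_i]\ge Q_i(\Delta_i,\Gamma_i)$ directly, using the tower property and the freezing lemma. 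Your version is self-contained and shows explicitly that no history-dependent or randomized design can beat the greedy rule. It also isolates exactly where the independence of $X_{i+1}$ from the past, and measurability and integrability, are used. The paper's version is shorter but leaves those regularity conditions implicit.

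One correction to your fallback for the integrability issue: $Q_N$ is not bounded below by $0$. The surrogate uses the population $\Sigma$ rather than $\Sigma_N$, so the denominator $N-N^{-1}\Delta_N^\top\Sigma^{-1}\Delta_N$ no longer equals $\mathbf{a}^\top\mathbf{P}_{\mathbf{X}^{\perp}}\mathbf{a}\ge 0$. For instance, with centered non-intercept covariates and $a_t\equiv 1$, the intercept coordinate alone gives $N^{-1}\Delta_N^\top\Sigma^{-1}\Delta_N\ge N$. The denominator is then $\le 0$ and the variance term is negative. You should therefore rely on truncation or an explicit quasi-integrability assumption on $Q_N$ rather than on nonnegativity; the paper tacitly needs such an assumption as well.
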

The DP formulation is conceptually elegant. However, when the dimension of
$X$ or $\Psi(X)$ is moderately large, exact computation of the value functions
via backward induction becomes infeasible. To address this, we approximate
these value functions using deep neural networks; see Algorithm~1 for
details on how these networks are trained using synthetic rollouts. Meanwhile, since the matrices $\Sigma$, $\Xi$, and $\mathbf{U}$ depend solely on the covariate distribution, they can be estimated offline prior to the experiment by leveraging the rich historical data. We summarize our procedure in Algorithm \ref{alg:sequential_robust1_opt_act}, with implementation details provided in Appendix~\ref{subsec:appendix_algo1}. 

\begin{algorithm}
\caption{Robust Sequential Design}
\label{alg:sequential_robust1_opt_act}
\begin{algorithmic}[1]
\STATE \textbf{Input:} Covariate distribution $\mathcal{F}_X$ (or its empirical distribution from historical data); 
terminal value function $Q_{N}$ of \eqref{eq:mse-upper};  Initialize $\Delta_0=0,\Gamma_0=0$.
\FOR{$n=N-1,\ldots,1$}
\STATE Training value function $Q_n$ by Algorithm \ref{alg:sequential_robust1}.
\ENDFOR
\FOR{$n=1,\ldots,N$}
\STATE Observe $X_n$ from $\mathcal{F}_X$.
\STATE Maintain the historical summaries $(\Delta_{n-1},\Gamma_{n-1})$ from past assignments.
\STATE Calculate the optimal treatment $a_n^*$ by
$$\arg\min\limits_{a\in\{-1,1\}}
        {Q}_n\!\Big(\Delta_{n-1}+aX_n, \Gamma_{n-1}+a\Psi(X_n)\Big).$$
\STATE Update
$\Delta_n=\Delta_{n-1}+a_n^*X_n$ and
$\Gamma_n=\Gamma_{n-1}+a_n^*\Psi(X_n)$ according to Eq~\eqref{eqn:alg1}.
\ENDFOR
\STATE \textbf{Output:} Optimal treatments $\{a_n^*\}_{n=1}^{N}$.
\end{algorithmic}
\end{algorithm}

\subsection{Robust Design with Interactive Treatment Effects}\label{sec:withinteraction}
In this section, we extend the proposed design to settings with treatment-covariate interactions. To accommodate treatment effects that vary across experimental units, we adopt the following working linear model: \begin{equation}\label{eqn:interactivemodel} \mathbb{E}(Y | A, X) \approx A X^\top \xi + X^\top \beta. \end{equation} Compared to the additive model in \eqref{eqn:addivemodel}, the treatment effect in \eqref{eqn:interactivemodel} interacts with the covariates and becomes heterogeneous across experimental units.

Assuming that the underlying model $\mathbb{E}(Y | A, X)$ takes the form $A X^\top \xi + X^\top \beta+f(X)$ for some approximation error $f(X)$, it can be shown that the ATE equals $2 \mathbb{E} (X^\top \xi)$. Consequently, the parameter $\xi$ can be estimated via OLS under the specified working model to construct a plug-in estimator for the ATE. To handle model misspecification, we apply an orthogonalization procedure analogous to that in Section~\ref{subsec:rob_orthongonal} to derive an upper bound on the MSE of the resulting ATE estimator. We further employ a DP algorithm, similar to the one in Section~\ref{sec:dp-under-misspecification}, for online allocation. To maintain conciseness, the explicit MSE upper bound and the associated Bellman recursion are relegated to Proposition~\ref{prop:interaction-robust-mse} and Theorem~\ref{thm_2} in Appendix~\ref{subsec:appendix_robust_with}.

\section{Extensions to Dynamic Settings}\label{sec:mdp-setup}
This section generalizes our proposal developed in Section~\ref{sec:without-inter} to dynamic settings. In many practical applications, treatments are repeatedly assigned throughout a given day, resulting in temporally dependent data where prior assignments exert carryover effects on future outcomes. 

To formalize this problem, we consider an experiment conducted over $N$ days, with each day partitioned into $T$ non-overlapping time intervals. At the beginning of the $t$th interval on day $i$, the policy maker observes a covariate vector $X_{it} \in \mathbb{R}^p$ and assigns a binary treatment $A_{it} \in \{\pm 1\}$. The corresponding outcome, $Y_{it}$, is then observed at the end of the interval. 

We assume that data are independent across days and that the outcome depends exclusively on the current treatment-covariate pair
-- an assumption commonly imposed in RL \citep{sutton2018reinforcement}. While the outcome is conditionally independent of the history, the evolution of the covariates depends on the past treatments on that day. This differs from the contextual bandit setting in Section \ref{sec:without-inter}, enabling us to capture complex carryover effects while maintaining a simple outcome regression model for treatment effect estimation. 

To illustrate our main idea, we begin by assuming a linear outcome model with time-varying coefficients and interactive treatment effects \citep{luo2022policy}: 
\begin{equation}\label{eqn:dynamicmodel}
\mathbb{E}(Y_{it}\mid A_{it},X_{it})
=
A_{it}X_{it}^\top\xi_t + X_{it}^\top\beta_t,
\end{equation}
where $\theta_t=(\xi_t^\top,\beta_t^\top)^\top\in\mathbb{R}^{2p}$ denotes the vector of time-varying coefficients. 
In this dynamic setting, our target ATE is defined as
 \begin{align}
\ATE
&:= \frac{1}{T}\Big[
\mathbb{E}^{(1)}\!\Big(\sum_{t=1}^{T}Y_t\Big)
-
\mathbb{E}^{(-1)}\!\Big(\sum_{t=1}^{T}Y_t\Big)
\Big],\nonumber
\end{align}
where $\mathbb{E}^{(1)}$ (resp.\ $\mathbb{E}^{(-1)}$) denotes the expectation under the counterfactual world in which action $+1$ (resp.\ $-1$) is
applied at all times. 

Under the linear model assumption in \eqref{eqn:dynamicmodel}, we estimate $\{\theta_t\}_t$ via OLS and construct a plug-in estimator for the ATE. Unlike the contextual bandit setting, however, the MSE of this estimator is substantially more complex, since its variance arises not only from estimating the regression coefficients, but also from learning how covariates evolve as a function of past actions. The latter source of variability is extremely difficult to characterize analytically.

To obtain a tractable objective, we approximate the MSE using a surrogate that ignores the latter source of variation and focuses solely on the estimation error induced by the OLS estimators. We further allow the outcome model \eqref{eqn:dynamicmodel} to be misspecified and apply the orthogonalization procedure to upper bound this surrogate MSE, leading to the following objective function, 
\begin{align}\label{eq:multi_stage_mse_ub}
&~~~~\frac{1}{T}\sum_{t=1}^T
\Big[\,
 \frac{\bu_t^\top G_t^{-1}\bu_t}{T} +
\nu_t^2\,
\bigl\|{\bU}_t^\top H_t G_t^{-1}\bu_t\bigr\|_2^2\Big],
\end{align}
where  
$ 
\bu_t
=
(
\mathbb{E}^{(1)}X_t^\top+\mathbb{E}^{(-1)}X_t^\top,\ 
\mathbb{E}^{(1)}X_t^\top-\mathbb{E}^{(-1)}X_t^\top
)^\top
$ characterizes how covariates evolve over time, matrices $G_t$ and $H_t$ depend on the treatment–covariate history and enable the optimization of \eqref{eq:multi_stage_mse_ub} to incorporate all past observations, \(\nu_t^2=\frac{\eta_t^2}{\sigma_t^2} > 0\) is a time-specific tuning parameter and \(\bU_t\) denotes the orthonormal basis, defined as in Section \ref{sec:without-inter}. To save space, we defer all technical derivations to Appendix~\ref{subsec:appendix_HDRL_multi}.

Similar to Section~\ref{sec:without-inter}, we can apply DP to minimize \eqref{eq:multi_stage_mse_ub} to derive an optimal sequential design. However, DP in dynamic settings is computationally infeasible: there are $N\times T$ decision stages in total, and backward induction must be performed over all $N\times T$ stages. For instance, in ride-sharing applications, experiments often last for two weeks, with each day divided into 30-minute or 1-hour intervals, resulting in approximately 300 -- 600 decision stages \citep{Xu2018,shi2022dynamic}.

To address this challenge, we propose a hierarchical design that explicitly exploits the inherent structure of the experiment. At a high level, our global optimization problem is decomposed into two subproblems:
\begin{itemize}[leftmargin=*]
\item[\textbf{(i)}] A macro-level objective that applies DP to perform backward induction over the 
$N$-day horizon, yielding a day-level objective function for each day.
\item[\textbf{(ii)}] A micro-level objective that leverages RL to determine optimal treatment decisions across the $T$ intervals within each day by maximizing the negative day-level loss, equivalently minimizing the day-level objective.
\end{itemize} 
We elaborate on the two sub-problems below. 

\textbf{Across-day DP}. We treat each day as a single decision stage with a joint action space of assigning $T$ treatments within that day. Because covariates and outcomes are independent across days, this macro-level problem reduces to the bandit problem studied in Section~\ref{sec:without-inter}, with $N$ stages. As a result, DP can be efficiently applied to compute the day-level objective functions via backward induction.

More specifically, similar to \eqref{eqn:alg1}, we introduce the following across-day summary statistics for
each time $t=1,\dots,T$:
\begin{align*}
B_{it} &:= \sum_{j=1}^{i} X_{jt}X_{jt}^\top , \quad
C_{it} := \sum_{j=1}^{i} A_{jt}X_{jt}X_{jt}^\top ,\\
D_{it} &:= \sum_{j=1}^{i} \Psi(X_{jt})X_{jt}^\top , \quad
F_{it} := \sum_{j=1}^{i} A_{jt}\Psi(X_{jt})X_{jt}^\top .
\end{align*}
These statistics yield our day-level Markov state 
\[
\mathcal S_i :=
\{(B_{it},C_{it},D_{it},F_{it})\}_{t=1}^T,
\]
which serves as the sufficient information for the day-level optimal policy. Let $\pi \in \Pi$ denote a day-level policy within the policy class $\Pi$ that determines the treatment sequence for an entire day, the following theorem formally proves this result.  
\begin{theorem}[Across-day Bellman recursion]
\label{thm:bellman_across_day}
Let $\mathcal Q_N(\mathcal S_N)$ denote the MSE upper bound in
\eqref{eq:multi_stage_mse_ub} which serves as our ultimate objective function. The optimal design problem
\[
\inf_{\pi_1,\dots,\pi_N\in\Pi}\ \mathbb E\big[\mathcal Q_N(\mathcal S_N)\big]
\]
admits a dynamic programming formulation with Markov state $\mathcal S_i$.
Specifically, define value functions $\{\mathcal Q_i\}_{i=0}^{N-1}$ by
\begin{equation}\label{eqn:acrossdayobj}
\mathcal Q_{i-1}(\mathcal S_{i-1})
=
\mathbb E\Big[ \inf_{\pi\in\Pi}
\mathcal Q_{i}\big(\mathcal S_{i})|\mathcal S_{i-1}\big)
\Big],
\end{equation}
for $i=N,\dots,1$.
Moreover, each optimal policy $\pi_i^*$ can be computed by solving the above optimization. 
\end{theorem}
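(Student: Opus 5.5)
The plan is to follow the standard backward-induction argument for finite-horizon stochastic control, the same one that gives Theorem~\ref{thm1:bellman}, with each day treated as a single decision stage.

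\textbf{Step 1: $\mathcal Q_N$ is a function of $\mathcal S_N$.} First I will check that the terminal objective \eqref{eq:multi_stage_mse_ub} depends on the data only through $\mathcal S_N$. For each $t$, the OLS Gram matrix $G_t$ of the regressors $(A_{jt}X_{jt}^\top, X_{jt}^\top)$ over $j\le N$ is built from the blocks $\sum_j A_{jt}^2X_{jt}X_{jt}^\top=B_{Nt}$ and $\sum_j A_{jt}X_{jt}X_{jt}^\top=C_{Nt}$. Likewise the cross matrix $H_t$ between $\Psi(X_{jt})$ and the regressors is built from $D_{Nt}$ and $F_{Nt}$. The quantities $\bu_t$, $\bU_t$ and $\nu_t$ are population or fixed objects.

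\textbf{Step 2: $\mathcal S_i$ is a controlled Markov process at the day level.} Next I will show that $\mathcal S_i=\mathcal S_{i-1}+\Phi(\{X_{it},A_{it}\}_{t=1}^T)$, where $\Phi$ collects the day-$i$ increments $X_{it}X_{it}^\top$, $A_{it}X_{it}X_{it}^\top$, $\Psi(X_{it})X_{it}^\top$ and $A_{it}\Psi(X_{it})X_{it}^\top$. Days are independent. Within day $i$, the law of the trajectory depends only on the day policy $\pi_i$, through the covariate transitions driven by $A_{it}$, and not on earlier days. Hence the conditional law of $\mathcal S_i$ given the full past and $\pi_i$ depends only on $(\mathcal S_{i-1},\pi_i)$. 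Day-level policies may be allowed to depend on $\mathcal S_{i-1}$ or on the whole history; I will argue that history dependence brings no gain, which is the usual Markov-policy sufficiency.

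\textbf{Step 3: backward induction.} Define $\mathcal Q_{i-1}(s)=\inf_{\pi\in\Pi}\mathbb E[\mathcal Q_i(\mathcal S_i)\mid \mathcal S_{i-1}=s,\pi]$. Here I read the displayed \eqref{eqn:acrossdayobj} as having the infimum over the day-$i$ policy outside the conditional expectation over day-$i$ randomness; within the day, $\pi$ adapts to the observed $X_{it}$. I will prove by induction on $k=N,N-1,\dots$ that for any history-dependent policies,
\[
\mathbb E[\mathcal Q_N(\mathcal S_N)\mid \mathcal H_{k}]\ge \mathcal Q_{k}(\mathcal S_{k}),
\]
with equality when the minimizing $\pi_j^*$ are used. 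The inductive step uses the tower property, the Markov property from Step 2, and the definition of the infimum. Taking $k=0$ gives $\inf_{\pi_1,\ldots,\pi_N}\mathbb E\,\mathcal Q_N(\mathcal S_N)=\mathcal Q_0(\mathcal S_0)$, and shows that the $\pi_i^*$ solving each stage problem are optimal.

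\textbf{Main obstacle: measurability and attainment.} $\Pi$ is a general policy class, so the infimum need not be attained, and $\mathcal Q_i$ need not be measurable. I would first assume, or show, that $\mathcal Q_N$ is bounded below (it is nonnegative) and measurable, for example on the event that $G_t$ is invertible. Then I would either restrict to the case where minimizers exist, such as a finite action sequence space $\{\pm1\}^T$ with deterministic Markov within-day rules, where the minimum over finitely many maps is attained pointwise, or use $\epsilon$-optimal selections with a measurable selection theorem. The inequality direction of the induction holds without attainment, and $\epsilon$-optimal policies give the reverse inequality up to $N\epsilon$.
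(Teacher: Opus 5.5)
Your proposal follows the same route as the paper. Each day is one stage of a finite-horizon MDP with state $\mathcal S_i$, action a within-day policy $\pi\in\Pi$, additive transition $\mathcal S_i=\mathcal S_{i-1}\oplus\Phi$ and terminal loss $\mathcal Q_N(\mathcal S_N)$, and finite-horizon Bellman optimality is then applied; the paper simply cites Theorem~1.9 of Agarwal et al., whereas you prove the backward induction directly.

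Your reading $\mathcal Q_{i-1}(s)=\inf_{\pi\in\Pi}\mathbb E[\mathcal Q_i(\mathcal S_i)\mid\mathcal S_{i-1}=s,\pi]$ is the right one, and it is what the within-day RL step actually solves. The paper's proof instead places $\inf_\pi$ inside the expectation over the whole day's trajectory $X_{i+1,1:T}$, which taken literally lets the policy anticipate endogenous within-day covariates. The paper also asserts that the infimum is attained without the measurability or $\epsilon$-optimality argument you supply.
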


Theorem~\ref{thm:bellman_across_day} motivates a backward induction procedure to compute the value functions $\{\mathcal Q_i\}_i$ for $i=N,N-1,\cdots,1$, which serve as our day-level objectives. However, an exact computation of these functions is computationally intractable due to the high-dimensional state space. Consequently, we employ Monte Carlo rollouts to approximate the Q function based on \eqref{eq:multi_stage_mse_ub}. Refer to Algorithms \ref{alg:DP_rl_last_N} and \ref{alg:DP_rl_loop_n} in Appendix \ref{subsec:appendix_algo_multi} for details.


\textbf{Within-day RL}. Given an approximated value function $\mathcal Q_i$ computed via backward induction, solving the right-hand side of \eqref{eqn:acrossdayobj} can be readily formulated as a finite-horizon MDP. This motivates our application of RL to efficiently learn the optimal policy $\pi_i^*$. To be more specific, we define the state-action-reward triplets in this MDP below and prove this assertion in Theorem \ref{prop:within_day_bellman}. 
\begin{itemize}[leftmargin=*]
    \item \emph{State}: The state at each time $t$ is set to
    $\mathcal F_{t}=\{\mathcal S_{i-1},\mathcal H_{i,t}\}$, where
    $\mathcal S_{i-1}$ corresponds to the Markov state at the $(i-1)$th day and $\mathcal H_{i,t}$ corresponds to the covariate-treatment pairs collected up to time $t$. 
    
    \item \emph{Action}: An action $A_{it}\in\{\pm1\}$ is then 
    generated according to a within-day policy
    $A_{it}\sim\pi_{i,t}(\cdot\mid\mathcal F_t)$.
    
    \item \emph{Reward}: The day-level objective $\mathcal Q_{i}\big(\mathcal S_{i})$ can be approximated via Monte Carlo rollouts. Owing to the additive structure of \eqref{eq:multi_stage_mse_ub}, these approximations decompose into per-step rewards $R_t$ (see Appendix \ref{subsec:appendix_algo_multi} for details).  

\end{itemize}

\begin{theorem}[Within-day Bellman recursion]
\label{prop:within_day_bellman}
The right-hand side of \eqref{eqn:acrossdayobj} forms a finite-horizon MDP with state-action-reward triplets defined above. 
\end{theorem}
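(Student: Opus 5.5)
The plan is to fix day $i$ and the previous day-level state $\mathcal S_{i-1}$, and to show that the inner problem $\inf_{\pi\in\Pi}\mathbb E[\mathcal Q_i(\mathcal S_i)\mid\mathcal S_{i-1}]$ is a $T$-step finite-horizon control problem. This requires three things: (a) the state $\mathcal F_t=\{\mathcal S_{i-1},\mathcal H_{i,t}\}$ evolves as a controlled Markov chain; (b) the objective is an expected sum of per-step rewards that are functions of $(\mathcal F_t, A_{it}, \mathcal F_{t+1})$; (c) the admissible within-day policies are exactly the history-dependent maps $\pi_{i,t}(\cdot\mid\mathcal F_t)$. Given (a)--(c), the standard finite-horizon Bellman recursion
\[
V_t(\mathcal F_t)=\min_{a\in\{\pm1\}}\mathbb E\big[R_t+V_{t+1}(\mathcal F_{t+1})\mid \mathcal F_t,A_{it}=a\big],\qquad V_{T+1}\equiv 0,
\]
yields both the optimal value and an optimal Markov policy in $\mathcal F_t$.

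\textbf{Step 1 (Markov property).} I would note that $\mathcal H_{i,t+1}=\mathcal H_{i,t}\cup\{(X_{i,t+1},A_{i,t+1})\}$. The conditional law of $X_{i,t+1}$ given the full past depends only on the same-day history $\mathcal H_{i,t}$, since days are independent; it does not depend on $\mathcal S_{i-1}$ beyond that. Hence $\mathcal F_{t+1}$ is a measurable function of $(\mathcal F_t, A_{it}, \text{new randomness})$ whose law depends only on $(\mathcal F_t, A_{it})$. Because $\mathcal F_t$ contains the entire within-day history, no information is lost by restricting to policies that depend on $\mathcal F_t$. In particular, the outcomes $Y_{it}$ play no role in the objective \eqref{eq:multi_stage_mse_ub}, which depends on the data only through the summary statistics.

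\textbf{Step 2 (additive reward decomposition).} Here $\mathcal S_i$ is obtained from $\mathcal S_{i-1}$ by adding, for each $t$, the increments $(X_{it}X_{it}^\top, A_{it}X_{it}X_{it}^\top, \Psi(X_{it})X_{it}^\top, A_{it}\Psi(X_{it})X_{it}^\top)$. These increments involve only the time-$t$ pair $(X_{it},A_{it})$. Using the form of \eqref{eq:multi_stage_mse_ub} as $\frac1T\sum_t g_t(\cdot)$, where the $t$-th summand depends on $\mathcal S_i$ only through the time-$t$ block $(B_{it},C_{it},D_{it},F_{it})$ (and on $\bu_t$, which is fixed offline), I would write
\[
\mathcal Q_i(\mathcal S_i)=\sum_{t=1}^T R_t,\qquad R_t:=\tfrac1T\,\tilde g_t\big(B_{i-1,t}+X_{it}X_{it}^\top,\dots,F_{i-1,t}+A_{it}\Psi(X_{it})X_{it}^\top\big),
\]
so that $R_t$ is a deterministic function of $(\mathcal F_t,A_{it})$, given that $X_{it}\in\mathcal H_{i,t}$ is observed before acting. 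This is the delicate point. For the terminal day $i=N$ the separability is explicit in \eqref{eq:multi_stage_mse_ub}. For $i<N$, $\mathcal Q_i$ is defined recursively through \eqref{eqn:acrossdayobj} and need not be separable across $t$. I would handle this in one of two ways: either invoke the paper's Monte Carlo approximation of $\mathcal Q_i$ (Appendix~\ref{subsec:appendix_algo_multi}), which inherits the per-$t$ additive form; or, as a fallback valid in general, place the entire cost as a terminal reward $R_T=\mathcal Q_i(\mathcal S_i)$ with $R_t=0$ for $t<T$. The fallback is legitimate because $\mathcal S_i$ is a measurable function of $\mathcal F_{T}$ and $A_{iT}$.

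\textbf{Step 3 (conclude).} With (a)--(c) established, the tower property gives $\mathbb E[\mathcal Q_i(\mathcal S_i)\mid\mathcal S_{i-1}]=\mathbb E^\pi[\sum_t R_t\mid\mathcal F_0]$ for every within-day policy. A backward induction over $t=T,\dots,1$ then shows that $V_1$ equals the infimum on the right of \eqref{eqn:acrossdayobj}, attained by the greedy policy with respect to $V_{t}$. The action space $\{\pm1\}$ is finite, so minima exist and measurability of the selector is trivial.

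The main obstacle is Step 2: justifying the per-step reward decomposition for intermediate days, where $\mathcal Q_i$ is not literally additive. I would present the terminal-reward version as the rigorous statement and the additive version as valid under the approximation used in the algorithm.
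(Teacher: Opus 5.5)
Your argument is correct and follows the same skeleton as the paper. The paper omits the proof and points to the across-day argument: set up the $T$-step MDP with state $\mathcal F_t=\{\mathcal S_{i-1},\mathcal H_{i,t}\}$, note that the transition law depends only on $(\mathcal F_t,A_{it})$, and invoke the finite-horizon Bellman optimality principle. You read the right-hand side of \eqref{eqn:acrossdayobj} as the non-anticipative $\inf_{\pi\in\Pi}\mathbb E[\mathcal Q_i(\mathcal S_i)\mid\mathcal S_{i-1}]$. That is the right reading: with the infimum inside the expectation, as literally written, the policy could anticipate the day's randomness.

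Where you genuinely depart is the reward. The paper simply asserts that the Monte Carlo approximation of $\mathcal Q_i$ splits into per-step rewards by the additivity of \eqref{eq:multi_stage_mse_ub}. You instead check that this split is exact on day $N$. There, $G_t=N^{-1}\begin{pmatrix}B_{Nt}&C_{Nt}\\ C_{Nt}&B_{Nt}\end{pmatrix}$ and $H_t=N^{-1}(F_{Nt},D_{Nt})$ depend only on the time-$t$ block, so the $t$-th summand is a function of $(\mathcal F_t,A_{Nt})$. For $i<N$ you place the whole cost $\mathcal Q_i(\mathcal S_i)$ at stage $T$. That fallback is what makes the recursion rigorous; the price is that it is not the dense per-step reward that the statement and the A2C implementation use.

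Drop the side claim that the Monte Carlo per-step rewards are functions of $(\mathcal F_t,A_{it})$. $R_{i,t}$ in \eqref{eq:dprl_reward_nday} contains future-day time-$t$ data generated by $\pi^*_{i+1},\dots,\pi^*_N$. These policies act on $\mathcal S_i$, and so, in general, on day-$i$ actions taken after $t$. Hence the conditional mean of $R_{i,t}$ given $(\mathcal F_t,A_{it})$ can depend on the later within-day policy, and the per-step Bellman recursion need not hold. Keep the terminal-reward version as the theorem.

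Finally, besides $\bu_t$ you must also hold $\bU_t$ fixed, as the algorithms do. The appendix defines $\bU_t$ through the policy-dependent $\C_t^\pi$, and without fixing it $\mathcal Q_i$ is not a function of the state alone.
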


As a result, the optimal within-day policy $\pi_i^*=\{\pi_{i,t}^*\}_t$ can be computed via RL. In our implementation, we adopt an actor-critic algorithm and relegate all implementation details to Appendix~\ref{subsec:appendix_algo_multi}. 

Our global objective function \eqref{eq:multi_stage_mse_ub} is minimized through a backward procedure over days: once $\pi_i^*$ is determined, we approximate $Q_{i-1}$ using Monte Carlo rollouts and optimize the RL sub-problem for day $i-1$ to compute $\pi_{i-1}^*$. This recursive process continues until the complete sequence of optimal policies $\{\pi_i^*\}_{i=1}^N$ has been obtained.


\section{Numerical Experiments}\label{sec:main_exps}
We evaluate the finite-sample performance of our method through a series of numerical experiments. We use synthetic data from both contextual bandit (Section \ref{subsec:single_stage_exps}) and dynamic (Section \ref{subsec:multi_sim_exps}) settings, along with a real-world dataset (Section \ref{subsec:single_stage_real_data_exps}). Our code is available at \href{https://github.com/QianglinSIMON/Robust-Sequential-Experimental-Design-for-AB-Testing}{RSD-for-AB-Testing}.

\subsection{Contextual Bandits}\label{subsec:single_stage_exps}

\noindent\textbf{DGP.}
We consider two data generating processes (DGPs): one with additive treatment effects and the other with interactive treatment effects. Both DGPs are nonlinear so that the linear working model is misspecified. 
Covariates are drawn from a zero-mean bivariate normal distribution,
and the experiment duration is set to $N\in\{21,28,35,42\}$ days. For further details regarding the experimental setup, refer to Appendix~\ref{sec:appendix_additional_experments1}.

\begin{figure*}[t]
	\centering
	\vspace{-1mm} 
	\begin{minipage}{0.45\linewidth}
		\centering
		\includegraphics[width=1\linewidth,height=0.35\linewidth]{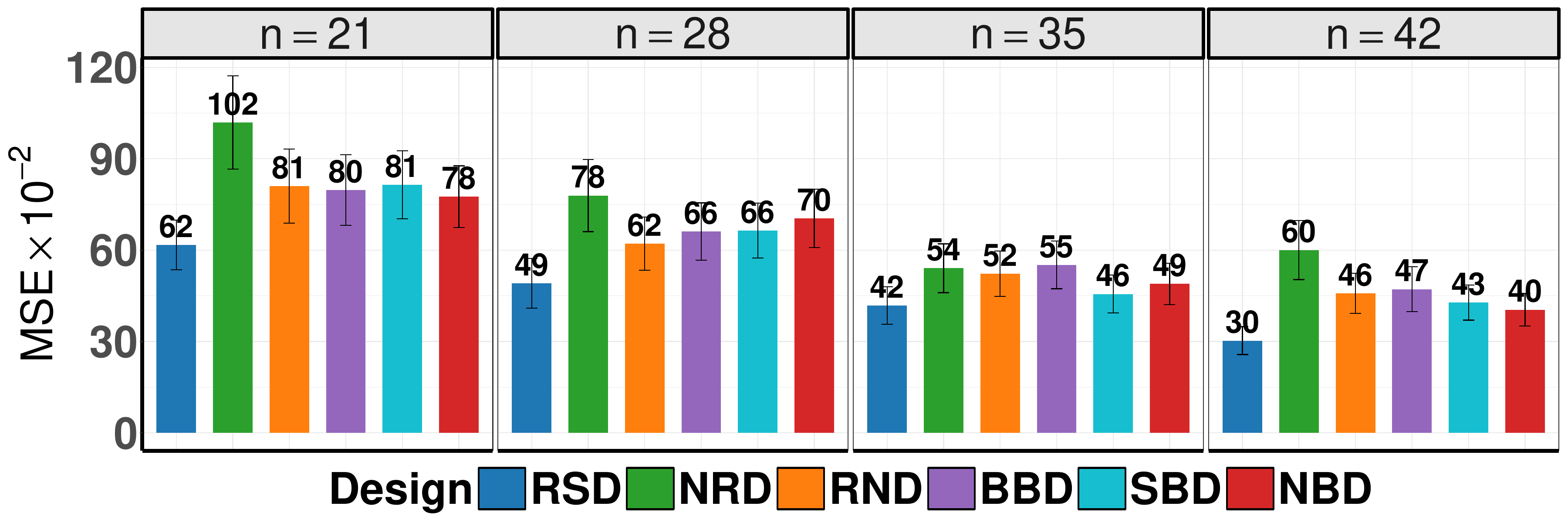}
	\end{minipage}
	\begin{minipage}{0.45\linewidth}
		\centering
		\includegraphics[width=1\linewidth,height=0.35\linewidth]{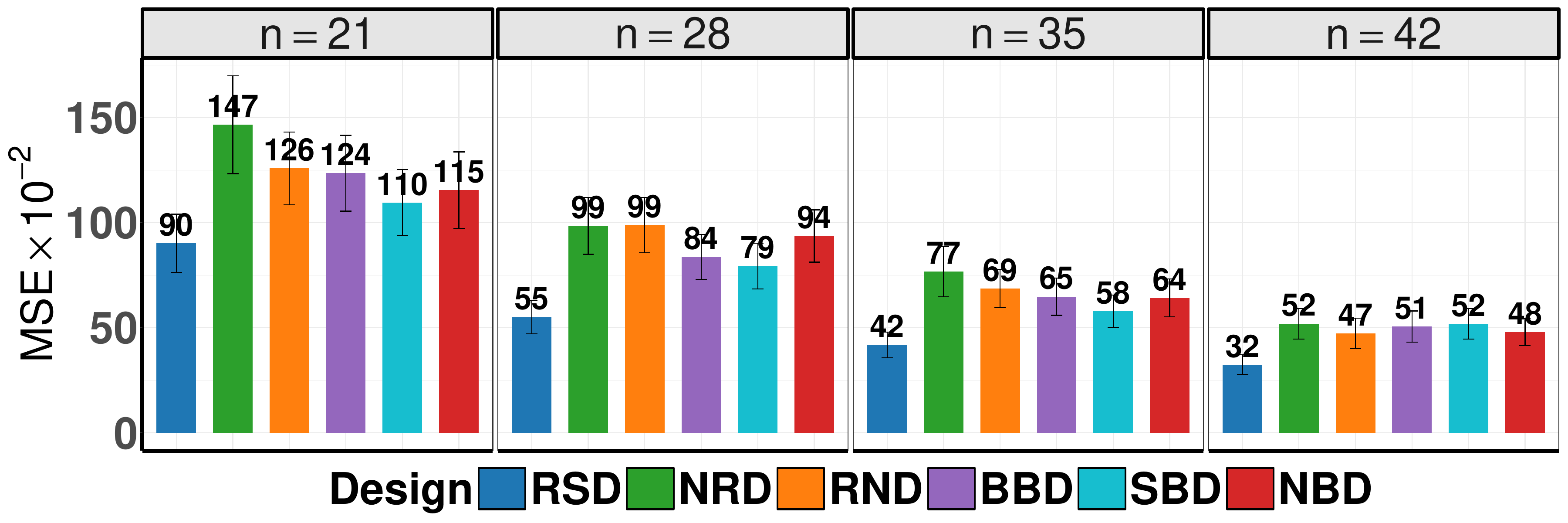}
	\end{minipage}
	\begin{minipage}{0.45\linewidth}
		\centering		\includegraphics[width=1\linewidth,height=0.35\linewidth]{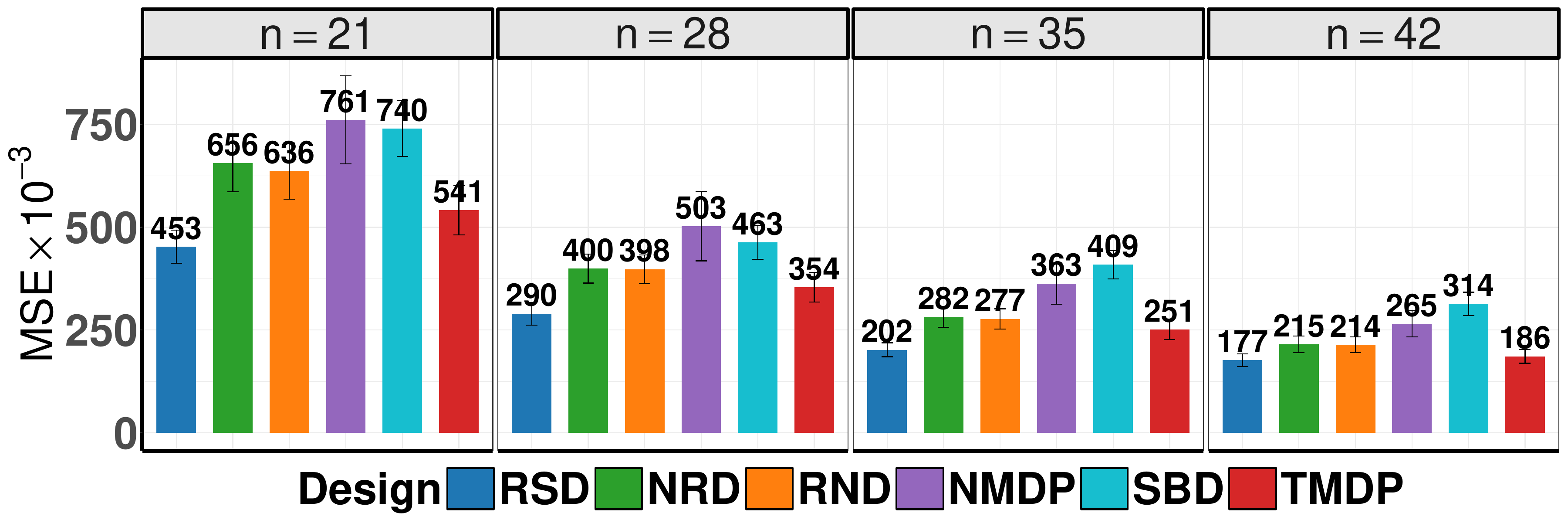}
	\end{minipage}
	\begin{minipage}{0.45\linewidth}
		\centering
		\includegraphics[width=1\linewidth,height=0.35\linewidth]{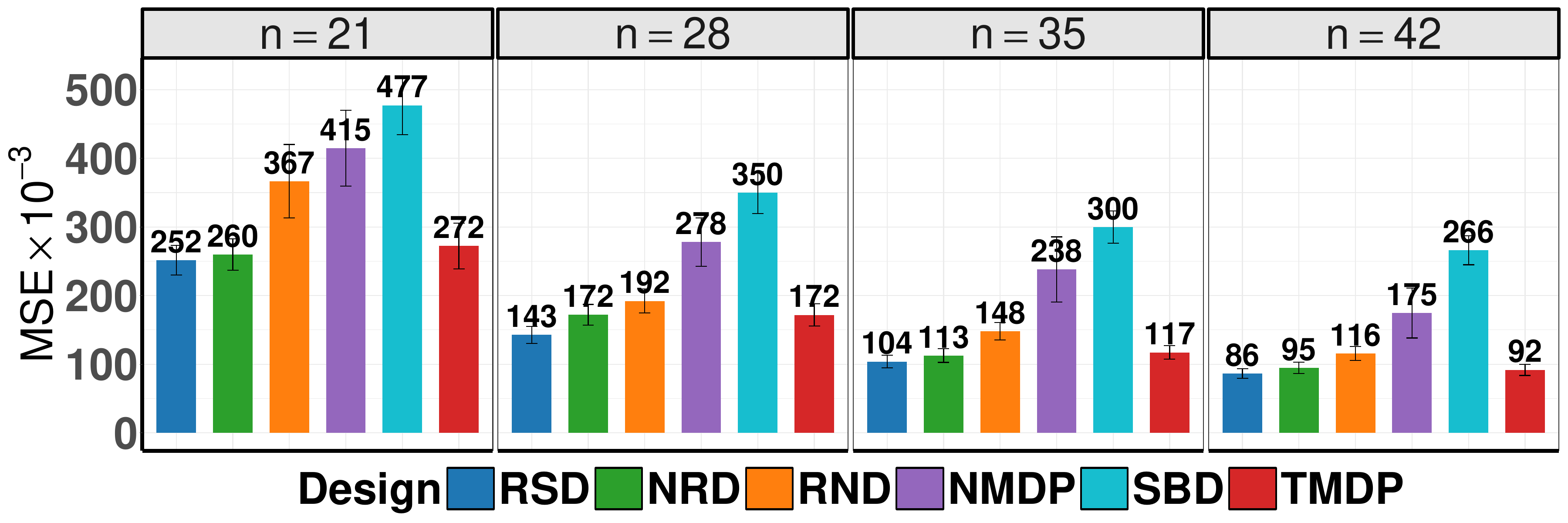}
	\end{minipage}
	\caption{Empirical MSE (95\% CI): under the contextual bandits with additive treatment effects (top left), with interactive treatment effects (top right); under the dynamic settings with large bias:  with $T=6$ (bottom left), with $T=12$ (bottom right).
    }\label{fig_sim_exp1_2}
\end{figure*}

\noindent\textbf{Baseline algorithms.}
We compare our proposed robust sequential design (denoted by \emph{RSD}) against several baseline approaches:
(i) A non-robust variant (\emph{NRD}) that optimizes only the variance component
of the MSE objective. Under additive treatment effects, this variant reduces to the proposal by \citet{bhat2020near}.
(ii) A random design (\emph{RND}) that assigns treatment independently at random each day;
(iii) A switchback design (\emph{SBD}), which alternates treatment assignments
(e.g., $+1,-1,+1,-1,\ldots$, with a random starting sign);
(iv) A Neyman balanced design (\emph{NBD}), implemented via a biased-coin procedure
that sequentially targets the Neyman-optimal allocation. In our homoskedastic setting, this targets a balanced allocation \citep{cai2024performance}.
(v) A Bayesian biased-coin design (\emph{BBD}),
which adaptively adjusts assignment probabilities based on accumulated imbalance \citep{Ball1993BiasedCD}.



\noindent\textbf{Results.} { The upper panels of Figure~\ref{fig_sim_exp1_2} (top row) present the average empirical MSE of the ATE estimators under all designs,} aggregated over 400 simulation replications under both additive and interactive treatment effects. We make the following observations: (i) The proposed \textbf{RSD} consistently achieves the lowest MSE, which decays as the experiment duration $N$ increases, confirming its large-sample consistency. Furthermore, its advantage is more pronounced in small-sample settings, highlighting its finite-sample-awareness. (ii)  { \textbf{NRD} does not consistently perform well.} While it accounts for finite-sample randomness and utilizes DP for long-term optimization, it fails to guard against model misspecification. (iii) Finally, \textbf{RND}, \textbf{SBD}, \textbf{NBD}, and \textbf{BBD} suffer from significantly large MSEs, as they are either not finite-sample aware or rely on sequential but myopic strategies that do not explicitly account for future outcomes.

\subsection{Dynamic Settings}\label{subsec:multi_sim_exps}

\noindent\textbf{DGP.} We consider a dynamic setting where each day is modeled as an MDP episode with $T\in\{6,12\}$ time intervals. At each time $t$, the time-varying covariates are Gaussian, while the response follows a nonlinear model to induce misspecification. We vary the sample size $N\in\{21,28,35,42\}$ and consider three levels of nonlinearity in the response model, corresponding to three degrees of model misspecification. For details of the experimental setup, see Appendix~\ref{sec:appendix_additional_experments3}.

\noindent\textbf{Baseline algorithms.}
We compare the proposed \emph{RSD} against: (i) a non-robust variant \emph{NRD} adapted to the dynamic setting;
(ii) \emph{SBD} that is widely used in practice \citep{bojinov2023design};
(iii) \emph{TMDP} and \emph{NMDP} proposed by \citet{li2023optimal} that extend Neyman allocation to the dynamic setting;
(iv) \emph{RND}.

\noindent\textbf{Results.}
{The lower panels of Figure~\ref{fig_sim_exp1_2} (bottom row) and Figure~\ref{fig_sim_dynamic_bias1_2} in Appendix~\ref{sec:appendix_additional_experments3} present results across different time horizons and degrees of nonlinearity, averaged over 1000 simulation replications. Overall, the proposed design remains the most competitive across all settings.}

\subsection{Real-Data-Based Simulation}\label{subsec:single_stage_real_data_exps}
\textbf{DGP}. We use data from a leading ride-sharing company to evaluate different experimental designs. The covariates include (i) the daily number of order requests and (ii) drivers’ total online time at the beginning of each day, which proxy market demand and supply. The treatment corresponds to order-dispatch policies, and the outcome is the average daily driver income. Since the underlying data-generating process is unknown, we follow standard practice (e.g., \citet{wen2025}) by constructing a simulation environment calibrated to the real data and using it for evaluation. Details are provided in Appendix~\ref{sec:appendix_additional_experments2}.

\noindent\textbf{Results.} We compare the proposed method with the same baselines as in Section~\ref{subsec:single_stage_exps}. Figure~\ref{fig_real_exps_contextual} summarizes results over 400 simulation replications. The trends closely mirror those in Section~\ref{subsec:single_stage_exps}: our design consistently outperforms all baselines. Since this evaluation is grounded in real-world data, the gains further underscore the practical effectiveness of our approach.

\begin{figure}
	\centering
	\vspace{-1mm} 
		\centering		\includegraphics[width=1\linewidth,height=0.35\linewidth]{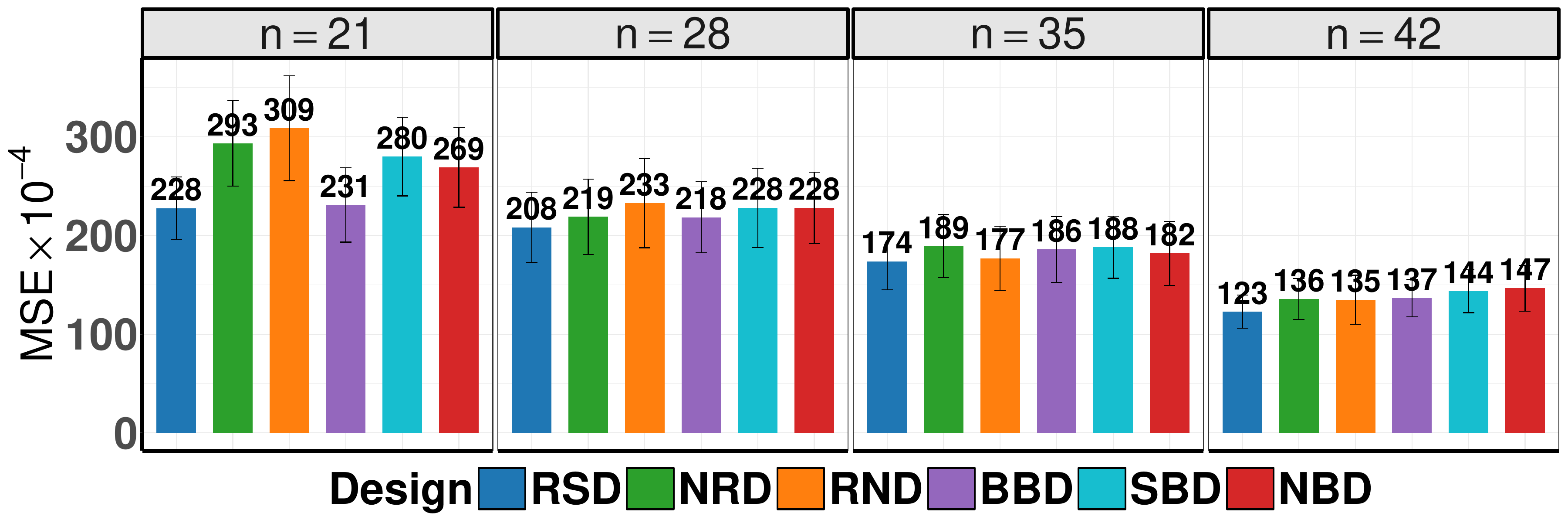}		\includegraphics[width=1\linewidth,height=0.35\linewidth]{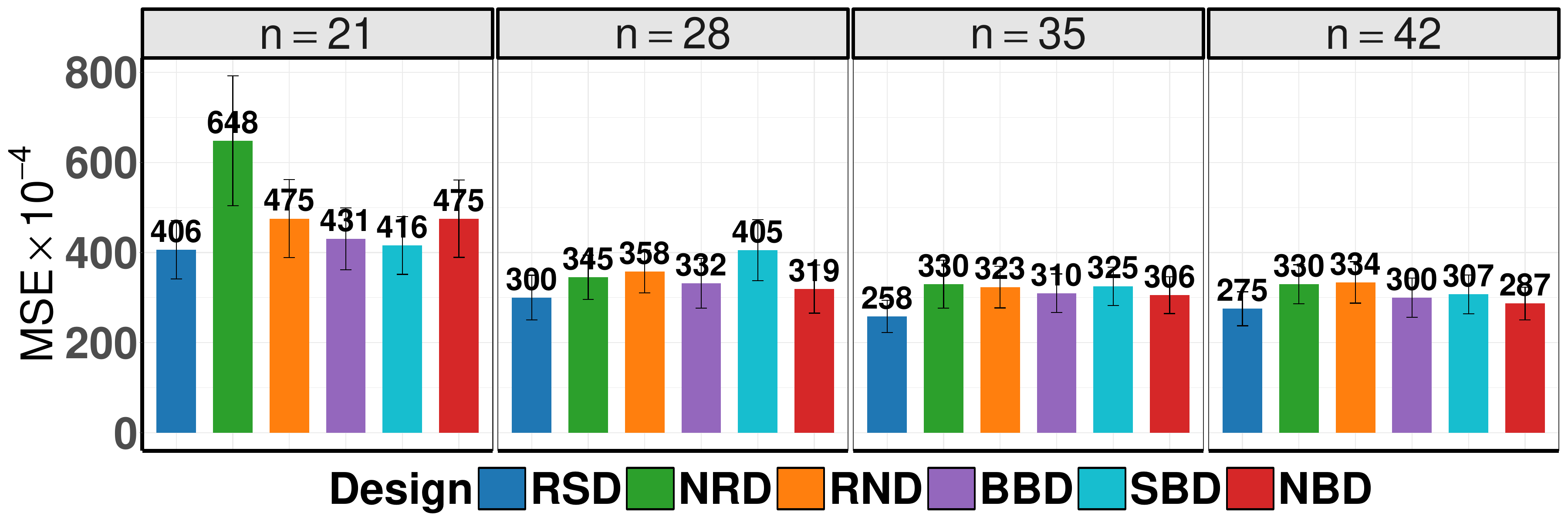}
	
\caption{Empirical MSE (95\% CI) based on the real-data-based simulation:   with additive treatment effects (top), with interactive treatment
effects (bottom). 
    }
    \label{fig_real_exps_contextual}
\end{figure}

\section{Conclusion}\label{sec:conclusion}
{We propose a robust sequential experimental design framework for A/B testing that covers both contextual bandits and dynamic settings with carryover effects. Our approach (i) achieves robustness to model misspecification via orthogonalization, reducing design to minimizing a tractable upper bound on the MSE of ATE; (ii) mitigates finite-sample covariate imbalance while optimizing long-horizon objectives via dynamic programming; and (iii) handles carryover dynamics through a nested scheme that combines across-day DP with within-day RL. }

\section*{Acknowledgements}

We thank the anonymous referees and the meta-reviewer for their constructive comments, which have significantly improved this manuscript. Qianglin Wen’s research was supported by the National Key R\&D Program of China (Grant No. 2022YFA1003701). Xiangkun Wu’s research was supported by the National Key Research and Development Program of China (Grant No. 2024YFC2511003).
Yingying Zhang’s research was supported by the National Natural Science Foundation of China (Grant No. 12471280).
Ting Li’s research was supported by the National Natural Science Foundation of China (Grant No. 12571304), the Shanghai Pujiang Program (Grant No. 24PJIC030), the CCF–DiDi GAIA Collaborative Research Funds, and the Program for Innovative Research Team of Shanghai University of Finance and Economics.

\section*{Impact Statement}
This paper presents a robust sequential experimental design algorithm for A/B testing. Compared with standard randomization and other treatment allocation methods, our approach improves the reliability and accuracy of treatment-effect estimation under model misspecification. As a methodological contribution intended for settings where A/B testing is already routine, we do not expect the proposed approach itself to introduce additional negative societal impacts beyond those associated with A/B testing in general. Nevertheless, A/B testing can affect users, workers, and market participants when deployed in real-world platforms, and practitioners should ensure that experiments comply with applicable privacy, fairness, transparency, and safety requirements.

\bibliography{references}
\bibliographystyle{icml2026}

\newpage
\appendix
\onecolumn




\section{Technical Details}\label{sec:appendix_technicals}
\setcounter{assumption}{0}
\setcounter{theorem}{0}
\setcounter{proposition}{0}
\setcounter{equation}{0}

\renewcommand{\theassumption}{A.\arabic{assumption}}
\renewcommand{\thetheorem}{A.\arabic{theorem}}
\renewcommand{\theproposition}{A.\arabic{proposition}}
\renewcommand{\theequation}{A.\arabic{equation}}

In this section, we provide the technical details for the results in Sections~\ref{sec:without-inter} and~\ref{sec:mdp-setup}.
Specifically, Sections~\ref{subsec:appendix_robust_without} and~\ref{subsec:appendix_robust_with} contain the derivations and proofs for the robust
design in the contextual bandits without and with treatment-covariate interactions,
respectively, while Section~\ref{subsec:appendix_HDRL_multi} presents the technical details for the hierarchical
design in the dynamic setting.

\subsection{Robust Design in Contextual Bandits without Interactions}\label{subsec:appendix_robust_without} 
In this subsection, we first derive the explicit conditional MSE and its upper bound, and then show that the resulting surrogate objective admits a dynamic programming
solution. We begin by summarizing the required assumptions.

\begin{assumption}\label{asmp:nondynamic_error_independent}(Noise independence)
The noise variables $\{e_i\}_{i=1}^N$ are i.i.d. with
$\mathbb{E}e_i=0$ and $\operatorname{Var}(e_i)=\sigma^2$.
\end{assumption}

\begin{assumption}\label{asmp:XX_inv_exist}(Non-singular covariance matrix)
The covariance matrix $\mathbb{E}(\X \X^\top)$ is positive definite.
\end{assumption}

\begin{assumption}\label{asmp:sieves_approx_nondynamic}(H\"older smoothness \& bounded sieves)
The nonlinear component $f(\cdot)$ belongs to a H\"older class $\Lambda(d,c)$ (defined at the end of this subsection), which admits a uniformly bounded sieve approximation. Moreover, $f(\cdot)$ is uniformly bounded in the sense that
$\sup_x |f(x)| \le \eta_f^2$ 
for some constant $\eta_f > 0$.
\end{assumption}

First, Assumption~\ref{asmp:nondynamic_error_independent} is mild and standard in experimental design for linear models \citep{bhat2020near,lopez2023optimal}.
Second, Assumption~\ref{asmp:XX_inv_exist} rules out perfect multicollinearity and ensures that the linear coefficients are identifiable from the data. 
Finally, we assume that the nonlinear component $f(\cdot)\in\Lambda(d,c)$, which is a common smoothness condition in nonparametric regression \citep{gyorfi2002distribution,Chen76_sieves}.
Moreover, we control the magnitude of the nonlinear component and restrict the sieve space to be uniformly bounded.
These conditions imply that the sieve coefficients $\mathbf{b}$ are bounded.
As a result, we can obtain an explicit upper bound on the MSE \citep{wiens2000robust,wiens2009robust,kong2015model}.


\noindent\textbf{Derivation of conditional MSE.}
For a fixed treatment assignment policy $\ba \in \{\pm1\}^N$, the OLS estimator
admits the closed-form expression
\[
\widehat{\theta}_{\ba}
=
(\bZ^\top \bZ)^{-1}\bZ^\top \Y,
\]
where $\bZ=[\ba,\X]\in\mathbb{R}^{N\times(p+1)}$ and
$\Y=(Y_1,\ldots,Y_N)^\top\in\mathbb{R}^N$.
Let $\bu=(1,0,\ldots,0)^\top\in\mathbb{R}^{p+1}$ denote the selector vector corresponding to
the treatment effect, so that $\widehat{\gamma}_{\ba}=\bu^\top\widehat{\theta}_{\ba}$. 
Conditioning on $(\ba,\X)$, the MSE of $\widehat{\gamma}_{\ba}$ can be written as
\begin{equation}\label{eq:MSE_without}
\begin{aligned}
\mathrm{MSE}(\widehat{\gamma}_{\ba})
&=
\bu^\top
\Bigl[
(\bZ^\top\bZ)^{-1}\bZ^\top
\mathbb{E}(\be\be^\top)
\bZ(\bZ^\top\bZ)^{-1}
+
(\bZ^\top\bZ)^{-1}\bZ^\top
(\bbf\bbf^\top)
\bZ(\bZ^\top\bZ)^{-1}
\Bigr]
\bu \\
&=
\underbrace{\sigma^2\,\bu^\top(\bZ^\top\bZ)^{-1}\bu}_{I_1}
+
\underbrace{
\bu^\top(\bZ^\top\bZ)^{-1}
\bZ^\top(\bbf\bbf^\top)\bZ
(\bZ^\top\bZ)^{-1}\bu
}_{I_2},
\end{aligned}
\end{equation}
where $\bbf:=f(\X)=(f(X_1),\ldots,f(X_N))^\top\in\mathbb{R}^N$.
To further analyze these terms, note that
\[
\bZ^\top\bZ
=
\begin{pmatrix}
\ba^\top\ba & \ba^\top\X \\
\X^\top\ba & \X^\top\X
\end{pmatrix}
=
\begin{pmatrix}
N & \Delta_{N}^\top \\
\Delta_{N} & \X^\top\X
\end{pmatrix},
\]
where $\Delta_{N}:=\X^\top\ba=\sum_{t=1}^Na_tX_t\in\mathbb{R}^p$, which quantifies the imbalance in treatment assignments and covariates, as illustrated in \eqref{eqn:alg1}. By the block matrix inversion formula, we obtain
\[
(\bZ^\top\bZ)^{-1}
=
\begin{pmatrix}
\alpha
&
-\alpha{\Delta_N^\top (\X^\top\X)^{-1}}
\\[0.6em]
-\alpha{(\X^\top\X)^{-1}\Delta_N}
&
(\X^\top\X)^{-1}
+
\alpha{(\X^\top\X)^{-1}\Delta_N\Delta_N^\top(\X^\top\X)^{-1}}
\end{pmatrix},
\]
where $\alpha=1/{\ba^\top\bP_{\X^\perp}\ba}$ and $\bP_{\X^\perp}=\mathbf I-\X(\X^\top\X)^{-1}\X^\top$.
Substituting this expression into \eqref{eq:MSE_without} yields
\eqref{eqn:condi_mse}.

\noindent\textbf{Derivation of an upper bound for the MSE.}
We first derive an explicit upper bound for the conditional MSE in \eqref{eqn:condi_mse}.
To facilitate optimization, we then construct a surrogate version of this upper bound by replacing certain quantities with their population counterparts.
\begin{proposition}\label{prop_without_rob_upperbound}
Suppose that $\theta$ is defined as the solution to \eqref{eqn:def-theta1},
and that Assumptions
\ref{asmp:nondynamic_error_independent}--\ref{asmp:sieves_approx_nondynamic}
hold.
Then, for any fixed treatment assignment policy $\ba$, the finite-sample MSE of $\widehat{\gamma}_{\ba}$ admits the following upper bound, for some constant $\eta>0$:
\begin{align}
\mathrm{MSE}(\widehat{\gamma}_{\ba}) \leq 
\underbrace{
\frac{\sigma^2}{
\displaystyle
N-N^{-1}\Delta_N^\top\Sigma_{N}^{-1}\Delta_N
}}_{\text{variance}}
+\;
\underbrace{
\frac{\eta^2 \left\|
\bU ^\top (\Gamma_N
-
\Xi_N
\Sigma_N^{-1}\Delta_N)
\right\|_2^2}{
\Bigl(
N-N^{-1}\Delta_N^\top\Sigma_N^{-1}\Delta_N
\Bigr)^2}}_{\text{worst-case bias}}. 
\end{align}
Here,
\begin{equation*}\label{eq:Sigma_Xi}
    \Sigma_N
=\frac{1}{N}\sum_{i=1}^{N} X_iX_i^\top\in\mathbb{R}^{p\times p},
\qquad
\Xi_N
=\frac{1}{N}\sum_{i=1}^{N}\Psi(X_i)X_i^\top 
\in\mathbb{R}^{L\times p},
\end{equation*}
$\bU \in \mathbb{R}^{L \times (L-p)}$ is a matrix whose columns form an orthonormal basis for the
orthogonal complement of the column space of    $~\C=\mathbb{E}\!\left[N^{-1} \bPsi^\top(\mathbf X) \mathbf X \right]$,
and $\Delta_N$, $\Gamma_N$ are the imbalance statistics defined
in \eqref{eqn:alg1}.
\end{proposition}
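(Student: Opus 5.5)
Starting from the conditional MSE decomposition \eqref{eqn:condi_mse}, already derived from \eqref{eq:MSE_without} via block inversion, the plan is to rewrite its two ingredients, $\ba^\top\bP_{\X^\perp}\ba$ and $\ba^\top\bP_{\X^\perp}\bbf$, in terms of $\Delta_N,\Gamma_N,\Sigma_N,\Xi_N$, and then take a supremum over the admissible misspecification. For the denominator, since $\ba^\top\ba=N$ and $\X^\top\X=N\Sigma_N$, we get exactly
\[
\ba^\top\bP_{\X^\perp}\ba = N-\ba^\top\X(\X^\top\X)^{-1}\X^\top\ba = N-N^{-1}\Delta_N^\top\Sigma_N^{-1}\Delta_N .
\]
Assumption~\ref{asmp:XX_inv_exist} is used here, together with a large-$N$ almost-sure argument, to guarantee that $\Sigma_N$ is invertible. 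This identity immediately turns the variance term into the stated form.

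For the bias numerator, Assumption~\ref{asmp:sieves_approx_nondynamic} lets us write $\bbf=\bPsi(\X)\mathbf b$, up to a sieve remainder that we absorb by enlarging $\eta$; the H\"older class with bounded sieves gives $\|\mathbf b\|_2\le\eta$. Then
\[
\ba^\top\bP_{\X^\perp}\bPsi(\X)\mathbf b=\bigl(\ba^\top\bPsi(\X)-\ba^\top\X(\X^\top\X)^{-1}\X^\top\bPsi(\X)\bigr)\mathbf b=(\Gamma_N-\Xi_N\Sigma_N^{-1}\Delta_N)^\top\mathbf b,
\]
using $\bPsi(\X)^\top\ba=\Gamma_N$ and $\bPsi(\X)^\top\X=N\Xi_N$. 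Next comes the orthogonality step. By the first-order conditions of \eqref{eqn:def-theta1}, $\mathbb{E}[X f(X)]=0$, so $\C^\top\mathbf b=0$ in the sieve approximation. Hence $\mathbf b$ lies in the orthogonal complement of the column space of $\C$, which gives $\mathbf b=\bU\bkappa$ with $\|\bkappa\|_2=\|\mathbf b\|_2\le\eta$ because $\bU$ has orthonormal columns. Applying Cauchy--Schwarz,
\[
\bigl|(\Gamma_N-\Xi_N\Sigma_N^{-1}\Delta_N)^\top\bU\bkappa\bigr|^2\le\eta^2\bigl\|\bU^\top(\Gamma_N-\Xi_N\Sigma_N^{-1}\Delta_N)\bigr\|_2^2 .
\]
Dividing by the squared denominator and adding the variance term gives the bound.

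The main obstacle is rigor in the approximation steps. There are three gaps to close. First, the orthogonality $\C^\top\mathbf b=0$ holds only up to sieve error. Second, that error propagates into the bias term. Third, $\|\mathbf b\|\le\eta$ must be derived from $\sup|f|\le\eta_f^2$ and bounded sieves; this needs $\mathbb{E}[\Psi\Psi^\top]$ to be well-conditioned.

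To handle these, I would decompose $f=\Psi^\top\mathbf b^*+r$ with $\|r\|_\infty\lesssim L^{-d/p}$ from the H\"older rate. I would then project $\mathbf b^*$ onto $\mathrm{col}(\bU)$, noting that the residual component is controlled by $\|\C^\top\mathbf b^*\|=\|\mathbb{E}[X r(X)]\|$, which is small. Finally, I would bound the contribution of $r$ to $\ba^\top\bP_{\X^\perp}\bbf$ by $\|\bP_{\X^\perp}\ba\|_2\|r(\X)\|_2\le\sqrt{N}\cdot\sqrt{N}\|r\|_\infty$. Both error terms are then folded into the constant $\eta$, which the statement only requires to exist. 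Because $\eta$ is a free constant, this absorption should go through without any sharper control.
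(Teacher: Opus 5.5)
Your main argument is the paper's own. You confine the sieve coefficients to $\mathrm{col}(\bU)$ through the first-order condition $\mathbb{E}[Xf(X)]=0$, take the supremum over the coefficient ball (your Cauchy--Schwarz step), and rewrite $\ba^\top\bP_{\X^\perp}\ba$ and $\ba^\top\bP_{\X^\perp}\bPsi(\X)$ using $\X^\top\ba=\Delta_N$, $\X^\top\X=N\Sigma_N$, $\bPsi(\X)^\top\ba=\Gamma_N$ and $\bPsi(\X)^\top\X=N\Xi_N$. Doing the algebra before rather than after the supremum is immaterial, and your first two paragraphs are at least as complete as the paper's proof.

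\textbf{The gap.} Your last step, folding the sieve errors into $\eta$, does not work as you claim.

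Write $f=\Psi^\top\mathbf b^*+r$ and $v:=\Gamma_N-\Xi_N\Sigma_N^{-1}\Delta_N=\bPsi(\X)^\top\bP_{\X^\perp}\ba$, and let $\tilde{\bU}$ be an orthonormal basis of $\mathrm{col}(\C)$. The bias numerator is then
$v^\top\bU\bU^\top\mathbf b^*+v^\top\tilde{\bU}\tilde{\bU}^\top\mathbf b^*+\ba^\top\bP_{\X^\perp}r(\X)$.
The last two terms enter \emph{additively}: they are not proportional to $\|\bU^\top v\|_2$, and in general they do not vanish when $\bU^\top v=0$.

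Absorbing them means replacing $\eta$ by $\eta+\epsilon/\|\bU^\top v\|_2$. That constant depends on $(\ba,\X)$ and blows up exactly in the regime the design is driving toward, namely small $\|\bU^\top v\|_2$. So it is not available for the fixed robustness constant the method uses ($\eta=\eta_0\sqrt{L}$, with $\nu^2=\eta^2/\sigma^2$ held fixed across all candidate $\ba$). It also fails outright when $\bU^\top v=0$.

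\textbf{What your estimates do prove.} Setting $\bO_1=\ba^\top\bP_{\X^\perp}\ba$ and letting $\bLambda$ hold the singular values of $\C$, you get
$\mathrm{MSE}(\widehat{\gamma}_{\ba})\le\sigma^2/\bO_1+(\eta\|\bU^\top v\|_2+\epsilon)^2/\bO_1^2$, with $\epsilon\le\|v\|_2\|\bLambda^{-1}\|\,\|\mathbb{E}[Xr(X)]\|_2+N\|r\|_\infty$.
Since $\bO_1\le N$ and $\ba^\top\bP_{\X^\perp}r(\X)$ can be of order $N\|r\|_\infty$ for adversarially aligned $\ba$, the remainder's contribution to the bias can be of order $\|r\|_\infty=O(L^{-d/p})$ regardless of $N$. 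The clean bound therefore holds only modulo this additive remainder.

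The paper handles this point only by asserting that the remainder is ``dominated by the worst-case bound for sufficiently large $L$.'' Your proposal is thus no weaker than the paper's proof, but your claim that the absorption needs no sharper control is incorrect.
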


\begin{proof}
Recall the conditional MSE decomposition in \eqref{eqn:condi_mse}.
We first derive an upper bound for the bias term $I_2$ in \eqref{eq:MSE_without} via a sieve-based representation. Under Assumption~\ref{asmp:sieves_approx_nondynamic},
the nonlinear component $f(\cdot)$ admits the approximation 
\[
f(\mathbf X)
=
\eta \,\bPsi(\mathbf X){\bU}\bkappa + r_N,
\qquad
\|\bkappa\|_2 \le 1,
\]
where the approximation error satisfies $\sup_{x}|r_N|=O(L^{-d/p})$ by Lemma 5 in \citet{stone1985additive}, and can be dominated by the  worst-case bound for sufficiently large $L$ and ${\bU}$ spans the orthogonal complement of the column space of
$\C=\mathbb{E}[N^{-1} \bPsi^\top(\mathbf X) \mathbf X]$. {In particular, note that $\C$  is of full rank $p$ and let $\C= \tilde{\bU}_{L \times p} \bLambda_{p \times p } \bV_{p \times p}^\top $, be the singular value decomposition, with $\tilde{\bU}^\top \tilde{\bU}=\bV^\top \bV  =\bI_{p }$, and $\bLambda$  diagonal and invertible. Augment $\tilde{\bU}$ by ${\bU}_{L \times (L-p)}$  in such a way that $[\tilde{\bU}: {\bU}]_{L \times L }$  is orthogonal.} Substituting this representation into
\eqref{eqn:condi_mse} yields 
\[
\left(
\frac{\mathbf a^\top \mathbf{P}_{\mathbf X^\perp}f(\mathbf X)}
{\mathbf a^\top \mathbf{P}_{\mathbf X^\perp}\mathbf a}
\right)^2
\;\le\;
\sup_{\|\bkappa\|_2\le 1}
\frac{\eta^2 
\bigl\|
\mathbf a^\top \mathbf{P}_{\mathbf X^\perp}
\bPsi(\mathbf X){\bU}\bkappa
\bigr\|_2^2}
{(\mathbf a^\top \mathbf{P}_{\mathbf X^\perp}\mathbf a)^2}.
\]
Taking the supremum over all admissible $\bkappa$ gives
\begin{equation}\label{eq:rob_without_bias_up_1}
    \left(
\frac{\mathbf a^\top \mathbf{P}_{\mathbf X^\perp}f(\mathbf X)}
{\mathbf a^\top \mathbf{P}_{\mathbf X^\perp}\mathbf a}
\right)^2
\;\le\;
\frac{\eta^2 
\bigl\|
\mathbf a^\top \mathbf{P}_{\mathbf X^\perp}
\bPsi(\mathbf X) {\bU}
\bigr\|_2^2}
{(\mathbf a^\top \mathbf{P}_{\mathbf X^\perp}\mathbf a)^2}.
\end{equation}

Consequently, by combining the variance term in \eqref{eqn:condi_mse} with the right-hand side of \eqref{eq:rob_without_bias_up_1}, we obtain the explicit worst-case upper bound on the MSE in \eqref{eqn:mse-upperbound}. To streamline notation, define
\[
\mathbf O_1 := \mathbf a^\top \mathbf P_{\mathbf X^\perp}\mathbf a,
\qquad
\mathbf O_2 :=
\bigl\|
\mathbf a^\top \mathbf P_{\mathbf X^\perp}
\bPsi(\mathbf X) {\bU}
\bigr\|_2^2.
\]
Then we can rewrite \eqref{eqn:mse-upperbound} as 
\begin{equation*}\label{eq:rob_without_bias_up_2}
    \textrm{MSE}(\hat{\gamma}_{\ba}) \leq  \frac{\sigma^2}{\bO_1}+\frac{\eta^2 \bO_2}{\bO_1^2} .
\end{equation*}
A key observation is that both $\mathbf O_1$ and $\mathbf O_2$ admit closed-form
expressions in terms of low-dimensional imbalance summaries.
For $\mathbf O_1$, a standard projection argument gives
\begin{align}
\mathbf O_1
&=
N - \frac{1}{N} \Delta_N^\top \Sigma_N^{-1} \Delta_N,
\nonumber
\end{align}
where
\[
\Delta_N=\sum_{i=1}^N a_i X_{i},\quad
\Sigma_N=\frac{1}{N}\sum_{i=1}^N X_{i}X_{i}^\top.
\]
Here, $\Delta_N$ captures both the count imbalance and the covariate imbalance, as in \citet{bhat2020near}, and $\Sigma_N$ denotes the second-moment matrix of the covariates.

Similarly, for $\mathbf O_2$ we obtain
\begin{align}
\mathbf O_2
&=
\biggl\|
{\bU}^\top \Gamma_N
-
{\bU}^\top \Xi_N \Sigma_N^{-1} \Delta_N
\biggr\|_2^2,
\nonumber
\end{align}
where
\[
\Gamma_N
=
\bPsi^\top(\mathbf X)\mathbf a
=
\sum_{i=1}^N a_i\bpsi(X_i)\in\mathbb{R}^L,
\quad
\Xi_N
=
\frac{1}{N}\bPsi^\top(\mathbf X)\mathbf X
=
\frac{1}{N}\sum_{i=1}^N \bpsi(X_i)X_i^\top\in\mathbb{R}^{L\times p}.
\]
The vector $\Gamma_N$ captures imbalance in the sieve features, which arises
naturally from the robustification of the bias term. The proof is hence completed.
\end{proof}

\noindent\textbf{Surrogate Objective.} To facilitate optimization, we invoke a law-of-large-numbers approximation and replace these
sample quantities by their population counterparts, namely
$\Sigma_N\approx \Sigma$, and $\Xi_N\approx \Xi$.
This approximation yields the following surrogate upper bound on the MSE:
\begin{align}
\frac{\sigma^2}{
\displaystyle
N-N^{-1}\Delta_N^\top\Sigma^{-1}\Delta_N
}
+\;
\frac{\eta^2 \left\|
\bU ^\top (\Gamma_N
-
\Xi
\Sigma^{-1}\Delta_N)
\right\|_2^2}{
\Bigl(
N-N^{-1}\Delta_N^\top\Sigma^{-1}\Delta_N
\Bigr)^2} \nonumber
\end{align}
where $\Sigma=\mathbb{E}[XX^\top]$ and
$\Xi=\mathbb{E}[\bpsi(X)X^\top]$.
Relative to \citet{bhat2020near}, the imbalance statistic
$\Delta_N$ retains its classical roles, while the worst-case bias introduces an additional feature-imbalance term $\Gamma_N$. Then we obtain the approximate upper bound in \eqref{eq:mse-upper}.


\noindent\textbf{Proof of Theorem~\ref{thm1:bellman}.}
Now the sequential design problem is given by
\begin{equation*} \label{eq:rob_suro_without_prob}
\begin{aligned}
\textnormal{(P1)}
:=\;
\min_{\mathbf a}\quad
&\mathbb{E}\!\left[
\frac{\sigma^2}{
\displaystyle
N-N^{-1}\Delta_N^\top\Sigma^{-1}\Delta_N
}
+\;
\frac{\eta^2 \left\|
\bU ^\top (\Gamma_N
-
\Xi
\Sigma^{-1}\Delta_N)
\right\|_2^2}{
\Bigl(
N-N^{-1}\Delta_N^\top\Sigma^{-1}\Delta_N
\Bigr)^2}
\right] \\
\text{s.t.}\quad
&a_i\in\{\pm1\},\quad
a_i \text{ is }\mathcal F_i\text{-measurable},
\quad i=1,\ldots,N,
\end{aligned}
\end{equation*}
where the expectation is taken with respect to the covariate process and $\mathcal{F}_i$-measurable means that at each time $i$, the allocation $a_i$ must be made based only on the first $i$ covariates and any prior allocations. 

We next show that the problem (P1) admits a Bellman recursion and can therefore be solved via dynamic programming.


\begin{proof}
For the problem $\textnormal{(P1)}$, we note that for the last step $N$: 
\begin{equation*}\label{eq:QN_without}
    \begin{split}
        Q_N(\Delta_N,\Gamma_N )=\frac{\sigma^2}{
\displaystyle
N-N^{-1}\Delta_N^\top\Sigma^{-1}\Delta_N
}
+\;
\frac{\eta^2 \left\|
\bU ^\top (\Gamma_N
-
\Xi
\Sigma^{-1}\Delta_N)
\right\|_2^2}{
\Bigl(
N-N^{-1}\Delta_N^\top\Sigma^{-1}\Delta_N
\Bigr)^2}.
    \end{split}
\end{equation*}
Similar to the proof of Proposition 2 in \citet{bhat2020near}, we consider the $N$-step MDP:
\begin{itemize}
 \item The state at decision stage $i$ is defined as $S_i=(\Delta_{i-1},\Gamma_{i-1},X_i)$. The terminal state is $S_{N+1}=(\Delta_N,\Gamma_N)$. The state space is $\mathbb{S}_i \subseteq \mathbb{R}^{2p+L}$ for non-terminal decision stages and $\mathbb{S}_{N+1} \subseteq \mathbb{R}^{p+L}$.
    \item The set of available actions is $\{-1,1\}$.
    \item At state $S_i$ if action $a_i$ is chosen, the state $S_{i+1}$ is given by $( \Delta_{i-1}+a_iX_{i}, \Gamma_{i-1}+a_i\Psi(X_i), X_{i+1})$. After $N$ actions, the terminal state is $S_{N+1}=( \Delta_N,\Gamma_N)$.
    \item There is no per step reward and the terminal loss is $\mathbb{S}_{N+1} \rightarrow Q_N(\Delta_N,\Gamma_N )$.
\end{itemize}
Note that the MDP has a finite horizon, and the set of actions available at any decision stage is finite (specifically, of size $2$). The problem $(\text{P1})$ is just a terminal loss minimization MDP. It follows from  Proposition 4.2.1 in \citet{bertsekas2022abstract} that there exists a policy $a^*$ that achieves the minimum expected loss. Furthermore, there exists a set of functions $J_i^*: \mathbb{S}_i \longrightarrow \mathbb{R}$ such that $J_i^*(s_i)$ is the loss conditioned on $S_i=s_i$. Trivially,
$$
J_{N+1}^*(\Delta_N,\Gamma_N)=Q_N(\Delta_N,\Gamma_N ).
$$
By the Bellman optimality equation for finite-horizon time-dependent MDPs, we can obtain
\begin{equation}\label{eq:thm1_1}
    \begin{split}
        J_i^*(\Delta_{i-1},\Gamma_{i-1},X_i)=\min_{u \in \{-1,1\}} \Mean [J_{i+1}^*( \Delta_{i-1}+uX_{i}, \Gamma_{i-1}+u\Psi(X_i), X_{i+1}) ].
    \end{split}
\end{equation}
Consequently, an optimal policy $a_i^\ast$ can be chosen to satisfy
\begin{equation}\label{eq:thm1_2}
    \begin{split}
      a_i^* = \argmin_{u \in\{-1,1\} } \Mean[ J_{i+1}^*(  \Delta_{i-1}+u X_{i}, \Gamma_{i-1}+u \Psi(X_i), X_{i+1}) ] . 
    \end{split}
\end{equation}
Let,  
\begin{equation}\label{eq:thm1_3}
    \begin{split}
        Q_i(\Delta_i,\Gamma_i)=\Mean[ J_{i+1}^*( \Delta_i, \Gamma_i,X_{i+1})].
    \end{split}
\end{equation}
Combining \eqref{eq:thm1_1} and \eqref{eq:thm1_3}, 
\begin{equation}
\label{eq:thm1_4}Q_i(\Delta_i,\Gamma_i)=\Mean[ \min_{u \in\{-1,1\} } Q_{i+1}(\Delta_{i}+uX_{i+1}, \Gamma_{i}+u \Psi(X_{i+1}))].
\end{equation}
Furthermore, according to  \eqref{eq:thm1_2} and \eqref{eq:thm1_3}, we can get
\begin{equation*}\label{eq:thm1_5}
    a_i^* = \argmin_{u \in \{-1,1\}} Q_i(\Delta_{i-1}+u X_{i}, \Gamma_{i-1}+u \Psi(X_i)).
\end{equation*}
This proves the dynamic programming procedure. 
\end{proof}

\noindent\textbf{Definition of H\"older class.}
Let $\psi(\cdot)$ be an arbitrary scalar-valued function on
$\mathcal X\subseteq\mathbb R^p$. For a $p$-tuple
$\alpha=(\alpha_1,\ldots,\alpha_p)^\top$ of nonnegative integers, let
$D^\alpha$ denote the differential operator:
\[
D^\alpha \psi(x)
=
\frac{\partial^{\|\alpha\|_1}\psi(x)}
{\partial x_1^{\alpha_1}\cdots \partial x_p^{\alpha_p}} .
\]
Here, $x_j$ denotes the $j$th component of $x$. For any $d>0$, let $[d]$
denote the largest integer strictly smaller than $d$. Define the class of
$d$-smooth functions as follows:
\[
\Lambda(d,c)
=
\left\{
\psi:
\ \sup_{\|\alpha\|_1\le [d]}\ \sup_{x\in\mathcal X}
\bigl|D^\alpha \psi(x)\bigr|\le c,\ 
\sup_{\|\alpha\|_1=[d]}\ \sup_{\substack{x,y\in\mathcal X\\ x\ne y}}
\frac{\bigl|D^\alpha \psi(x)-D^\alpha \psi(y)\bigr|}
{\|x-y\|_2^{\,d-[d]}}\le c
\right\}.
\]

When $0<d\le 1$, we have $[d]=0$. In this case, the above condition reduces to
the usual H\"older continuity condition:
\[
\sup_{x,y\in\mathcal X,\, x\ne y}
\frac{|\psi(x)-\psi(y)|}{\|x-y\|_2^{\,d}}\le c.
\]
Thus, the notion of $d$-smoothness reduces to H\"older continuity when
$0<d\le 1$.

Since the nonlinear component $f(\cdot)$ belongs to a H\"older class
$\Lambda(d,c)$, it admits a linear sieve approximation over the space
\[
\mathcal S_L
=
\left\{
x\mapsto \Psi(x)^\top \mathbf b:\ \mathbf b\in\mathbb R^L
\right\},
\]
where
\[
\Psi(x)=(\psi_1(x),\ldots,\psi_L(x))^\top
\]
collects $L$ scalar basis functions on $\mathcal X$. In particular, under
standard regularity conditions on the sieve basis, there exists a coefficient
vector $\mathbf b\in\mathbb R^L$ such that
\[
\inf_{g\in\mathcal S_L}\sup_{x\in\mathcal X}|f(x)-g(x)|
=
\sup_{x\in\mathcal X}
\left|f(x)-\Psi(x)^\top\mathbf b\right|
=
O(L^{-d/p}).
\]

Moreover, since $f(\cdot)$ is uniformly bounded by $\eta_f^2$ and the sieve
basis functions are assumed to be uniformly bounded, then the associated coefficient vector $\mathbf{b}$ can be taken to satisfy a norm bound.
Specifically, there exists a constant $\eta_0>0$ such that
\[
\frac{1}{L}\|\mathbf{b}\|_{2}^{2}=\frac{1}{L}\sum_{l=1}^L b_l^2 \le \eta_0^2,~\mathbf{b}=(b_1,\cdots,b_{L})^\top .
\]
 Let $\eta=\eta_0\sqrt{L}$, this bound can be directly incorporated into our derivations, which in turn implies that the misspecification bias is bounded, as discussed after Assumption~\ref{asmp:sieves_approx_nondynamic}.

\subsection{Robust Design in Contextual Bandits  with Interactions}\label{subsec:appendix_robust_with}   
In this subsection, we move beyond additive treatment effects and consider settings in which treatment effects interact with covariates. In such environments, the impact of an assignment depends on which units are treated, so assignment decisions directly affect covariate-dependent heterogeneity and the information content of the experiment.

We adopt the following working linear model with treatment-covariate interactions, 
$ 
m(a,X):=\mathbb{E}(Y\mid a,X)\approx a\,X^\top\xi + X^\top\beta,
$ 
and define $\theta=(\xi^\top,\beta^\top)^\top$ as the best linear approximation to the true conditional mean in the least–squares sense,
\begin{equation*}\label{eq:def_theat2}
\begin{split}
    \theta 
=
\arg\min_{\alpha \in\mathbb{R}^{2p}} \sum_{a \in \{-1,1\}}
\mathbb{E}\!\left[m(a,X) - (a X^\top,X^\top)\alpha \right]^2,
\end{split}
\end{equation*} 
where the expectation is taken with respect to the distribution of $X$, while the action is averaged uniformly over \{-1,1\}. Let
$f(a,X)=m(a,X)-(a X^\top,X^\top)\theta$ denote the approximation error. We assume that treatment–covariate interactions are correctly specified in the working model, so that any remaining misspecification depends only on the covariates, i.e., $f(a,X)=f(X)$. The model is 
\begin{align}
Y_i = A_iX_i^\top\xi+X_i^\top\beta+f(X_i)+e_i.\label{eqn:model2}    
\end{align}
As in Section~\ref{sec:without-inter}, this projection definition implies a population-level
orthogonality condition,
$ 
\mathbb{E}[X f(X)]=0.
$ 
This orthogonality serves the same purpose as in the additive model.


With treatment-covariate interactions, treatment effects are heterogeneous
across covariates.
The ATE can be written as a linear functional of
$\theta$,
$ 
\ATE=\bu^\top\theta,
\bu=(2\,\mathbb{E}X^\top,\mathbf0^\top)^\top,
$ 
where, without loss of generality, covariates are standardized so that
$\mathbb{E}X=(1,0,\ldots,0)^\top$.
We estimate $\theta$ by OLS under the working interaction
model and form the plug-in estimator
$\widehat{\ATE}=\bu^\top\widehat{\theta}$.
Conditioning on a fixed assignment sequence $\mathbf a\in\{\pm1\}^N$, the MSE admits a decomposition analogous to that of the
robust design without interactions, namely
\begin{align}
\mathrm{MSE}(\bu^\top\widehat{\theta}_{\mathbf a})
=
\underbrace{2\sigma^2\, \mathbb{E}X^\top
\bigl(G_1^{-1}+G_2^{-1}\bigr)
\mathbb{E}X}_{I_1}
+
\underbrace{\{
\mathbb{E}X^\top
[
(G_1^{-1}-G_2^{-1})\mathbf X^\top
+
(G_1^{-1}+G_2^{-1})\mathbf X^\top D_{\mathbf a}
]
\bbf
\}^2}_{I_2},
\label{eq:interaction_mse}
\end{align}
where
\[
G_1=\sum_{i=1}^{N}(1+a_i)X_iX_i^\top,
\quad
G_2=\sum_{i=1}^{N}(1-a_i)X_iX_i^\top, \quad \D_{\ba}=\diag(a_1,\ldots,a_N), \quad \bbf:=f(\X)=(f(X_1), \cdots, f(X_N))^\top.
\]
The first term in \eqref{eq:interaction_mse} corresponds to the variance arising from noise, while the second term represents
the squared bias induced by model misspecification through the nonlinear
component $f(\cdot)$. A detailed derivation is provided below.

\noindent\textbf{Derivation of conditional MSE \eqref{eq:interaction_mse}.}
For a fixed treatment allocation scheme $\ba\in\{\pm1\}^N$, the OLS estimator
admits the closed-form expression
\[
\widehat{\theta}_{\ba}
=
(\bZ^\top \bZ)^{-1}\bZ^\top \Y,
\]
where
$\bZ=[\D_{\ba}\X,\X]\in\mathbb{R}^{N\times 2p}$ has $i$th row
$(a_i X_i^\top,\, X_i^\top)$.
The conditional MSE takes the same form as in \eqref{eq:MSE_without} with
$\bu=(2\,\mathbb{E}X^\top,\mathbf 0^\top)^\top$.
A direct calculation yields
\begin{equation*}
\begin{aligned}
(\bZ^\top\bZ)^{-1}
=
\begin{pmatrix}
\X^\top \X  &  \X^\top \D_{\ba} \X \\
\X^\top \D_{\ba} \X & \X^\top \X
\end{pmatrix}^{-1} =
\frac{1}{2}
\begin{pmatrix}
G_1^{-1}+G_2^{-1} & G_1^{-1}-G_2^{-1} \\
G_1^{-1}-G_2^{-1} & G_1^{-1}+G_2^{-1}
\end{pmatrix}.
\end{aligned}
\end{equation*}
Consequently, the variance component of the MSE satisfies
\[
\begin{aligned}
I_1
=
\sigma^2\,\bu^\top(\bZ^\top\bZ)^{-1}\bu =
2\sigma^2\,\mathbb{E}X^\top(G_1^{-1}+G_2^{-1})\mathbb{E}X.
\end{aligned}
\]

Similarly, the bias component can be written as
\[
\begin{aligned}
I_2
=
\bu^\top(\bZ^\top\bZ)^{-1}
\bZ^\top(\bbf\bbf^\top)
\bZ
(\bZ^\top\bZ)^{-1}\bu =
\Bigl\{
\mathbb{E}X^\top
\bigl[
(G_1^{-1}+G_2^{-1})\X^\top \D_{\ba}
+
(G_1^{-1}-G_2^{-1})\X^\top
\bigr]
\bbf
\Bigr\}^2.
\end{aligned}
\]
Combining the expressions for $I_1$ and $I_2$ yields the conditional MSE
representation in \eqref{eq:interaction_mse}.

\noindent\textbf{Derivation of an upper bound for the MSE.} 
Similar to the derivation of \eqref{eq:mse-upper}, we first establish an upper bound in Proposition \ref{prop:interaction-robust-mse} and then construct a surrogate version of this upper bound by replacing certain quantities with their population counterparts.


\begin{proposition}\label{prop:interaction-robust-mse}
Consider the model \eqref{eqn:model2}, and suppose that the Assumptions \ref{asmp:nondynamic_error_independent}--\ref{asmp:sieves_approx_nondynamic} hold. For any fixed assignment sequence $\mathbf a\in\{\pm1\}^N$, the conditional MSE of the ATE estimator admits the robust upper bound, for some constant $\eta>0$: 
\begin{equation}\label{eq:interaction_mse_ub}
    \begin{split}
        \mathrm{MSE}(\widehat{\mathrm{ATE}})
&\le
\frac{2 \sigma^2}{N}\,
\mathbb{E}X^\top
\Bigl[
({\Sigma_N}+\frac{\Omega_{1,N}}{N})^{-1}
+
({\Sigma_N}-\frac{\Omega_{1,N}}{N})^{-1}
\Bigr]
\mathbb{E}X \\
&
+
\eta^2 
\Bigl\|
{\bU}^\top
\Bigl[
({\Sigma_N}+\frac{\Omega_{1,N}}{N})^{-1}
(\Xi_N^\top+\frac{\Omega_{2,N}}{N})
-
({\Sigma_N}-\frac{\Omega_{1,N}}{N})^{-1}
(\Xi_N^\top-\frac{\Omega_{2,N}}{N})
\Bigr]^\top
\mathbb{E}X
\Bigr\|_2^2,
    \end{split}
\end{equation}
 where the matrices $\Sigma_N, \Xi_N,\bU$ are defined in Proposition \ref{prop_without_rob_upperbound}, and the second-order imbalance statistics are denoted as
\begin{equation*}\label{eq:Omega_12}
    \Omega_{1,N}=\sum_{i=1}^N a_i X_i X_i^\top , \quad \Omega_{2,N}=\sum_{i=1}^N a_i X_i \Psi^\top(X_i).
\end{equation*}
\end{proposition}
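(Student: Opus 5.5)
The plan is to start from the conditional MSE decomposition \eqref{eq:interaction_mse}. Its two terms are the variance term $I_1$ and the squared bias $I_2$. Each term will be rewritten through the imbalance statistics $\Sigma_N,\Omega_{1,N},\Xi_N,\Omega_{2,N}$. The bias term is then bounded uniformly over the orthogonalized sieve representation, as in the proof of Proposition~\ref{prop_without_rob_upperbound}.

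\emph{Variance term.} Because $a_i\in\{\pm1\}$, we have $G_1=\sum_i(1+a_i)X_iX_i^\top = N\Sigma_N+\Omega_{1,N}=N(\Sigma_N+\Omega_{1,N}/N)$, and similarly $G_2=N(\Sigma_N-\Omega_{1,N}/N)$. Hence
\[
G_1^{-1}+G_2^{-1}=\tfrac1N\bigl[(\Sigma_N+\tfrac{\Omega_{1,N}}{N})^{-1}+(\Sigma_N-\tfrac{\Omega_{1,N}}{N})^{-1}\bigr].
\]
Substituting this into $I_1$ gives the first line of \eqref{eq:interaction_mse_ub} exactly. Invertibility of $G_1,G_2$ is implicitly assumed for OLS to be defined.

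\emph{Bias term.} I first regroup the linear functional acting on $\bbf$. Writing $(G_1^{-1}+G_2^{-1})\X^\top\D_{\ba}+(G_1^{-1}-G_2^{-1})\X^\top = G_1^{-1}\X^\top(\bI+\D_{\ba})-G_2^{-1}\X^\top(\bI-\D_{\ba})$ shows that
\[
I_2=\bigl\{\mathbb{E}X^\top[G_1^{-1}\X^\top(\bI+\D_{\ba})-G_2^{-1}\X^\top(\bI-\D_{\ba})]\bbf\bigr\}^2 .
\]
I would check the constant factor against the factor $\tfrac12$ in $(\bZ^\top\bZ)^{-1}$ and the $2$ in $\bu$; the factors should cancel to give this form.

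Next I plug in the sieve representation $\bbf=\eta\bPsi(\X)\bU\bkappa+r_N$ with $\|\bkappa\|_2\le1$. This representation is justified under Assumption~\ref{asmp:sieves_approx_nondynamic} exactly as in the proof of Proposition~\ref{prop_without_rob_upperbound}, using the same $\bU$ that spans the orthogonal complement of the column space of $\C$ (the orthogonality $\mathbb{E}[Xf(X)]=0$ again holds here). The resulting products are
\[
\X^\top(\bI\pm\D_{\ba})\bPsi(\X)=\sum_i(1\pm a_i)X_i\Psi^\top(X_i)=N\bigl(\Xi_N^\top\pm\tfrac{\Omega_{2,N}}{N}\bigr),
\]
and the prefactor $N$ cancels against $G_{1,2}^{-1}=N^{-1}(\Sigma_N\pm\Omega_{1,N}/N)^{-1}$. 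The squared bias therefore becomes $\eta^2(\mathbb{E}X^\top M\bU\bkappa)^2$, where
\[
M=(\Sigma_N+\tfrac{\Omega_{1,N}}{N})^{-1}(\Xi_N^\top+\tfrac{\Omega_{2,N}}{N})-(\Sigma_N-\tfrac{\Omega_{1,N}}{N})^{-1}(\Xi_N^\top-\tfrac{\Omega_{2,N}}{N}).
\]
Cauchy–Schwarz, with the supremum taken over $\|\bkappa\|_2\le1$, then gives $\eta^2\|\bU^\top M^\top\mathbb{E}X\|_2^2$, which is the second line of \eqref{eq:interaction_mse_ub}.

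\emph{Main obstacle.} I expect the main difficulty to be handling the remainder $r_N$ and the cross term it creates. Following Proposition~\ref{prop_without_rob_upperbound}, I would argue that $\sup|r_N|=O(L^{-d/p})$ by Stone's lemma. Its contribution, which is bounded by $\|(\Sigma_N\pm\Omega_{1,N}/N)^{-1}\|$ times $O(L^{-d/p})$, is then absorbed by enlarging the constant $\eta$ slightly, for $L$ large. This is why the statement asserts the bound only ``for some constant $\eta>0$''. A secondary point to verify is that the basis $\bU$ built from the population matrix $\C$ is the right one even though the bias now involves weighted sample sums. It is, because $\bU$ enters only through the reparametrization of $\mathbf b$, not through the sample matrices.
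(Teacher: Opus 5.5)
Your proposal is correct and follows the paper's proof essentially step for step. The variance term comes from $G_{1,2}=N(\Sigma_N\pm\Omega_{1,N}/N)$; the bias term comes from substituting $f(\X)=\eta\bPsi(\X)\bU\bkappa$ and taking the supremum over $\|\bkappa\|_2\le 1$. Your regrouping into $G_1^{-1}\X^\top(\bI+\D_{\ba})-G_2^{-1}\X^\top(\bI-\D_{\ba})$ just makes explicit the algebra the paper compresses into $H_1H_3-H_2H_4$, and the cancellation of the $\tfrac12$ against the $2$ does hold.

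The sieve remainder is handled equally informally in both: the paper only says it is ``absorbed into the worst-case bound''. Note, though, that multiplicatively enlarging $\eta$ cannot literally dominate it when your $\bU^\top M^\top\mathbb{E}X$ is near zero. A rigorous version would carry an additive $O(L^{-d/p})$ slack instead.
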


\begin{proof}
We first consider the variance component in \eqref{eq:interaction_mse}. A direct algebraic
manipulation yields
\begin{align*}
I_1
=
2\sigma^2\,\mathbb{E}X^\top(G_1^{-1}+G_2^{-1})\mathbb{E}X =
\frac{2\sigma^2}{N}\,
\mathbb{E}X^\top(H_1+H_2)\mathbb{E}X,
\end{align*}
where
\[
H_1=({\Sigma}_N+\Omega_{1,N}/N)^{-1},
\qquad
H_2=({\Sigma}_N-\Omega_{1,N}/N)^{-1}.
\]
Next, we turn to the bias component in \eqref{eq:interaction_mse}.
\[
I_2
=
\Bigl\{
\mathbb{E}X^\top
\bigl[
(G_1^{-1}+G_2^{-1})\X^\top \D_{\ba}
+
(G_1^{-1}-G_2^{-1})\X^\top
\bigr]
\bbf
\Bigr\}^2.
\]
By the same sieve-based argument used in the proof of
Proposition~\ref{prop_without_rob_upperbound},
the nonlinear component admits the representation
\[
f(\X)
=
\eta \,\bPsi(\X) {\bU}\bkappa,
\qquad
\|\bkappa\|_2\le1,
\]
up to a remainder that is absorbed into the worst-case bound. Substituting this expression into $I_2$ yields
\begin{equation*}
\begin{aligned}
I_2
=
\eta^2 
\Bigl\{
\bkappa^\top
{\bU}^\top
\bigl[
H_1H_3
-
H_2H_4
\bigr]^\top
\mathbb{E}X
\Bigr\}^2 \le
\eta^2 \,
\bigl\|
{\bU}^\top
(H_1H_3-H_2H_4)^\top
\mathbb{E}X
\bigr\|_2^2,
\end{aligned}
\end{equation*}
where
\[
H_3=\Xi_N^\top+\Omega_{2,N}/N,
\qquad
H_4=\Xi_N^\top -\Omega_{2,N}/N.
\]
The inequality follows from taking the supremum over all $\bkappa$ satisfying
$\|\bkappa\|_2\le1$.
{ Combining the bounds for $I_1$ and $I_2$ can yield the following 
conditional MSE upper bound:}
\begin{equation*}\label{eq:interaction_mse_ub_exact}
    \begin{split}
        \mathrm{MSE}(\widehat{\mathrm{ATE}})
&\le
\frac{2 \sigma^2}{N}\,
\mathbb{E}X^\top
\Bigl[
({\Sigma_N}+\frac{\Omega_{1,N}}{N})^{-1}
+
({\Sigma_N}-\frac{\Omega_{1,N}}{N})^{-1}
\Bigr]
\mathbb{E}X \\
&
+
\eta^2 
\Bigl\|
{\bU}^\top
\Bigl[
({\Sigma_N}+\frac{\Omega_{1,N}}{N})^{-1}
(\Xi_N^\top+\frac{\Omega_{2,N}}{N})
-
({\Sigma_N}-\frac{\Omega_{1,N}}{N})^{-1}
(\Xi_N^\top -\frac{\Omega_{2,N}}{N})
\Bigr]^\top
\mathbb{E}X
\Bigr\|_2^2,
    \end{split}
\end{equation*}
The proof is hence completed.
\end{proof}

\noindent\textbf{Surrogate Objective.} We also invoke a law-of-large-numbers approximation and replace
these sample moments with their population counterparts, namely
${\Sigma}_N\approx {\Sigma}$ and $\Xi_N\approx \Xi$.
Under this approximation, we can obtain a robust approximate upper bound \eqref{eq:interaction_mse_ub} by:
\begin{align}
\frac{2\sigma^2}{N}\,
\mathbb{E}X^\top (H_1'+H_2')\mathbb{E}X
+
\eta^2 
\bigl\|
{\bU}^\top
(H_1'H_3'-H_2'H_4')^\top
\mathbb{E}X
\bigr\|_2^2, \label{eq:obj-without-inter},
\end{align}
where $ 
H_1'=({\Sigma}+\Omega_{1,N}/N)^{-1},
~
H_2'=({\Sigma}-\Omega_{1,N}/N)^{-1},
~
H_3'=\Xi^\top+\Omega_{2,N}/N,
~
H_4'=\Xi^\top-\Omega_{2,N}/N.
$ 
This surrogate objective eliminates the dependence on full-sample moments
and yields a Markovian state representation that is amenable to dynamic
optimization. The explicit Bellman recursion is established in
Theorem~\ref{thm_2}.

\begin{theorem}\label{thm_2}
 Suppose that Assumptions
\ref{asmp:nondynamic_error_independent}--\ref{asmp:sieves_approx_nondynamic}
hold. Let $\mathcal{U}_N(\Omega_{1,N}, \Omega_{2,N})$ denote the MSE upper bound in \eqref{eq:obj-without-inter}, which serves as our ultimate objective function. Then the optimal sequential design problem can be formulated as a dynamic program with Markov state $(\Omega_{1,i},\Omega_{2,i})$ and value functions $\{\mathcal{U}_i \}_{i=1}^{N-1}$ defined through the following Bellman equation. At Stage $i$, the value function $\mathcal{U}_i(\Omega_{1,i}, \Omega_{2,i})$ is defined as 
\begin{equation*}
\mathcal{U}_i(\Omega_{1,i},\Omega_{2,i})
=
\mathbb{E}\!\left[
\min_{a\in\{-1,1\}}
\mathcal{U}_{i+1}\!\left(
\Omega_{1,i}+aX_{i+1}X_{i+1}^\top,\,
\Omega_{2,i}+aX_{i+1}\Psi^\top(X_{i+1})
\right)
\right].
\end{equation*}
Then an optimal design is obtained by choosing,
at each stage $i$,
\begin{equation*}
a_i^*
\in 
\argmin_{a\in\{-1,1\}}
\mathcal{U}_i \!\left(
\Omega_{1,i-1}+aX_iX_i^\top,\,
\Omega_{2,i-1}+aX_i\Psi^\top(X_i)
\right).
\end{equation*}
\end{theorem}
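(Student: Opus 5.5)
The proof will follow the same route as the proof of Theorem~\ref{thm1:bellman}, with the imbalance pair $(\Delta_i,\Gamma_i)$ replaced by the second-order pair $(\Omega_{1,i},\Omega_{2,i})$.

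The first step is to check that the surrogate objective \eqref{eq:obj-without-inter} depends on the assignments and covariates only through $(\Omega_{1,N},\Omega_{2,N})$. This holds because $\Sigma$, $\Xi$, $\bU$ and $\mathbb{E}X$ are population quantities fixed before the experiment. I will therefore write $\mathcal U_N(\Omega_{1,N},\Omega_{2,N})$ as a deterministic terminal loss.

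Next, I note that the states evolve additively:
\[
\Omega_{1,i}=\Omega_{1,i-1}+a_iX_iX_i^\top,\qquad \Omega_{2,i}=\Omega_{2,i-1}+a_iX_i\Psi^\top(X_i).
\]
Each increment depends only on the current action and the fresh i.i.d. covariate $X_i$.

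The design problem is then posed as in (P1). We minimize $\mathbb{E}[\mathcal U_N(\Omega_{1,N},\Omega_{2,N})]$ over $\{\pm1\}$-valued $a_i$ that are $\mathcal F_i$-measurable, that is, depending only on $X_1,\dots,X_i$ and past allocations. I will cast this as an $N$-stage finite-horizon MDP, mirroring the construction borrowed from \citet{bhat2020near}:
\begin{itemize}
\item the decision-stage state is $S_i=(\Omega_{1,i-1},\Omega_{2,i-1},X_i)$;
\item the action set is $\{-1,1\}$;
\item the transition is $S_{i+1}=(\Omega_{1,i-1}+a_iX_iX_i^\top,\ \Omega_{2,i-1}+a_iX_i\Psi^\top(X_i),\ X_{i+1})$, with $X_{i+1}$ drawn independently of the past;
\item there is no running cost, and the terminal loss is $\mathcal U_N$.
\end{itemize}
The horizon is finite and the action set has two elements, so Proposition 4.2.1 in \citet{bertsekas2022abstract} gives an optimal Markov policy and optimal cost-to-go functions $J_i^*$. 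These satisfy $J_{N+1}^*=\mathcal U_N$ and
\[
J_i^*(S_i)=\min_{u}\mathbb{E}\bigl[J_{i+1}^*(S_{i+1})\mid S_i,u\bigr].
\]

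I then define $\mathcal U_i(\Omega_{1,i},\Omega_{2,i}):=\mathbb{E}[J_{i+1}^*(\Omega_{1,i},\Omega_{2,i},X_{i+1})]$, with the expectation taken over $X_{i+1}$ alone. Because $X_{i+1}$ is independent of the history, substituting the Bellman equation for $J_{i+1}^*$ gives the stated recursion: $\mathcal U_i$ equals the expectation over $X_{i+1}$ of the minimum over $a$ of $\mathcal U_{i+1}$ evaluated at the updated pair. The optimal action at stage $i$ is the minimizer of $\mathcal U_i(\Omega_{1,i-1}+aX_iX_i^\top,\ \Omega_{2,i-1}+aX_i\Psi^\top(X_i))$, which is the rule in the statement. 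Markov sufficiency of $(\Omega_1,\Omega_2)$ follows because the history enters the future only through these sums.

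The main obstacle is measurability and well-definedness of the terminal loss. The matrix $\Sigma-\Omega_{1,N}/N$ can be singular or indefinite for some assignment paths, in which case $\mathcal U_N$ is infinite or undefined and the expectations in the recursion may fail to exist. I would handle this by treating $\mathcal U_N$ as an extended-real-valued, lower-bounded measurable function and setting it to $+\infty$ wherever $\Sigma\pm\Omega_{1,N}/N$ is not positive definite. With a nonnegative terminal cost and a finite action set, the finite-horizon DP theorem still applies, and the minimum over two actions is attained.

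A secondary point is justifying that the minimum may be moved inside the expectation when passing from $J^*$ to $\mathcal U$. This is legitimate because the choice of $a_{i+1}$ is allowed to depend on $X_{i+1}$.
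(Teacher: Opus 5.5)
Your proposal is correct and follows the paper's proof essentially step for step: the same terminal-loss MDP with state $(\Omega_{1,i-1},\Omega_{2,i-1},X_i)$, the same appeal to Proposition~4.2.1 of Bertsekas, and the same reduced value function $\mathcal U_i=\mathbb{E}[J^*_{i+1}(\Omega_{1,i},\Omega_{2,i},X_{i+1})]$. Your remark on well-definedness goes beyond the paper, which silently assumes these expectations exist. Be aware, however, that with unbounded covariates (e.g.\ the Gaussian ones in the experiments) your $+\infty$ convention makes $\mathbb{E}[\mathcal U_N]=+\infty$ under every policy, because a single large $X_N$ leaves one of $\Sigma\pm\Omega_{1,N}/N$ not positive definite whichever sign $a_N$ takes, so the recursion then holds only vacuously.
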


\begin{proof}
We first formulate the sequential design problem as
\begin{equation*}\label{eq:rob_with_surro_prob}
\begin{aligned}
(\textnormal{P2})
:=\;
\min_{\ba}\quad
&\mathbb{E}\!\left[
\frac{2\sigma^2}{N}\,
\mathbb{E}X^\top (H_1'+H_2')\mathbb{E}X
+
\eta^2 
\bigl\|
{\bU}^\top
(H_1'H_3'-H_2'H_4')^\top
\mathbb{E}X
\bigr\|_2^2
\right] \\
\text{s.t.}\quad
&a_i\in\{\pm1\},\quad
a_i \text{ is }\mathcal F_i\text{-measurable},
\quad i=1,\ldots,N.
\end{aligned}
\end{equation*}


Then we formulate problem $\textnormal{(P2)}$ as a finite-horizon MDP, following the same construction as in the proof of
Theorem~\ref{thm1:bellman}.

At decision stage $i$, the state is defined as
\[
S_i=(\Omega_{1,i-1},\Omega_{2,i-1},X_i),
\]
with terminal state
$S_{N+1}=(\Omega_{1,N},\Omega_{2,N})$.
The action space at each stage is $\{-1,1\}$.
If action $a_i$ is chosen at state $S_i$, the state transitions according to
\[
(\Omega_{1,i-1},\Omega_{2,i-1},X_i)
\;\longrightarrow\;
(\Omega_{1,i-1}+a_iX_iX_i^\top,\,
 \Omega_{2,i-1}+a_iX_i\Psi^\top(X_i),\,
 X_{i+1}).
\]

There is no per-stage loss, and the terminal objective is given by
\[
(\Omega_{1,N},\Omega_{2,N})
\;\longmapsto\;
\mathcal{U}_N(\Omega_{1,N},\Omega_{2,N}).
\]

Since the horizon is finite and the action space is finite, the problem is a
terminal-loss MDP. By Proposition~4.2.1 in \citet{bertsekas2022abstract},
there exists an optimal policy and a sequence of value functions
$\{J_i^*\}_{i=1}^{N+1}$ satisfying
\[
J_{N+1}^*(\Omega_{1,N},\Omega_{2,N})
=
\mathcal{U}_N(\Omega_{1,N},\Omega_{2,N}),
\]
and the Bellman recursion
\begin{equation}\label{eq:thm2_1}
\begin{aligned}
J_i^*(\Omega_{1,i-1},\Omega_{2,i-1},X_i)
=
\min_{a\in\{-1,1\}}
\mathbb{E}\!\left[
J_{i+1}^*\!\left(
\Omega_{1,i-1}+aX_iX_i^\top,\,
\Omega_{2,i-1}+aX_i\Psi^\top(X_i),\,
X_{i+1}
\right)
\right].
\end{aligned}
\end{equation}

Define the reduced value functions
\[
\mathcal{U}_i(\Omega_{1,i},\Omega_{2,i})
=
\mathbb{E}\!\left[
J_{i+1}^*(\Omega_{1,i},\Omega_{2,i},X_{i+1})
\right].
\]

Then \eqref{eq:thm2_1} implies
\begin{equation*}\label{eq:thm2_4}
\mathcal{U}_i(\Omega_{1,i},\Omega_{2,i})
=
\mathbb{E}\!\left[
\min_{a\in\{-1,1\}}
\mathcal{U}_{i+1}\!\left(
\Omega_{1,i}+aX_{i+1}X_{i+1}^\top,\,
\Omega_{2,i}+aX_{i+1}\Psi^\top(X_{i+1})
\right)
\right],
\end{equation*}
and the optimal policy satisfies
\begin{equation*}\label{eq:thm2_5}
a_i^*
\in 
\argmin_{a\in\{-1,1\}}
\mathcal{U}_i\!\left(
\Omega_{1,i-1}+aX_iX_i^\top,\,
\Omega_{2,i-1}+aX_i\Psi^\top(X_i)
\right).
\end{equation*}
This establishes the Bellman recursion and completes the proof.
\end{proof}

\subsection{Robust Sequential Design in Dynamic Settings}\label{subsec:appendix_HDRL_multi}

In this subsection, we consider the dynamic setting and begin by describing the setup.
Unlike contextual bandits, the covariate process $X_{it}$ is endogenous in dynamic settings.
Through the state dynamics, the distribution of covariates at each time step depends on the experimental policy $\pi$ itself. 
As a result, the best linear approximation to the conditional mean at each time step is inherently policy-dependent.

Formally, for a given experimental policy $\pi$, define the time-$t$ projection parameter
\begin{equation*}
\theta_t^{\pi}
=
\arg\min_{\alpha \in\mathbb{R}^{2p}} 
\mathbb{E}_{A_t,X_t}
\Big[
\mathbb{E}(Y_t\mid A_t,X_t)
-
(A_tX_t^\top, X_t^\top)\alpha
\Big]^2,
\end{equation*}
where the expectation is taken under the joint distribution of $(A_t,X_t)$ induced by the policy $\pi$.
Let
$ 
f_t^{\pi}(A,X)
=
\mathbb{E}(Y_t\mid A_t,X_t)
-
(A_tX_t^\top, X_t^\top)\theta_t^{\pi}
$ 
denote the corresponding approximation error.
As in Section~\ref{subsec:rob_orthongonal}, we assume that $f_t^{\pi}(A,X)=f_t^{\pi}(X)$,
which implies the orthogonality condition
\[
\mathbb{E}\!\big[X_t f_t^{\pi}(X_t)\big]=0.
\]

This policy dependence is intrinsic to dynamic experiments: the statistical target
$\theta_t^{\pi}$ itself varies with the policy that generates the data.
When the analyst estimates the ATE via OLS using data generated under an allocation policy $\pi$,
the resulting estimand is $\theta_t^{\pi}$,
when the true outcome model is nonlinear.
From this perspective, we define
\[
\ATE(\pi)=\frac{1}{T}\sum_{t=1}^T \bu_t^\top \theta_t^{\pi}
\]
as the appropriate target when data are collected under policy $\pi$.
Accordingly, we adopt a plug-in estimator $\widehat{\ATE}(\pi)$,
where $\bu_t$ is estimated from the dynamic covariate system and
$\theta_t^{\pi}$ is estimated via OLS.

If the true causal estimand $\ATE$ defined in Section~4 deviates substantially from the working estimand $\ATE(\pi)$,
consistent estimation would generally require assigning all time points in a day persistently to a single treatment arm
and estimating treatment effects separately across groups.
In this paper, we instead focus on a regime in which the discrepancy between $\ATE(\pi)$ and $\ATE$
is dominated by the statistical estimation error of $\widehat{\ATE}(\pi)$.
This regime is particularly relevant when treatment effects are relatively small,
and the primary source of uncertainty arises from finite-sample estimation
rather than policy-induced target drift.
Under this perspective, the role of experimental design is to select a policy $\pi$
that induces favorable covariate trajectories and treatment allocations,
thereby controlling a robust upper bound on
$\mathrm{MSE}(\widehat{\ATE}(\pi))$.

In what follows, we first state the required assumptions and derive an upper bound on the MSE. We then prove the across-day and within-day Bellman recursions in Section~4.

\begin{assumption}\label{asmp:error_dynamic_independent}(Noise independence)
The noise variables $\{e_{i,t}\}$ are independent across units $i$ and
independent over time.
In particular, for any $i$ and $t_1,t_2\in\{1,\ldots,T\}$,
$ 
\mathrm{Cov}(e_{i,t_1},e_{i,t_2})
=
\sigma_{t_1}^2\,\mathbb{I}(t_1=t_2).
$ 
\end{assumption}

\begin{assumption}\label{asmp:multi_XX_inv_exist}(Non-singular covariance matrix)
For the policy $\pi$ of interest, the covariance matrix
$\mathbb{E}^{\pi}[\X_t^\top \X_t]$ is positive definite for all $t=1,\ldots,T$.
\end{assumption}

\begin{assumption}\label{asmp:multi_sieves_approx_nondynamic}(H\"older smoothness \& bounded sieves)
For any policy $\pi$ and each $t=1,\ldots,T$, the nonlinear component
$f_t^\pi(\cdot)$ belongs to a H\"older class $\Lambda(d,c)$ defined in
Appendix~A.1.
Moreover, the associated sieve basis functions are uniformly bounded, and
$f_t^\pi(\cdot)$ is uniformly bounded in the sense that
$ 
\sup_{x} |f_t^\pi(x)| \le \eta_{f_t}^2
$ 
for some constant $\eta_{f_t}>0$.
\end{assumption}

\begin{assumption}\label{asmp:weak_signal}
(Weak signal)
For the policy $\pi$ of interest, there exist some small constants $\delta_1>0$ and $\delta_2>0$ such that $
\max_{1\le t\le T}\|\beta_t^\pi \|_2 \le \delta_1
\text{~and~}
\max_{1\le t\le T}\|\xi_t^\pi \|_2 \le \delta_2. $
\end{assumption}

Assumptions~\ref{asmp:error_dynamic_independent}--\ref{asmp:multi_sieves_approx_nondynamic} 
are the dynamic versions of Assumptions~\ref{asmp:nondynamic_error_independent}--\ref{asmp:sieves_approx_nondynamic}. 
 As discussed in the introduction, the weak-signal condition is common in ride-sharing platform settings \citep{Tang2019,sun2024arma}.

\noindent\textbf{{Derivation of an upper bound for the MSE.}}
To illustrate the main procedure, we proceed in two steps. We first derive the conditional MSE of the ATE under a correctly specified linear model and then extend the analysis to the misspecified setting.

\textbf{Step 1: Derivation of conditional MSE under the linear model}. If the linear model is correctly specified, the parameter $\theta_t$ does not depend on the experimental policy. Fix a treatment sequence $\{a_{it}\}$ from the policy $\pi$. For each time $t$, the OLS estimator admits the closed-form
expression
\begin{equation*}
\widehat{\theta}_t^{\pi}
=
\left(
\frac{1}{N}\sum_{i=1}^N Z_{it}Z_{it}^\top
\right)^{-1}
\frac{1}{N}\sum_{i=1}^N Z_{it}Y_{it},
\end{equation*}
where 
$Z_{it}=(a_{it}X_{it}^\top,\,X_{it}^\top)^\top\in\mathbb{R}^{2p}$. The plug-in ATE estimator is $\widehat{\text{ATE}} = T^{-1}\sum_{t=1}^T \hat{\mathbf{u}}_t^\top \hat{\theta}_t^\pi $. Then we have the following decomposition:
\begin{equation*}
    \widehat{\text{ATE}} - \text{ATE} = \frac{1}{T}\sum_{t=1}^T \left[ \mathbf{u}_t^\top(\hat{\theta}_t^\pi-\theta_t) + \theta_t^\top(\hat{\mathbf{u}}_t-\mathbf{u}_t) + (\hat{\mathbf{u}}_t-\mathbf{u}_t)^\top (\hat{\theta}_t^\pi-\theta_t) \right].
\end{equation*}
Under the common weak-signal condition \citep{sun2024arma,farias2022markovian,Tang2019}-formalized in Assumption~\ref{asmp:weak_signal}-the second-order remainder is of higher order and can be neglected. Consequently, under the linear model with the OLS estimator, the MSE of $\widehat{\text{ATE}}$ reduces to its variance term
\[
\mathrm{MSE}
=
\mathrm{Var}(\widehat{\ATE})
=
\mathrm{Var}\!\left(
\frac{1}{T}\sum_{t=1}^T \bu_t^\top \hat{\theta}_t^\pi
\right),
\]
where $ 
\bu_t
=
(
\mathbb{E}^{(1)}X_t^\top+\mathbb{E}^{(-1)}X_t^\top,\ 
\mathbb{E}^{(1)}X_t^\top-\mathbb{E}^{(-1)}X_t^\top
)^\top.
$  
A direct calculation yields the decomposition
\begin{equation*}
\begin{aligned}
\mathrm{Var}(\widehat{\ATE})
&=
\frac{1}{T^2}
\sum_{t_1=1}^T\sum_{t_2=1}^T
\bu_{t_1}^\top
\Bigl[
\frac{1}{N}\sum_{i=1}^N Z_{i,t_1}Z_{i,t_1}^\top
\Bigr]^{-1}
\Bigl(
\frac{1}{N^2}\sum_{i=1}^N
Z_{i,t_1}Z_{i,t_2}^\top
\,\sigma_e^2(t_1,t_2)
\Bigr)
\Bigl[
\frac{1}{N}\sum_{i=1}^N Z_{i,t_2}Z_{i,t_2}^\top
\Bigr]^{-1}
\bu_{t_2}.
\end{aligned}
\end{equation*}
Under Assumption~\ref{asmp:error_dynamic_independent}, the cross-time covariance
vanishes, and the variance term simplifies to
\[
\frac{1}{NT^2}
\sum_{t=1}^T
\bu_t^\top G_t^{-1}\bu_t\,\sigma_t^2,
\]
where
$G_t=\frac{1}{N}\sum_{i=1}^N Z_{it}Z_{it}^\top$
and $\sigma_t^2=\mathrm{Var}(e_t)$. Consequently, we can obtain the exact conditional MSE under the linear model. 

\textbf{{Step 2: Derivation of an upper bound \eqref{eq:multi_stage_mse_ub} for the MSE under the model misspecification}}. 
Note that the plug-in estimator is given by $\widehat{\ATE}({\pi})=T^{-1}\sum_{t=1}^T\widehat{\bu}_t \widehat{\theta}_t^\pi$. Under the Assumptions \ref{asmp:error_dynamic_independent} and \ref{asmp:weak_signal}, the MSE of $\widehat{\ATE}({\pi})$ is given by 
\begin{equation}\label{eqn:ate_hat_pi}
\begin{aligned}
\mathrm{MSE}(\widehat{\ATE}(\pi))
&=\frac{1}{NT^2}
\sum_{t=1}^T
\bu_t^\top G_t^{-1}\bu_t\,\sigma_t^2
+
\left(
\frac{1}{T}\sum_{t=1}^T
\bu_t^\top G_t^{-1}
\frac{1}{N}\sum_{i=1}^N Z_{it} f_t^{\pi}(X_{it})
\right)^2.
\end{aligned}
\end{equation}
Next, we bound the bias term on the right-hand side of \eqref{eqn:ate_hat_pi}.
By Assumption~\ref{asmp:multi_sieves_approx_nondynamic}, together with the
Cauchy--Schwarz inequality and the linear sieve approximation
$f_t^{\pi}(x)\approx (\mathbf{b}_t^{\pi})^\top\Psi(x)$, we obtain
\begin{equation*}
 \begin{split}
      &\textrm{The second term on the right-hand side of \eqref{eqn:ate_hat_pi}} \\
      &\leq \frac{1}{T} \sum_{t=1}^T \left[
    \frac{1}{N}\sum_{i=1}^N Z_{it}^\top  f_t^{\pi}(X_{it})
    \right] G_t^{-1}\bu_t \bu_t^\top G_t^{-1}\left[
    \frac{1}{N}\sum_{i=1}^NZ_{it} f_t^{\pi}(X_{it})
    \right] \\&
    \approx \frac{1}{T} \sum_{t=1}^T (\mathbf{b}_t^{\pi})^\top \underbrace{\left[
    \frac{1}{N}\sum_{i=1}^N  \Psi(X_{it}) Z_{it}^\top  
    \right]}_{H_t} G_t^{-1}\bu_t \bu_t^\top G_t^{-1} \underbrace{\left[
    \frac{1}{N}\sum_{i=1}^NZ_{it} \Psi^\top (X_{it})
    \right]}_{H_t^\top } \mathbf{b}_t^{\pi},
 \end{split}
\end{equation*}
where
$ 
H_t:=N^{-1}\sum_{i=1}^N \Psi(X_{it})Z_{it}^\top.
$ 
Moreover, by the orthogonality condition
$\mathbb{E}_{X_t}[X_t f_t^{\pi}(X_t)]=0$, we have
\[
\mathbb{E}_{X_t}\!\left[
\frac{1}{N}\sum_{i=1}^N X_{it} f_t^{\pi}(X_{it})
\right]
=0.
\]
Define
\[
\bPsi_t=
\begin{pmatrix}
\Psi(X_{1t})^\top\\
\vdots\\
\Psi(X_{Nt})^\top
\end{pmatrix}\in\mathbb{R}^{N\times L},
\qquad
\X_t=
\begin{pmatrix}
X_{1t}^\top\\
\vdots\\
X_{Nt}^\top
\end{pmatrix}\in\mathbb{R}^{N\times p},
\]
and then 
\[
\C_t^\pi  
:=
\mathbb{E}_{X_t}\!\left[
\frac{1}{N}\bPsi_t^\top\X_t
\right]\in\mathbb{R}^{L\times p}.
\]

Following the orthogonality analysis in Section~\ref{subsec:rob_orthongonal}, and under
Assumption~\ref{asmp:multi_sieves_approx_nondynamic},
the sieve coefficient vector admits the representation
$\mathbf{b}_t^{\pi}\approx \eta_t\,{\mathbf U}_t^\pi \bkappa_t^\pi $,
with $\|\bkappa_t^\pi \|_2\le1$. For simplicity, we suppress the superscript $\pi$ in what follows.
Consequently,
\[
      \textrm{The second term on the right-hand side of \eqref{eqn:ate_hat_pi}}
\le
\frac{1}{T}\sum_{t=1}^T
\eta_t^2
\bigl\| {\mathbf U}_t^\top H_t G_t^{-1}\bu_t
\bigr\|_2^2.
\]
Substituting the upper bound into \eqref{eqn:ate_hat_pi} yields \eqref{eq:multi_stage_mse_ub}.

\textbf{Proof of Theorem \ref{thm:bellman_across_day}}. To establish the across-day Bellman recursion, we first formulate the robust design problem as a day-level, time-dependent MDP with a finite horizon. We then invoke the Bellman optimality equations for time-dependent MDPs to complete the proof.
\begin{proof}
We view the across-day design problem as a finite-horizon, time-dependent MDP indexed by days $i=0,1,\ldots,N$.

\textbf{State.} The day-level state at the end of day $i$ is
$\mathcal S_i=\{(B_{it},C_{it},D_{it},F_{it})\}_{t=1}^T$, which aggregates all cross-day sufficient statistics required by the terminal objective $\mathcal Q_N(\mathcal S_N)$, where $(B_{it},C_{it},D_{it},F_{it})$ is defined in Section \ref{sec:mdp-setup} and $\mathcal Q_N(\mathcal S_N)$ is the MSE upper bound in \eqref{eq:multi_stage_mse_ub}.

\textbf{Action.} At the beginning of day $i+1$, the decision maker chooses a within-day policy
$\pi\in\Pi$, where $\pi=\{\pi_t\}_{t=1}^T$ and each $\pi_t$ maps
$(\mathcal S_i,\mathcal H_{i+1,t})$ to $\{\pm1\}$, where $\mathcal{H}_{i,t}=\{X_{i,1:t},A_{i,1:(t-1)}\}$ denotes the information available at time $t$ within day $i$. Conditional on $(\mathcal S_i,\pi)$ and $\mathcal{H}_{i+1,t}$, the within-day actions $A_{i+1,t}$ are generated sequentially by $\pi$.

\textbf{Transition.} 
Given a realized covariate trajectory $X_{i+1,1:T}$ and a policy
$\pi\in\Pi$, define the one-day increment $\Phi(X_{i+1,1:T},\pi)$:
\[
\{(X_{i+1,t}X_{i+1,t}^\top,A_{i+1,t}X_{i+1,t}X_{i+1,t}^\top,
\Psi(X_{i+1,t})X_{i+1,t}^\top,A_{i+1,t}\Psi(X_{i+1,t})X_{i+1,t}^\top)\}_{t=1}^T,
\]
and $A_{i+1,t}$ is generated sequentially by $\pi$ based on
$(\mathcal S_{i},\mathcal H_{i+1,t})$. The state evolves as
$\mathcal S_{i+1}=\mathcal S_i\oplus \Phi(X_{i+1,1:T},\pi)$, where $\oplus$ denotes component-wise addition.
Under the assumed across-day sampling scheme for $X_{i+1,1:T}$, the induced transition kernel of $\mathcal S_{i+1}$ depends only on $(\mathcal S_i,\pi)$, hence $\{\mathcal S_i\}$ is Markov at the day level.

\textbf{Terminal reward.} There is no per step reward. The terminal reward is $\mathcal Q_N(\mathcal S_N)$.

Given a realized covariate trajectory $X_{i,1:T}$ and a policy
$\pi\in\Pi$, define the one-day increment $\Phi(X_{i,1:T},\pi)$:
\[
\{(X_{it}X_{it}^\top,A_{it}X_{it}X_{it}^\top,
\Psi(X_{it})X_{it}^\top,A_{it}\Psi(X_{it})X_{it}^\top)\}_{t=1}^T,
\]
and $A_{it}$ is generated sequentially by $\pi$ based on
$(\mathcal S_{i-1,t},\mathcal H_{i,t})$.
Therefore, by the Bellman optimality principle for finite-horizon (time-dependent) MDPs, see the Theorem 1.9 in \citet{agarwal2019reinforcement}, the value functions $\{\mathcal Q_i\}_{i=0}^{N-1}$ satisfy
\[
\mathcal Q_i(\mathcal S_i)
=
\mathbb E\Big[\inf_{\pi\in\Pi}
\mathcal Q_{i+1}\big(\mathcal S_i\oplus \Phi(X_{i+1,1:T},\pi)\big)\Big],
\quad i=N-1,\ldots,0,
\]
and there exists an optimal policy sequence $\{\pi_i^\star\}_{i=1}^N$ attaining the infimum at each state. This establishes the Bellman recursion and completes the proof.
\end{proof}

\textbf{Proof of Theorem \ref{prop:within_day_bellman}}. Based on the formulation of \textbf{Within-day RL} in Section~\ref{sec:mdp-setup}, we can similarly define a finite-horizon MDP for within-day data generation. The result then follows by an argument analogous to the proof of Theorem~\ref{thm:bellman_across_day}, and we omit the proof for brevity.

\section{Implementation Details.} \label{sec:appendix_implement_details}
In this section, we provide the implementation details for the main text, including all algorithms and their corresponding explanations.

\subsection{Details for Algorithms~\ref{alg:sequential_robust1} and \ref{alg:sequential_robust1_opt_act}}\label{subsec:appendix_algo1}


To summarize our carefully designed  procedure for contextual bandits without interactions, we make the following remarks on Algorithms~\ref{alg:sequential_robust1} and~\ref{alg:sequential_robust1_opt_act}.

\begin{itemize}[leftmargin=*]
\item \textbf{The behavior policy $\pi_b$.} In the $5$th Line of Algorithm\ref{alg:sequential_robust1}, our goal is to ensure sufficient coverage of the state space of the imbalance statistics $(\Delta_n,\Gamma_n)$, so that the Q-function can be accurately approximated over the relevant region. In principle, for each synthetic sample $b$, one may consider all
$n+1$ action sequences of length $n$ that contain exactly
$k\in\{0,\ldots,n\}$ entries equal to $+1$ and $n-k$ entries equal to $-1$.
Under i.i.d.\ covariates and exchangeability of action sequences, these
configurations fully characterize the range of the action-imbalance statistic
\[
\sum_{i=1}^n a_i = 2k-n.
\]
Thus, the total number of distinct realizations in this step is of
order $\mathcal{O}(nB)$.
For simplicity of presentation, Algorithm~\ref{alg:sequential_robust1}
constructs only $B$ representative samples.
For each such sample, the associated imbalance statistics
$\Delta_n^{(b)}$ and $\Gamma_n^{(b)}$ are computed based on the corresponding
action sequence. In fact, for each synthetic sample $b$, we implicitly consider all action
assignments with different imbalance levels indexed by $k=0,\ldots,n$, in order to enrich the variability
of the training samples. Overall, to ensure sufficient exploration of the representation space, we use two strategies. First, the behavior policy $\pi_b$ is designed to span a wide range of imbalance levels. Second, given $\pi_b$, we generate $B$ independent synthetic rollouts. As $B$ increases, the resulting dataset covers a richer set of $(\Delta_n, \Gamma_n)$ values. In our experiments, we set $B$ on the order of thousands to ensure adequate exploration.

\item \textbf{Monte Carlo average.} To approximate the expectation in~\eqref{eq:thm1_4}, during the training of the $n$-th Q-value network ${Q}_n$ for $1 \leq n <N$, we draw a shared set of samples $\{X_{n+1}^{(m)}\}_{m=1}^M$ from $\mathcal{F}_X$. 

\item \textbf{The calculation of the terminal value function.}
These priors $(\Sigma,\Xi,{\bU},\nu^2=\eta^2/\sigma^2)$ are necessary when $n=N-1$, because the objective function can then be rewritten explicitly as
\[ 
Q_N(\Delta_N,\Gamma_N)
\ \propto\
\frac{1}{\,N-\frac{1}{N}\Delta_N^\top\Sigma^{-1}\Delta_N \,}
+
\frac{\nu^2 \,\bigl\|{\bU}^\top (\Gamma_N
- \Xi\Sigma^{-1}\Delta_N)
\bigr\|_2^2}
{\Bigl(N-\frac{1}{N}\Delta_N^\top\Sigma^{-1}\Delta_N\Bigr)^2}.
\]
A larger $\nu^2$ assigns more weight to the impact of model misspecification, and vice versa. 
Equivalently, $\nu^2$ can be interpreted as a robustness (penalty) parameter. 
The matrices $\Sigma$, ${\bU}$, and $\Xi$ can be pre-estimated from a large collection of empirical covariate observations.
\item \textbf{Recursive training procedure.} We train the Q-networks backward in $n$: training ${Q}_{n-1}$ relies on the already trained ${Q}_{n}$. During training, extreme action allocations can lead to fitted Q-values with substantial outliers and unusually large magnitudes. To stabilize learning, we exploit the monotonicity of the logarithm and its tendency to produce more Gaussian-like targets. Starting from stage $N-1$ and proceeding recursively, we apply a log transformation to the original Q-function targets and train the Q-networks on the transformed values, which improves the quality and stability of the fit.

    \item \textbf{Solving for Optimal Actions}. After collecting the fitted Q value-networks $\{ {Q}_{n}\}_{n=1}^{N-1} \cup Q_N$ by the Algorithm \ref{alg:sequential_robust1}, it can be used to dynamically make decisions for each day via Algorithm \ref{alg:sequential_robust1_opt_act}. 
    For $n=1, \cdots, N-1$, we have 
\begin{equation*}
	a_n^* \in \argmin_{a \in \{-1,1\}}{Q}_n( \Delta_n(\ba_{1:(n-1)}^*,a ),\Gamma_n(\ba_{1:(n-1)}^*,a)), 
\end{equation*}
where 
$\Delta_n(\ba_{1:(n-1)}^*,a)=\sum_{k=1}^{n-1} a_k X_{k} + a X_{n} $,
$\Gamma_{n}(\ba_{1:(n-1)}^*,a)=\sum_{k=1}^{n-1} a_k^* \Psi(X_{k}) + a \Psi(X_{n}) $,
if $n=1$, then $\ba_{1:0}^*=0$. And 
\begin{equation*}
	a_N^*\in \argmin_{a \in \{-1,1\}} {Q}_N( \Delta_N(\ba_{1:(N-1)}^*,a ),\Gamma_N(\ba_{1:(N-1)}^*,a)).
\end{equation*}
Therefore, we can collect a sequence of data under the proposed optimal decision mechanism, $\{ X_n, a_n^*, Y_n\}_{n=1}^N$, which then can be used to estimate ATE via OLS estimate.
\end{itemize}

\subsection{Algorithm for Contextual Bandits with Interactions}\label{subsec:appendix_algo2}
We summarize the proposed procedure for handling treatment--covariate
interactions in Algorithm~\ref{alg:sequential_robust2}. We begin by defining the key imbalance statistics used in the model:
\begin{align}
\Omega_{1,n} = \sum_{i=1}^{n} a_i X_i X_i^\top, ~
\Omega_{2,n} = \sum_{i=1}^{n} a_i X_i \Psi^\top(X_i).
\label{eqn:algorithm2}
\end{align}

\begin{algorithm}[H]
\caption{Training $\mathcal{U}_n$ via Synthetic Rollouts}
\label{alg:sequential_robust2}
\begin{algorithmic}[1] 
\STATE \textbf{Input:} 
A covariate distribution $\mathcal{F}_X$ (or its empirical version using historical data), a behavior policy $\pi_b$, number of rollouts $B$, value function $\mathcal{U}_{n+1}$.
    \STATE Sample covariates $\{X_{n+1}^{(m)}\}_{m=1}^{M}$ from $\mathcal{F}_X$.
\FOR{$b=1,\ldots,B$}
\STATE Sample $\{X_i^{(b)}\}_{i=1}^{n}$ from $\mathcal{F}_X$.
        \STATE Sample $\{a_i^{(b)}\}_{i=1}^{n}\in\{-1,1\}^{n}$ from $\pi_b$.  
        
        \STATE Calculate $\Omega_{1,n}^{(b)},~
\Omega_{2,n}^{(b)}$ by \eqref{eqn:algorithm2}.
        \STATE Calculate the target by Monte Carlo averaging:
        $$
        \mathcal{J}_n^{(b)}=\frac{1}{M}\sum_{m=1}^M \min_{a\in\{-1,1\}}
       {\mathcal{U}}_{n+1} \!\Big(\Omega_{1,n}^{(b)}\!+\!aX_{n+1}^{(m)}(X_{n+1}^{(m)})^\top,\ \Omega_{2,n}^{(b)}\!+\!aX_{n+1}^{(m)}\Psi^\top(X_{n+1}^{(m)})\Big). $$
\ENDFOR
\STATE Fit a DNN regressor on $\{(\Omega_{1,n}^{(b)},\Omega_{2,n}^{(b)},\mathcal{J}_n^{(b)}) \}_{b=1}^B$ to obtain ${\mathcal{U}}_{n}$.
\STATE \textbf{Output:} ${\mathcal{U}}_{n} $.
\end{algorithmic}
\end{algorithm}

The main procedure of Algorithm~\ref{alg:sequential_robust2} is very similar to that of Algorithm~\ref{alg:sequential_robust1}. We emphasize the following differences:
\begin{itemize}
\item \textbf{Objective.}
In Algorithm~\ref{alg:sequential_robust2}, the $N$-step objective function $\mathcal{U}_N$  is given exactly by
\[
\mathcal{U}_N \bigl(\Omega_{1,N}, \Omega_{2,N}\bigr)
 \propto
\frac{2}{N}\,(\mathbb{E}X)^\top \bigl(H_1^\prime+H_2^\prime\bigr)\,(\mathbb{E}X)
\;+\;
\nu^2  \Bigl\|{\bU}^\top \bigl(H_1^\prime H_3^\prime - H_2^\prime H_4^\prime\bigr)^\top \Mean X\Bigr\|_2^2 ,
\]
where
\[
H_1^\prime=\Bigl( \Sigma+\frac{\Omega_{1,N}}{N}\Bigr)^{-1},\quad
H_2^\prime=\Bigl( \Sigma-\frac{\Omega_{1,N}}{N}\Bigr)^{-1},\quad
H_3^\prime=\Xi^\top +\frac{\Omega_{2,N}}{N},\quad
H_4^\prime=\Xi^\top -\frac{\Omega_{2,N}}{N}.
\]
As noted earlier, this objective involves the second-order imbalance statistics rather than the first-order imbalance statistics. 

\item \textbf{The dimensions of inputs}.  Because the inputs $(\Omega_{1,n}, \Omega_{2,n})$ are $(p\times p)$, $(p \times L)$ matrices, respectively.  Although there exist neural networks designed for matrix-valued inputs that can improve data efficiency, for simplicity we vectorize these matrices and use each entry as an input feature. Consequently, the input dimension is $p^2 + pL$.
\end{itemize}

\subsection{Algorithm for Dynamic Settings}\label{subsec:appendix_algo_multi}

\textbf{Implementation Details of the Hierarchical Design Framework.}
To clarify the proposed \textbf{RSD} framework, which
integrates day-level dynamic programming with within-day reinforcement
learning, we present the corresponding pseudocode in
Algorithms~\ref{alg:DP_rl_last_N} and \ref{alg:DP_rl_loop_n}.

For clarity, we first present the policy learning algorithm for the $N$-th day, which isolates and illustrates the within-day reinforcement learning procedure. We then describe the policy learning algorithm for an arbitrary $n$th day $(n < N)$, highlighting how dynamic programming is performed across days.

\noindent\textbf{Algorithm for the last day.}
Consider the decision-making problem at the first time point of the $N$-th day.
By this time, the entire history from the previous $N-1$ days has been observed.
Importantly, however, the resulting MSE depends on the past
only through a collection of summary statistics, rather than the full trajectory history.
Specifically, for each time period $1 \le t \le T$, define
\begin{equation*}
    \mathcal{S}_{N-1,t}
    :=
    \bigl\{
        {\Sigma}_{N-1}^{(t)},\,
        \Omega_{1,N-1}^{(t)},\,
        \Xi_{N-1}^{(t)},\,
        \Omega_{2,N-1}^{(t)}
    \bigr\}.
\end{equation*}
These quantities are given by
\begin{equation*}
    \begin{split}
        &{\Sigma}_{N-1}^{(t)}
        =
        \frac{1}{N}\sum_{i=1}^{N-1} X_{it} X_{it}^\top,
        \qquad
        \Omega_{1,N-1}^{(t)}
        =
        \frac{1}{N}\sum_{i=1}^{N-1} A_{it} X_{it} X_{it}^\top, \\
        &\Xi_{N-1}^{(t)}
        =
        \frac{1}{N}\sum_{i=1}^{N-1} \Psi(X_{it}) X_{it}^\top,
        \qquad
        \Omega_{2,N-1}^{(t)}
        =
        \frac{1}{N}\sum_{i=1}^{N-1} A_{it} \Psi(X_{it}) X_{it}^\top .
    \end{split}
\end{equation*}

Given these summary statistics, once the data $(X_{Nt}, A_{Nt})$ at time $t$
on the $N$-th day are observed, we can evaluate the contribution of time period $t$
to the overall objective across all days.
We define this contribution as the \emph{immediate reward}
\begin{equation}\label{eq:dprl_reward_lastN}
    R_{N,t}
    =
    -\frac{1}{T}
    \left\{
        \frac{\sigma_t^2}{N}\,
        \bu_t^\top G_t^{-1} \bu_t
        +
         \eta_t^2
        \bigl\|
            {\bU}_t^\top H_t G_t^{-1} \bu_t
        \bigr\|_2^2
    \right\}.
\end{equation}
Here,
\begin{align}
    G_t
    =
    \frac{1}{N}\sum_{i=1}^{N} Z_{it} Z_{it}^\top =
    \begin{pmatrix}
        {\Sigma}_{N-1}^{(t)} + \frac{1}{N} X_{Nt} X_{Nt}^\top
        &
        \Omega_{1,N-1}^{(t)} + \frac{1}{N} A_{Nt} X_{Nt} X_{Nt}^\top \\
        \Omega_{1,N-1}^{(t)} + \frac{1}{N} A_{Nt} X_{Nt} X_{Nt}^\top
        &
        {\Sigma}_{N-1}^{(t)} + \frac{1}{N} X_{Nt} X_{Nt}^\top
    \end{pmatrix},\nonumber
\end{align}
and
\begin{align}
    H_t
    =
    \frac{1}{N}\sum_{i=1}^{N} \Psi(X_{it}) Z_{it}^\top =
    \begin{pmatrix}
        \Omega_{2,N-1}^{(t)} + \frac{1}{N} A_{Nt} \Psi(X_{Nt}) X_{Nt}^\top
        &
        \Xi_{N-1}^{(t)} + \frac{1}{N} \Psi(X_{Nt}) X_{Nt}^\top
    \end{pmatrix}.\nonumber
\end{align}
Moreover, ${\bU}_t$ denotes a matrix whose columns form an orthonormal basis
for the orthogonal complement of the column space
$\C_t^\pi = \mathbb{E}_{X_t}^\pi \!\left[ \frac{1}{N} \bPsi_t^\top \X_t \right]$.
We note that the cumulative reward $\sum_{t=1}^{T} R_{N,t}$
coincides with the objective function to be optimized on the last day.

Finally, note that the reward $R_{N,t}$ depends only on
$\mathcal{S}_{N-1,t}$ and the information observed
on the $N$-th day up to time $t$.
Accordingly, we define the state at time $t$ as
\begin{equation}\label{eq:dprl_state_lastN}
    S_{N,t}
    =
    \bigl\{
        \mathcal{S}_{N-1,t},
        \underbrace{X_{N,1}, A_{N,1},
        \ldots,
        X_{N,t}, \varphi(t)}_{\mathcal{H}_{N,t}}
    \bigr\}.
\end{equation} 
Here, $\varphi(t)$ denotes the time-varying exogenous features.

The algorithm is summarized in Algorithm~\ref{alg:DP_rl_last_N} and it is worth making the following additional clarifications.
\begin{itemize}
    \item \textbf{Behavior-policy class $\pi_b$.}
The past-summary feature $\mathcal{S}_{N-1,t}$ is constructed from the dataset collected over the previous $(N-1)$ days, $\{(X_{it},A_{it}): 1\le i\le N-1,\ 1\le t\le T\}$, under a behavior-policy class $\pi_ b$. In principle, one would like $\pi_ b$ to cover as many action-allocation policies as possible. However, enumerating all such policies is prohibitively large and unrealistic. Moreover, certain extreme treatment schemes are clearly undesirable in practice, as they can lead to ill-conditioned (or even non-invertible) matrices in the objective and hence numerical instability, similar to issues encountered in the single-stage setting. Therefore, we restrict $\pi_ b$ to a collection of simple and commonly used policy classes, which include uniform random, switchback mechanisms and so on.
\item \textbf{The RL state of $N$th day.} 
Our RL state representation incorporates historical information from the past $(N-1)$ days, the current-day information, and time-varying exogenous features. This enriched state specification increases the likelihood that the Markov property holds (i.e., that the problem can be well approximated by an MDP).
\item \textbf{The RL reward}. For each $t$, our instantaneous RL reward depends on information from the previous $(N-1)$ days; thus, we incorporate the past-summary feature $\mathcal{S}_{N-1,t}$ in its computation.
\end{itemize}

\begin{algorithm}[t]
\caption{Policy learning on the final day via RL}
\label{alg:DP_rl_last_N}
\begin{algorithmic}[1]
\STATE \textbf{Input:} number of epochs $K$; number of rollouts $B$; behavior-policy class $\pi_b$; design priors $\{(\bu_t,\sigma_e^2(t,t))\}_{t=1}^T$; initialized policy network $\pi_N$; transition model $\{\mathcal{P}_t(\cdot \mid x,a)\}_{t=2}^T$; initial distribution $\mathcal{P}_1(\cdot)$ for $X_{N,1}$.
\FOR{$k = 1,\ldots,K$}
  \STATE Generate $B$ datasets for the past $N-1$ days under $\pi_b$:
  $
    \{(X_{i t}^{(b)}, A_{i t}^{(b)}): 1\le i\le N-1, 1\le t\le T\}_{b=1}^{B}.
  $
  \FOR{$b = 1,\ldots,B$}
  \STATE Sample $X_{N,1}^{(b)} \sim \mathcal{P}_1(\cdot)$ and set $S_{N,1}^{(b)}$ by \eqref{eq:dprl_state_lastN}.
  \STATE Sample action $A_{N,1}^{(b)} \sim \pi_N(\cdot \mid S_{N,1}^{(b)})$ and compute $R_{N,1}^{(b)}$ via \eqref{eq:dprl_reward_lastN}.
    \FOR{$t = 2,\ldots,T$}      
        \STATE Sample $X_{N,t}^{(b)} \sim \mathcal{P}_t(\cdot \mid X_{N,t-1}^{(b)}, A_{N,t-1}^{(b)})$ and set $S_{N,t}^{(b)}$ by \eqref{eq:dprl_state_lastN}.
       
      \STATE Sample action $A_{N,t}^{(b)} \sim \pi_N(\cdot \mid S_{N,t}^{(b)})$ and compute $R_{N,t}^{(b)}$ from $(S_{N,t}^{(b)}, A_{N,t}^{(b)})$ via \eqref{eq:dprl_reward_lastN}.
    \ENDFOR
  \ENDFOR

  \STATE Update the policy network $\pi_N$ using the collected batch data via the A2C method.
\ENDFOR
\STATE \textbf{Output:} Trained policy $\pi_N^* \gets \pi_N$.
\end{algorithmic}
\end{algorithm}

\noindent\textbf{Algorithm for the $n$-th day.}
We now consider the decision-making problem on the $n$-th day, which integrates
RL within each day and DP across days.
By this stage, the full history from the previous $n-1$ days has been observed.
Suppose we aim to learn a policy $\pi_n$ for the $n$-th day.
Importantly, the resulting MSE depends on the past
only through a set of summary statistics,
\[
\mathcal{S}_{n-1,t}
=
\bigl\{
{\Sigma}_{n-1}^{(t)},
\Omega_{1,n-1}^{(t)},
\Xi_{n-1}^{(t)},
\Omega_{2,n-1}^{(t)}
\bigr\}.
\]

Consequently, decisions at time $t$ can only be based on this summarized past information
together with the observations collected on the $n$-th day up to time $t$.
We therefore define the state at time $t$ as
\begin{equation}\label{eq:dprl_state_nday}
    S_{n,t}
    =
    \bigl\{
       \mathcal{S}_{n-1,t},
     \underbrace{   X_{n,1}, A_{n,1},
        \ldots,
        X_{n,t}, \varphi(t)}_{:=\mathcal{H}_{n,t}}
    \bigr\}.
\end{equation}
Here, $\varphi(t)$ denotes the time-varying exogenous features.

However, to evaluate the contribution of the current action $A_{nt}$ at time $t$
to the overall objective, it is necessary to account for its impact on the future
$N-n$ days. This is precisely where DP comes into play.
Specifically, we fix the policies for future days at their optimal values
$\{\pi_i^*\}_{i=n+1}^N$ and generate future data accordingly.
Conditioned on both the realized trajectory under $\pi_n$
and the simulated trajectories induced by the future policies,
we can compute the immediate reward
\begin{equation}\label{eq:dprl_reward_nday}
    R_{n,t}
    =
    -\frac{1}{T}
    \left\{
        \frac{\sigma_t^2}{N}\,
        \bu_t^\top G_t^{-1} \bu_t
        +
         \eta_t^2
        \bigl\|
            {\bU}_t^\top H_t G_t^{-1} \bu_t
        \bigr\|_2^2
    \right\}.
\end{equation}
Here,
\begin{align}
    G_t
    =
    \frac{1}{N}\sum_{i=1}^{N} Z_{it} Z_{it}^\top  =
    \begin{pmatrix}
        {\Sigma}_{n-1}^{(t)} + \frac{1}{N}\sum_{i=n}^{N} X_{it} X_{it}^\top
        &
        \Omega_{1,n-1}^{(t)} + \frac{1}{N} \sum_{i=n}^{N} A_{it} X_{it} X_{it}^\top \\
        \Omega_{1,n-1}^{(t)} + \frac{1}{N}\sum_{i=n}^{N} A_{it} X_{it} X_{it}^\top
        &
        {\Sigma}_{n-1}^{(t)} + \frac{1}{N}\sum_{i=n}^{N} X_{it} X_{it}^\top
    \end{pmatrix},\nonumber
\end{align}
and
\begin{align}
    H_t
    =
    \frac{1}{N}\sum_{i=1}^{N} \Psi(X_{it}) Z_{it}^\top =
    \begin{pmatrix}
        \Omega_{2,n-1}^{(t)} + \frac{1}{N} \sum_{i=n}^{N} A_{it} \Psi(X_{it}) X_{it}^\top
        &
        \Xi_{n-1}^{(t)} + \frac{1}{N}\sum_{i=n}^{N} \Psi(X_{it}) X_{it}^\top
    \end{pmatrix}.\nonumber
\end{align}
Moreover, ${\bU}_t$ denotes a matrix whose columns form an orthonormal basis
for the orthogonal complement of the column space
$\C_t^\pi
=
\mathbb{E}_{X_t}^\pi\!\left[\frac{1}{N}\bPsi_t^\top \X_t\right]$.

Finally, the cumulative reward $\sum_{t=1}^{T} R_{n,t}$
corresponds to the objective function to be optimized for the $n$-th day.

We summarize the overall procedure in Algorithm~\ref{alg:DP_rl_loop_n}
and provide the following additional clarifications.
\begin{itemize}
  \item When $n=1$, we set $\mathcal{S}_{0,t}=\emptyset$,
  and the state reduces to $S_{1,t}=\{\mathcal{H}_{1,t}\}$ for all $t$.

  \item When training the policy $\pi_n$, the immediate reward in
  \eqref{eq:dprl_reward_nday} depends on three components:
  (i) past information summarized by $\mathcal{S}_{n-1,t}$,
  (ii) data generated by the current rollout under $\pi_n$, and
  (iii) information induced by the future optimal policies
  $\{\pi_i^*\}_{i=n+1}^N$.
  This decomposition reflects the core DP principle.
\end{itemize}
\begin{algorithm}[H]
\caption{Policy learning for day $n$ via RL and DP}
\label{alg:DP_rl_loop_n}
\begin{algorithmic}[1]
\STATE \textbf{Input:} number of epochs $K$; number of rollouts $B$; behavior-policy class $\pi_b$; design priors $\{(\bu_t,\sigma_e^2(t,t))\}_{t=1}^T$; initialized policy network $\pi_n$; transition model $\{\mathcal{P}_t(\cdot \mid x,a)\}_{t=2}^T$; initial distribution $\mathcal{P}_1(\cdot)$ for $X_{n,1}$; learned future-day policies $\{\pi_i^*\}_{i=n+1}^{N}$.
\FOR{$k = 1,\ldots,K$}
  \STATE Generate $B$ datasets for the past $n-1$ days under $\pi_b$:
  $
   \{(X_{i t}^{(b)}, A_{i t}^{(b)}): 1\le i\le n-1, 1\le t\le T\}_{b=1}^{B}.
  $
  \FOR{$b = 1,\ldots,B$}
  \STATE Sample $X_{n,1}^{(b)} \sim \mathcal{P}_1(\cdot)$ and set $S_{n,1}^{(b)}$ via \eqref{eq:dprl_state_nday}.
  \STATE Sample action $A_{n,1}^{(b)} \sim \pi_n(\cdot \mid S_{n,1}^{(b)})$. 
    \FOR{$t = 2,\ldots,T$}
        \STATE Sample $X_{n,t}^{(b)} \sim \mathcal{P}_t(\cdot \mid X_{n,t-1}^{(b)}, A_{n,t-1}^{(b)})$ and set $S_{n,t}^{(b)}$ via \eqref{eq:dprl_state_nday}.
        \STATE Sample action $A_{n,t}^{(b)} \sim \pi_n(\cdot \mid S_{n,t}^{(b)})$.
    \ENDFOR
     \STATE \emph{Generate future $(N-n)$ days' datasets
      $\{(X_{i t}^{(b)}, A_{i t}^{(b)}): n+1\le i\le N,1\le t\le T\}$ 
      using policies $\{\pi_i^*\}_{i=n+1}^{N}$.}
      \STATE Compute immediate rewards $\{R_{n,t}^{(b)}\}_{t=1}^{T}$  via \eqref{eq:dprl_reward_nday}.
  \ENDFOR
  \STATE Update the policy network $\pi_n$ using the collected batch data via the A2C method.
\ENDFOR
\STATE \textbf{Output:} Trained policy $\pi_n^* \gets \pi_n$.
\end{algorithmic}
\end{algorithm}

\section{Additional Experiments}
\label{sec:appendix_additional_experments}
In this section, we provide additional details for Section~\ref{sec:main_exps}, including the DGP and additional experimental results.


\subsection{Contextual Bandits (Continued)}\label{sec:appendix_additional_experments1}

\noindent\textbf{DGPs.} 
We assume the covariates follow a multivariate distribution where $X_{i,1}\equiv 1$, and the continuous covariates $(X_{i,2},X_{i,3})^\top$ are drawn i.i.d. from $\mathcal{N}_2(\mathbf{0},\Sigma)$ with a covariance matrix whose diagonal entries are equal to one and whose off-diagonal entries are 0.3. The noise terms $e_i$ are drawn from $\mathcal{N}(0,\sigma^2)$ with $\sigma^2=1$. The treatment is $A_i \in \{\pm 1\}$.

We consider two setups for the outcome $Y_i$:

\textbf{Setup 1:}
$Y_i = A_i + \mu(X_i) + e_i,$ 
where the baseline mean function is defined as:
\[
\mu(X_i) = 1.2 - 1.4X_{i,2} + 0.8X_{i,3} + \sum_{j=2}^3 \left[ 2\sin(X_{i,j}+0.5) + 1.5\cos(X_{i,j}^2-0.5) \right].
\]

\textbf{Setup 2:}
$Y_i = A_i + 0.6X_{i,2}A_{i} - 0.5X_{i,3}A_{i} + \mu(X_i) + e_i.$ 

We utilize a Legendre polynomial basis  $[-1,1]$:
$ 
\Psi(x)=\bigl(P_0(x_1),\,P_1(x_1),\,P_2(x_1),\,P_3(x_1),\,
P_1(x_2),\,P_2(x_2),\,P_3(x_3)\bigr)^\top.
$ 

\paragraph{Experimental details.}
In the single-stage experiments, we set the sample size $M=5000$, the batch size $B=8000$, and the regularization parameter $\nu^2=0.005$. 

For the DNN, we employ a fully connected MLP architecture optimized for this task. The network consists of an input layer followed by four hidden layers with 512, 256, 128, and 64 units, respectively. Each hidden layer incorporates the following components in order: {Linear Transformation}; {Batch Normalization}; {Swish Activation}; {Dropout} with rates set to 0.1, 0.1, 0.08, and 0.05 for the respective layers. The final output layer is a single linear unit. 

Furthermore, we assume access to a large historical covariate dataset, allowing us to approximate the population moments ($\Sigma$, $\Xi$, ${\mathbf{U}}$) using their empirical counterparts.

\subsection{Dynamic Settings (Continued)}\label{sec:appendix_additional_experments3}
In this section, we provide detailed specifications for the experiments conducted in Section~\ref{subsec:multi_sim_exps}.
For each day $i$, the initial state $X_{i,2:4}$ is sampled from a standard trivariate normal distribution. At each time step $t$, a binary action $A_{i,t}$ is allocated. We explicitly note that the state $X_{i,t,2:4}$ is defined as a continuous random variable in $\mathbb{R}^3$, whereas the action $A_{i,t}$ takes values in the binary set $\{-1, 1\}$. Subsequently, the system evolves according to the outcome and state transition models described below. The resulting data is aggregated into a dataset $\mathcal{D} = \{(X_{i,t}, A_{i,t}, Y_{i,t})\}_{i=1, t=1}^{N, T}$. In our experiments, we consider time horizons $T\in\{6,12\}$ and vary the sample size $N \in \{21, 28, 35, 42\}$.

Specifically, the outcome $Y_{i,t}$ is generated according to:
\begin{equation}
    Y_{i,t} = b_t + \boldsymbol{\beta}_t^\top X_{i,t,2:4} + \gamma_t A_{i,t} + A_{i,t} \mathbf{c}_t^\top X_{i,t,2:4} + \delta \sum_{k=2}^4 \cos(X_{i,t,k}) + \epsilon_{i,t}, \quad t = 1, \dots, T.
\end{equation}
In this formulation, $b_t$ represents the intercept, while $\gamma_t$ and $\boldsymbol{\beta}_t \in \mathbb{R}^3$ denote the main effects of the action $A_{i,t}$ and the covariate vector $X_{i,t}$, respectively.
The vector $\mathbf{c}_t \in \mathbb{R}^3$ captures the linear interaction effect between the action and the covariate.
To assess robustness against model misspecification, we include a nonlinear term weighted by $\delta$, where $X_{i,t,k}$ is the $k$-th component of the time-varying covariate vector.
The parameter $\delta$ controls the intensity of this nonlinearity, and we vary $\delta \in \{1, 2, 3\}$ to correspond to low, medium, and high levels of misspecification.
Finally, the noise terms $\epsilon_{i,t}$ are independent and identically distributed (i.i.d.) across all samples and time steps, following a standard normal distribution $\mathcal{N}(0, 1)$.

The state transition dynamics from $t=1$ to $T-1$ are governed by:
\begin{equation}
    X_{i,t+1,2:4} = \boldsymbol{\phi}_{0,t} + \boldsymbol{\Phi}_t X_{i,t,2:4} + \boldsymbol{\alpha}_t A_{i,t} + A_{i,t} \boldsymbol{\Xi}_t X_{i,t,2:4} + \mathbf{E}_{i,t}.
\end{equation}
The term $\boldsymbol{\phi}_{0,t}$ represents the intercept, while $\boldsymbol{\Phi}_t \in \mathbb{R}^{3 \times 3}$ defines the baseline autoregressive dynamics.
The influence of the action is captured by two terms: $\boldsymbol{\alpha}_t \in \mathbb{R}^3$ represents the direct main effect, and the matrix $\boldsymbol{\Xi}_t \in \mathbb{R}^{3 \times 3}$ models the interaction between the state and action, allowing the action to modulate the transition mechanism.
Finally, $\mathbf{E}_{i,t}$ represents the transition noise, which is i.i.d.\ and follows a multivariate normal distribution $\mathcal{N}(\mathbf{0}, 1.5 \mathbf{I}_3)$.

\noindent\textbf{Parameters.}
We outline how the parameters for the outcome and transition dynamics are generated. To ensure parameter diversity, the scalar intercept $b_t$ and each entry of the intercept vector $\boldsymbol{\phi}_{0,t}$ are sampled from the uniform distribution over the disjoint union $\textrm{Uniform}([-1.0, -0.5] \cup [0.5, 1.0])$.
For the outcome model, the elements of the state coefficient vector $\boldsymbol{\beta}_t$ and the interaction vector $\mathbf{c}_t$ are sampled element-wise from $\textrm{Uniform}([-0.3, -0.1] \cup [0.1, 0.3])$ and $\textrm{Uniform}([-0.1, -0.05] \cup [0.05, 0.1])$, respectively.
Regarding the action effects, $\gamma_t$ is drawn from $\textrm{Uniform}[0.5, 0.8]$, while the components of $\boldsymbol{\alpha}_t$ are i.i.d.\ variables following $\mathcal{N}(0, 0.09)$.
Finally, the entries of the transition matrices $\boldsymbol{\Phi}_t$ and $\boldsymbol{\Xi}_t$ are drawn from $\textrm{Uniform}[-0.3, 0.3]$ and $\textrm{Uniform}[-0.2, 0.2]$, respectively. 

\noindent\textbf{Additional Results: Nonlinearity Degree $\delta$ and Time Horizon $T$.}
In Section~\ref{subsec:multi_sim_exps}, we evaluated our method under strong nonlinearity ($\delta=3$). Here, we further report results for $\delta=1$ and $\delta=2$ to assess performance across varying levels of nonlinearity and time horizons $T$.

As shown in Figure~\ref{fig_sim_dynamic_bias1_2}, the empirical MSE of all methods increases markedly as the nonlinearity becomes stronger (from small to moderate bias), reflecting the fact that the underlying ATE estimators are built on OLS-style linear modeling and thus become more vulnerable under nonlinear misspecification. Despite this degradation, our method remains the most stable and consistently achieves the lowest MSE across all horizons ($T=6,12$) and experiment lengths ($n=21,28,35,42$). This robustness stems from explicitly incorporating and controlling the bias induced by the nonlinear component, whereas competing designs do not directly guard against such misspecification.

Moreover, since the data-generating process is inherently an MDP, \textbf{TMDP} provides a strong baseline and performs competitively in several regimes. However, approaches that rely on (approximately) memoryless/randomized allocations do not fully exploit the sequential structure of the experiment to improve future assignments. In contrast, our sequential allocation strategy leverages accumulated information from past allocations and outcomes to optimize the remaining trajectory, yielding uniformly better performance than \textbf{TMDP}. 
\paragraph{Experimental details.} 
In the case of $T=6$, we set $B=20{,}000$, $K=300$, and $\eta_t=1.0$.
For the actor network, we use a two-layer MLP with GELU activations, where each hidden layer has $256$ units.
A linear projection is applied to map the final hidden representation to a scalar logit.
For the critic network, we share the same first two layers as the actor network, and then add an additional linear layer followed by a GELU activation.
Finally, we project to a scalar and apply a Softplus activation function.

\begin{figure*}[t]
	\centering
	\vspace{-1mm} 
	\begin{minipage}{0.45\linewidth}
		\centering
		\includegraphics[width=1\linewidth,height=0.35\linewidth]{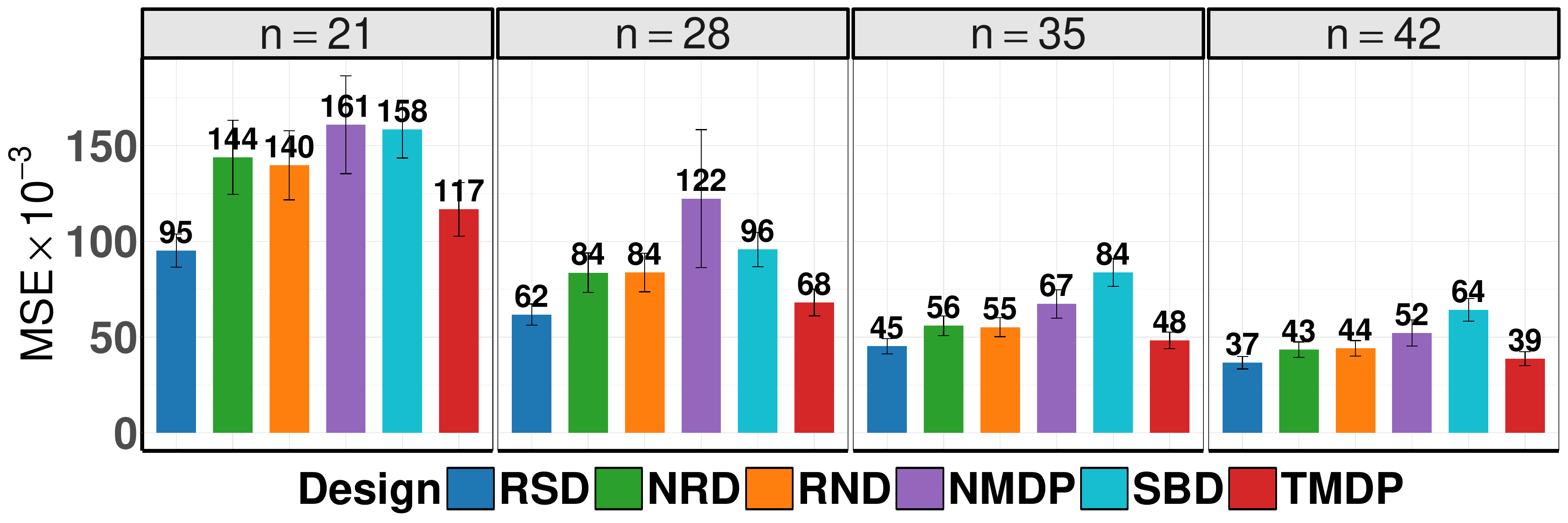}
	\end{minipage}
	\begin{minipage}{0.45\linewidth}
		\centering
		\includegraphics[width=1\linewidth,height=0.35\linewidth]{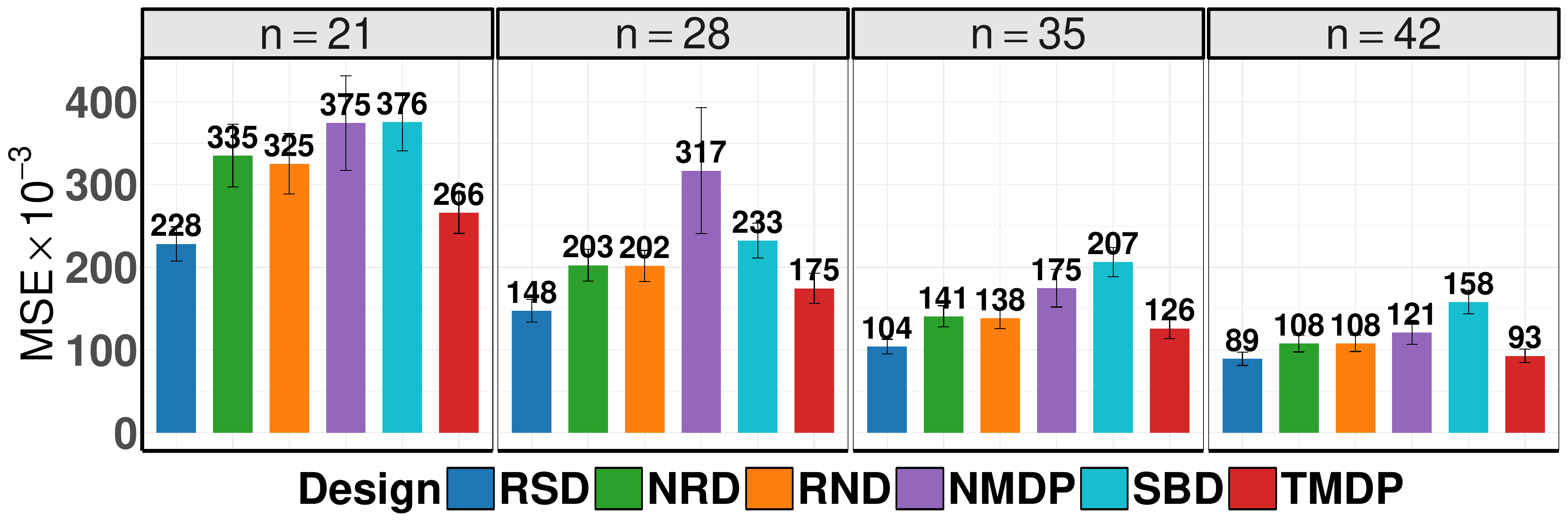}
	\end{minipage}
	\begin{minipage}{0.45\linewidth}
		\centering		\includegraphics[width=1\linewidth,height=0.35\linewidth]{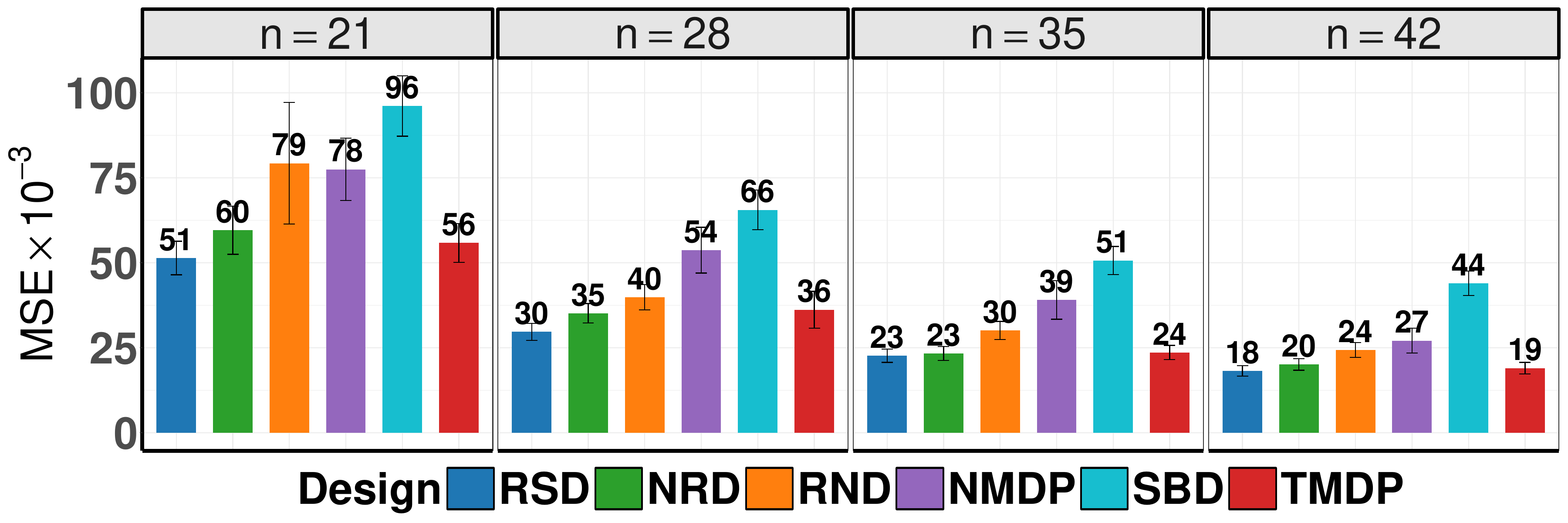}
	\end{minipage}
	\begin{minipage}{0.45\linewidth}
		\centering
		\includegraphics[width=1\linewidth,height=0.35\linewidth]{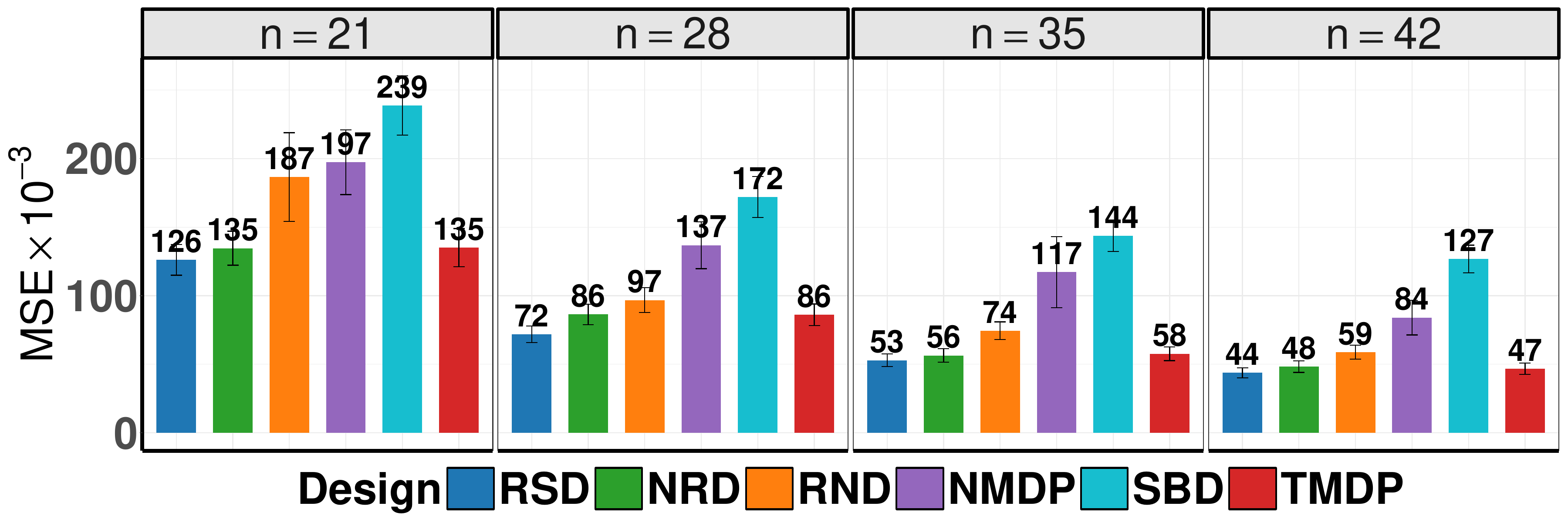}
	\end{minipage}
	\caption{Empirical MSE (95\% CI) under the dynamic settings:  with small bias, $T=6$ (top left), with moderate bias, $T=6$ (top right); with small bias, $T=12$ (bottom left), with moderate bias, $T=12$ (bottom right).
    }\label{fig_sim_dynamic_bias1_2}
\end{figure*}

\subsection{Real-Data-Based Simulation (Continued)}\label{sec:appendix_additional_experments2}
The A/A experiment spans $D=40$ days, during which a single order dispatch policy was employed throughout. While the trajectory data was originally recorded in 24 non-overlapping intervals per day, we aggregate these into a single observation per day to align with our i.i.d. setting. Including the intercept, the resulting covariate vector $X_d$ is three-dimensional, consisting of an intercept and two marketplace features: (i) the total number of order requests and (ii) the total driver online time during the previous day. These features capture the demand and supply dynamics of the two-sided marketplace, respectively. The outcome $Y_d$ is defined as the total driver income earned within the day.

\paragraph{Experimental details.} 
We first fit a linear model to the A/A data $\{Y_{d}, X_d\}_{d=1}^D$ using OLS to obtain the estimated coefficient $\widehat{\beta}$, the residuals $\{\widehat{e}_d\}_{d=1}^D$, and the error variance $\widehat{\sigma}^2$. Additionally, we compute the empirical moments of the covariates ($\Sigma$, $\Xi$, ${\mathbf{U}}$) based on $\{X_d\}_{d=1}^D$ to approximate the population moments.
We specify the treatment effect parameters as $\widehat{\gamma}=0.025 \times \bar{Y}$ and $\widehat{\xi}=0.0125 \times \bar{X}$, where $\bar{Y}$ and $\bar{X}$ denote the sample means.

To evaluate the performance of different designs, we generate synthetic data using a parametric bootstrap approach. Specifically, for each unit $i=1,\dots, N$, the outcomes are generated as follows:
\begin{equation*}
\begin{split}
    &\text{Without Interaction:~~}\widetilde{Y}_i=\widehat{\gamma} A_i+\widehat{\beta}^\top X_i + \mu^\prime(X_i)+\varepsilon_i, \\
    &\text{With Interaction:~~}\widetilde{Y}_i= A_iX_i^\top \widehat{\xi}+\widehat{\beta}^\top X_i + \mu^\prime(X_i)+\varepsilon_i,
\end{split}
\end{equation*}
where $\varepsilon_i \stackrel{i.i.d.}{\sim} \mathcal{N}(0, \widehat{\sigma}^2)$ and the covariates $X_i$ are resampled from the empirical distribution. To introduce stronger non-linearity beyond the fitted linear components, we incorporate a baseline mean function $\mu^\prime(X_i)$, defined as:
\[
\mu^\prime(X_i) = 1.2 - 1.4X_{i,2} + 0.8X_{i,3} + \sum_{j=2}^3 \left[ 2\sin(X_{i,j}+0.5) + 2\cos(X_{i,j}^2-0.5) \right].
\]
This setup allows us to accurately estimate the ground truth ATE via Monte Carlo simulation.

Furthermore, we set $M=12000$, $B=10000$, and $\nu^2=0.005$. For the DNN, we employ a custom-designed 7-layer MLP. The network architecture consists of an input layer followed by six hidden layers with decreasing widths of 512, 256, 128, 64, 32, and 16 units, respectively, and a final linear output layer. Each hidden layer follows a standardized block structure: {Linear Transformation}; {Batch Normalization}; {Swish Activation}; {Dropout}, with rates progressively set to 0.2, 0.15, 0.1, 0.1, 0.05, and 0.05 for the six hidden layers to prevent overfitting.  

\subsection{Sensitivity Analysis to Value Function Estimation Errors}
\label{subsec:appendix_value_estimation_exploration}

In this section, we assess the sensitivity of the proposed method to value function approximation errors under the experimental setting in Section~\ref{subsec:single_stage_exps}. Specifically, we train four fully connected neural networks with architectures
\[
(128,64,32),\quad (256,128,64,32),\quad (256,128,64,16),\quad \text{and}\quad (512,256,128,64,32).
\]
All networks are trained under a common pipeline with batch normalization, SiLU activation, and dropout. These models achieve $R^2$ values of $0.6789$, $0.9055$, $0.8961$, and $0.9394$, respectively, against the ground-truth value function on the last day.

We then evaluate the downstream MSE of the ATE estimator. As shown in Figure~\ref{fig:sensity_networks}, our method, as well as NRD, remains stable across different network architectures. Notably, even when the value function is estimated with only moderate accuracy, e.g., $R^2=0.6789$, our method still outperforms the baseline designs. This suggests that the proposed design is relatively robust to approximation errors in value function estimation.

\begin{figure}[t]
	\centering
	\includegraphics[width=1\linewidth,height=0.35\linewidth]{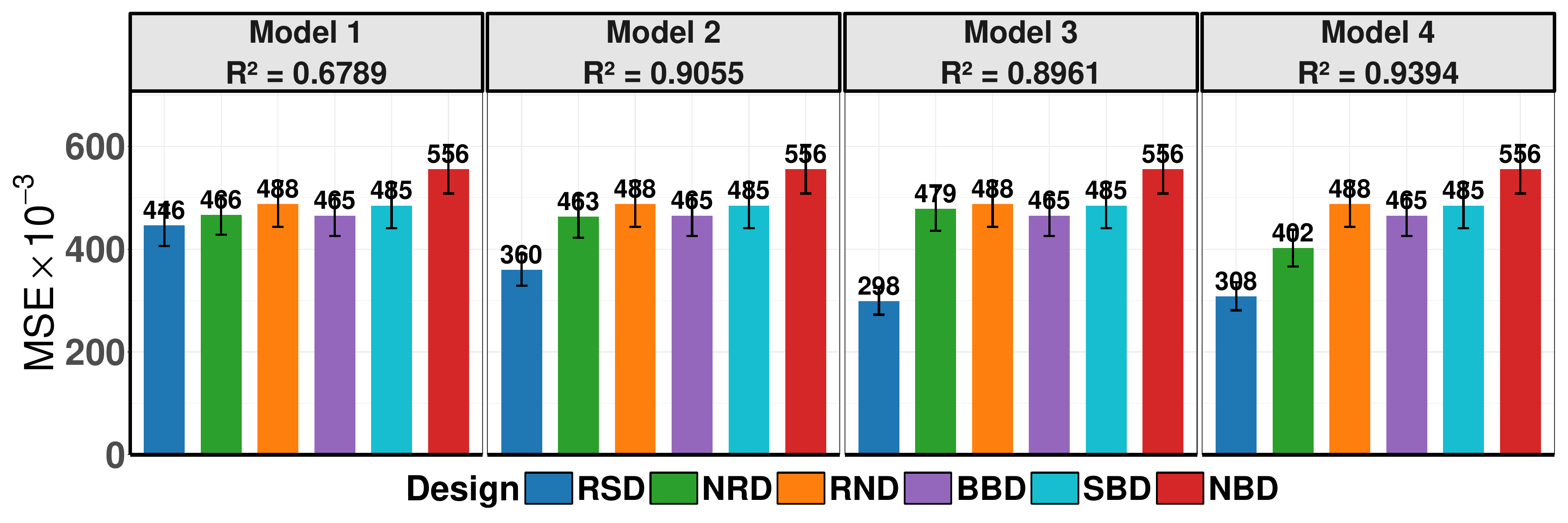}
	\caption{Empirical MSE with 95\% confidence intervals under the contextual bandit setting with additive treatment effects.}
	\label{fig:sensity_networks}
\end{figure}

\subsection{Efficiency Analysis and Tightness of the Proposed Upper Bound}
\label{subsec:appendix_upper_bound_analysis}

In this section, we first empirically examine the efficiency loss of the proposed design under correct model specification, where the bias term vanishes and the MSE reduces to the variance term. Second, we discuss a condition under which the proposed upper bound is attainable, showing that the bound is not merely a loose analytical artifact.

(1) Under correct model specification, a natural lower benchmark is given by the design that directly minimizes the true MSE. In this case, the misspecification bias is zero, and the remaining discrepancy between our robust upper bound and the true MSE reflects the worst-case bias component introduced to guard against model misspecification. To quantify the potential price paid for such robustness, we follow the robust design literature~\cite{wiens2000robust} and evaluate the relative efficiency of different designs under $f(X)=0$.

In our setting, when $f(X)=0$, the sequential design of \citet{bhat2020near}, denoted by \textbf{NRD}, is optimal in the sense that it minimizes the true MSE. We therefore use \textbf{NRD} as the oracle benchmark under correct specification and define the efficiency of a candidate design $d$ as
\[
\mathrm{Eff}(d)
:=
\frac{\mathrm{MSE}(\textbf{NRD})}{\mathrm{MSE}(d)}.
\]
By construction, $\mathrm{Eff}(d) \leq 1$, and values closer to one indicate that the design $d$ incurs only a small efficiency loss relative to \textbf{NRD} under correct model specification.

Table~\ref{tab:eff_summary} reports this efficiency measure for the proposed design \textbf{RSD}, as well as for the baseline designs \textbf{BBD}, \textbf{SBD}, and \textbf{NBD}, under the experimental setup in Section~\ref{subsec:single_stage_exps} with correct model specification. Table~\ref{tab:mse_summary} reports the corresponding MSE values. As expected, \textbf{NRD} achieves the smallest MSE under correct specification. Nevertheless, \textbf{RSD} performs comparably to \textbf{NRD}, achieving efficiency levels of approximately $97\%$--$99\%$ across all sample sizes and consistently outperforming the other baseline designs. This suggests that the robustness improvement obtained by \textbf{RSD} comes at only a small efficiency cost when the model is correctly specified.

(2) We next discuss the attainability of the proposed upper bound. The bound is derived by applying the Cauchy--Schwarz inequality to the misspecification-induced bias term. Therefore, the bound becomes tight when the misspecification direction aligns with the worst-case direction identified by the design criterion. More specifically, when the misspecification function $f$ can be represented as
\[
f({\X}) = \eta {\bPsi}({\X}){\bU}{\bkappa},
\]
where ${\bkappa}$ is proportional to ${\ba}^\top {\bPsi}({\X}){\bU}$ under a unit $\ell_2$-norm constraint, the Cauchy--Schwarz inequality is attained. In this case, the proposed upper bound is asymptotically equivalent to the true MSE; see the proof of Proposition~\ref{prop_without_rob_upperbound} and Equation~\eqref{eq:rob_without_bias_up_1}. This provides a theoretical justification that the proposed upper bound can be achieved under aligned misspecification directions, while the empirical results above show that optimizing the bound incurs little efficiency loss under correct specification.

\begin{table}[ht]
\centering
\caption{Median relative efficiency with respect to the baseline design \textbf{NRD} under the contextual bandit setting with additive treatment effects and \(f(X)=0\). Larger values indicate better efficiency relative to \textbf{NRD}.}
\label{tab:eff_summary}
\begin{tabular}{lcccc}
\hline
Design & $N=21$ & $N=28$ & $N=35$ & $N=42$ \\
\hline
RSD & 0.9910 & 0.9855 & 0.9778 & 0.9886 \\
RND & 0.9144 & 0.9321 & 0.9391 & 0.9486 \\
BBD & 0.9721 & 0.9708 & 0.9725 & 0.9737 \\
SBD & 0.9691 & 0.9657 & 0.9683 & 0.9704 \\
NBD & 0.9379 & 0.9542 & 0.9556 & 0.9606 \\
\hline
\end{tabular}
\end{table}

\begin{table}[ht]
\centering
\caption{MSE of the ATE estimator under the contextual bandit setting with additive treatment effects and \(f(X)=0\). Lower values indicate better estimation accuracy.}
\label{tab:mse_summary}
\begin{tabular}{lcccc}
\hline
Design & $N=21$ & $N=28$ & $N=35$ & $N=42$ \\
\hline
RSD & 0.2135 & 0.1491 & 0.1174 & 0.1027 \\
NRD & 0.1958 & 0.1493 & 0.1124 & 0.0934 \\
RND & 0.2284 & 0.1554 & 0.1307 & 0.1052 \\
BBD & 0.2111 & 0.1512 & 0.1198 & 0.0991 \\
SBD & 0.2018 & 0.1511 & 0.1175 & 0.1107 \\
NBD & 0.2380 & 0.1552 & 0.1193 & 0.1084 \\
\hline
\end{tabular}
\end{table}

\section{Additional Discussions on General Settings}
\label{sec:appendix_method_extensions}

In this section, we discuss two possible extensions of the proposed framework. The first concerns heteroskedastic errors, where the noise variance may vary across units. The second concerns a more general misspecification structure in which the misspecification term may depend on the assigned treatment. 

\subsection{Discussion on Heteroskedastic Errors}
\label{sec:appendix_heteroskedastic_errors}

Our proposed framework can be extended to heteroskedastic settings, although the resulting optimization problem becomes more challenging. Following the robust design perspective of \citet{wiens2000robust}, suppose that
\[
\Var(e_i \mid A_i, X_i) = \sigma^2 g_i,
\]
where the heteroskedasticity factors $g_i>0$ are unknown. To account for nonconstant error variance, we replace the ordinary least squares estimator with a weighted least squares estimator using
\[
\W = \diag(w_1,\ldots,w_N).
\]
The resulting MSE decomposition has a structure similar to that in Eq.~\eqref{eqn:condi_mse}. The main difference is that the variance term now depends on the unknown factors $g_i$, while the bias term induced by the misspecification function $f$ can be handled analogously to the homoskedastic case by working with the transformed design matrix $\W^{1/2}\bZ$.

To obtain a robust criterion against unknown heteroskedasticity, we consider the worst-case MSE over
\[
\sum_{i=1}^N g_i^2 \leq N.
\]
The variance component can be written as
\[
I_1^{W}
=
\sigma^2
\sum_{i=1}^N
g_i w_i^2
\left\{
\bu^\top(\bZ^\top \W \bZ)^{-1} Z_i
\right\}^2.
\]
By the Cauchy--Schwarz inequality, this term is bounded by
\[
I_1^{W}
\leq
\sigma^2 \sqrt{N}
\left[
\sum_{i=1}^N
w_i^4
\left\{
\bu^\top(\bZ^\top \W \bZ)^{-1} Z_i
\right\}^4
\right]^{1/2}.
\]
This bound provides a natural robust counterpart to the homoskedastic variance term.

However, unlike in the homoskedastic case, the resulting objective involves fourth-order terms in the assignment-dependent quantities. In particular, the term
\[
\bu^\top(\bZ^\top \W \bZ)^{-1} Z_i
\]
depends on the treatment assignments through both $A_i$ and aggregate weighted assignment summaries such as $\sum_{i=1}^N w_i A_i$. After substitution into the bound above, the objective therefore contains higher-order interactions among assignments, making the optimization substantially more complex than the objective studied in the main paper. Whether the dynamic programming strategy developed for the homoskedastic setting can be efficiently adapted to this heteroskedastic objective is nontrivial. We therefore leave the development of efficient algorithms for this extension to future work.

\subsection{Discussion on Treatment-Dependent Misspecification}
\label{sec:appendix_treatment_dependent_misspecification}

In the main paper, we focus on misspecification of the covariate effect through a common function $f(X)$ for simplicity. Nevertheless, the proposed framework may be extended to a more general setting where the misspecification depends on the assigned treatment.

Specifically, for each $a\in\{1,-1\}$, consider the treatment-specific conditional mean model
\[
m_a(X)
=
X^\top \theta^{(a)} + f^{(a)}(X),
\]
where
\[
m_a(X) = \mathbb{E}(Y\mid A=a,X),
\]
and $\theta^{(a)}$ is defined as the best linear approximation of $m_a(X)$ onto the span of $X$. Since the first component of $X$ is an intercept, the corresponding first-order condition implies
\[
\mathbb{E}\{f^{(a)}(X)\}=0,
\qquad a\in\{1,-1\}.
\]
It follows that the average treatment effect can be expressed as
\[
\ATE
=
\mathbb{E}\{m_1(X)-m_{-1}(X)\}
=
\bu^\top\left(\theta^{(1)}-\theta^{(-1)}\right),
\]
where $\bu=\mathbb{E}(X)$.

A natural estimator can then be constructed by separately estimating $\theta^{(1)}$ and $\theta^{(-1)}$ using the treatment and control groups, respectively, and forming the plug-in estimator
\[
\widehat{\ATE}
=
\bu^\top\left(\widehat{\theta}^{(1)}-\widehat{\theta}^{(-1)}\right).
\]
The conditional MSE of this estimator admits a decomposition analogous to that in Appendix~\ref{subsec:appendix_robust_without}, with variance terms arising from the two treatment-specific regressions and bias terms arising from $f^{(1)}$ and $f^{(-1)}$. Using a similar orthogonalization argument, one can derive a corresponding upper bound for the MSE and formulate a robust design objective that guards against treatment-specific misspecification.

The full sequential implementation of this extension, however, requires additional technical development. In particular, the design would need to track treatment-specific summary statistics, such as separate Gram matrices and imbalance summaries for the two treatment arms, rather than the single set of summaries used in the main paper. This enlarges the state space of the sequential optimization problem and may require new computational strategies. We therefore view treatment-dependent misspecification as a natural extension of our framework, but leave a complete sequential design and optimization procedure for future work.


\end{document}